\newtheorem{definition}{Definition}[section]
\newtheorem{theorem}{Theorem}[section]
\newtheorem{corollary}{Corollary}[theorem]
\newtheorem{lemma}[theorem]{Lemma}
\newtheorem{remark}{Remark}
\newenvironment{proof}{{\bf \emph{Proof.} }}{\hfill $\Box$ \\} 
\newtheorem{problem}{Problem}
\begin{document}

\title{Choice of Scoring Rules for Indirect Elicitation of Properties with Parametric Assumptions}

%\author{Lingfang Hu \and Ian Kash}
\author{ Lingfang Hu \\ \textit{lhu25@uic.edu} \and Ian A. Kash \\ \textit{iankash@uic.edu} }
\date{Department of Computer Science\\ University of Illinois at Chicago, Chicago, IL 60607, USA}

\maketitle

% Abstract. Note that this must come before \maketitle.
\begin{abstract}

People are commonly interested in predicting a statistical property of a random event such as mean and variance. 
Proper scoring rules assess the quality of predictions and require that the expected score gets uniquely maximized at the truthful report of a subjective belief ex-ante and the precise prediction ex-post, in which case we call the score directly elicits the property. 
%Scoring rules assess the quality of predictions, with assigning a value to each pair of prediction and materialized outcome. Furthermore, given a target property for prediction, proper scoring rules require that the expected score gets maximized at the precise prediction, in which case we call the score directly elicits the property. 
Previous research work has widely studied the existence and the characterization of proper scoring rules for different properties. However, little literature discusses the choice of proper scoring rules for applications at hand. 
In this paper, we explore a novel task, the indirect elicitation of properties with parametric assumptions, where the target property is a function of several directly-elicitable sub-properties and the total score is a weighted sum of proper scoring rules for each sub-property and thus it indirectly infers the target property. Because of the restriction to a parametric model class, different settings for the weights lead to different constrained optimal solutions. Our goal is to figure out how the choice of weights affects the estimation of the target property and which choice is the best. 
We start it with simulation studies and observe an interesting pattern: in most cases, the optimal estimation of the target property changes monotonically with the increase of each weight, and the best configuration of weights is often to set some weights as zero. To understand how it happens, first of all, we establish the elementary theoretical framework. Basically, we decompose the observation into two parts, the movement of the target property with sub-properties and the movement of the sub-property estimation with weights. The overall monotonicity pattern would happen if both movements are monotone. Next, we provide deeper sufficient conditions for the case of two sub-properties and of more sub-properties respectively. The theory on 2-D cases perfectly interprets the experimental results. In higher-dimensional situations, we especially study the linear cases and suggest that more complex settings can be understood with locally mapping into linear situations or using linear approximations when the true values of sub-properties are close enough to the parametric space. 

\end{abstract}

% Paper body

%%%%%%%%%%%%%%%%%%%%%%%%%%%%%%%%%%%%%
%%%%%%%%%%%%%%%%%%%%%%%%%%%%%%%%%%%%%
%%%%%%%%%%%%%%%%%%%%%%%%%%%%%%%%%%%%%
\section{Introduction} \label{section-introduction}
%Motivation: forecast evaluations \& proper scoring rules \& property elicitation

%%% Background: predictions, evaluation, proper scoring rules, property elicitation

%Uncertainty is an inevitable component of our daily life, and people are commonly interested in making probabilistic forecasts about a random event to understand the uncertainty and more importantly to guide their decision-making. For example, in weather forecasting \citep{brier1950verification}, we want to know the probability of raining tomorrow to decide whether to take an umbrella or not. In the US presidential election forecasts \citep{gelman2020information}, which candidate wins the election will have a big influence on the upcoming political and economic trends and predictions can help people adjust their expectations and plans. In machine learning field \citep{alpaydin2020introduction}, all different kinds of predictive models have been developed to perform complicated tasks more efficiently such as pattern recognition and natural language processing. 

It is common in our daily life to make probabilistic forecasts for random events, which helps people understand the uncertainty and more importantly guide their decision-making such as weather forecasting \citep{brier1950verification}, the US presidential election forecasts \citep{gelman2020information}, and all different kinds of predictive models developed in machine learning \citep{alpaydin2020introduction}. People are normally interested in predicting the full probability for simple distributions or a more abstract statistical property for complicated distributions, that is, a function of probability distributions such as mean and variance. It is important to use some measurements to assess the quality of predictions in order to incentivize forecasters' efforts to make good predictions. Scoring rules are one of the popular measurements, which assign a numerical value $s(r,X)$ to each pair of prediction $r$ and realized outcome $X$. Given a property $\Gamma(\cdot)$ for prediction, one natural requirement for scoring rules is that the expected score $E_{X\sim \bm p}[s(r,X)]$ should get maximized at the precise prediction $\Gamma(\bm p)$ for all distribution $\bm p$, and we call them {\em proper} scoring rules for the property. Furthermore, if the maximizer is unique, then we call the scoring rule {\em strictly proper} for the property and the score directly elicits the property.

%%% previous work and motivation
Early work about proper scoring rules dates back to 1950s, and they mainly restricts to the elicitation of full probabilities over finite outcome spaces \citep{brier1950verification,good1952rational,mccarthy1956measures}. It drove researchers from the theoretical side to think about more generally what properties are directly-elicitable and what the corresponding proper scoring rules look like. For example, it is well-known that general full probabilities and mean are directly-elicitable and the strictly proper scoring rules for them can be derived from strictly convex functions \citep{savage1971elicitation,gneiting2007strictly}, while variance is not directly-elicitable \citep{osband1985providing}. See \cite{gneiting2007strictly,gneiting2011making} for early surveys, and also \cite{frongillo2024recent} for recent research trends. From the practical side, proper scoring rules have also been widely applied in different domains such as meteorology, prediction markets, psychology, energy, and a survey shows that there has been been more and more publications on this track \citep{carvalho2016overview}. 

Given a directly-elicitable property, the characterization theory provides us with rich options of scoring rules to use in practice. However in practice, surprisingly most of the time people just used one of a few typical proper scoring rules for specific applications, including Brier score, Logarithmic score, and spherical score \citep{gneiting2007strictly}. Compared to the large body of work along the characterization of proper scoring rules, the literature on the choice of proper scoring rules is very sparse. 

The first question to ask here is: does the choice matter? The choice does not matter in terms of properness, which only guarantees eliciting the truthful report of personal beliefs ex-ante and assigning the best score to the precise prediction ex-post. But in fact, the choice does matter from many other perspectives beyond properness. \citet{winkler1996scoring} gave an early comprehensive discussion about the possible directions, see their Section 6 for detail. A few directions have been further developed later, but the development is sparse and slow. 
% desirable characteristics
For example, from the perspective of computational efficiency, {\em local} scoring rules such as Logarithmic score are preferred in that the value for each realized outcome only depends on the predicted probability on that outcome \citep{bernardo1979expected,ehm2009local,parry2012properlocal,du2021beyond}. Being {\em accuracy-rewarding} (or {\em distance-sensitive} or {\em order-sensitive}) is another desirable criterion, which makes the score increase with prediction accuracies \citep{stael1970family,friedman1983effective,lambert2008eliciting,steinwart2014elicitation}. 
% tailored to decision-making problems
From the decision-making perspective, \cite{johnstone2011tailored} constructed proper scoring rules directly from the utility function of decision makers, which embodies the customized needs for each decision maker. 
% optimal scoring rules
From the incentive-compatible perspective, some recent work studied the optimal scoring rules which aim to incentivize the forecaster's best effort to refine his or her posterior belief after observing some correlated signals \citep{li2022optimization,chen2021optimal,hartline2023optimal}. Overall, different perspectives have different needs and lead to different answers. 
%Thus, the choice of proper scoring rules heavily depends on the application task at hand.

\begin{figure}[!htbp]
  \centering
  % Requires \usepackage{graphicx}
  \includegraphics[width=0.35\textwidth]{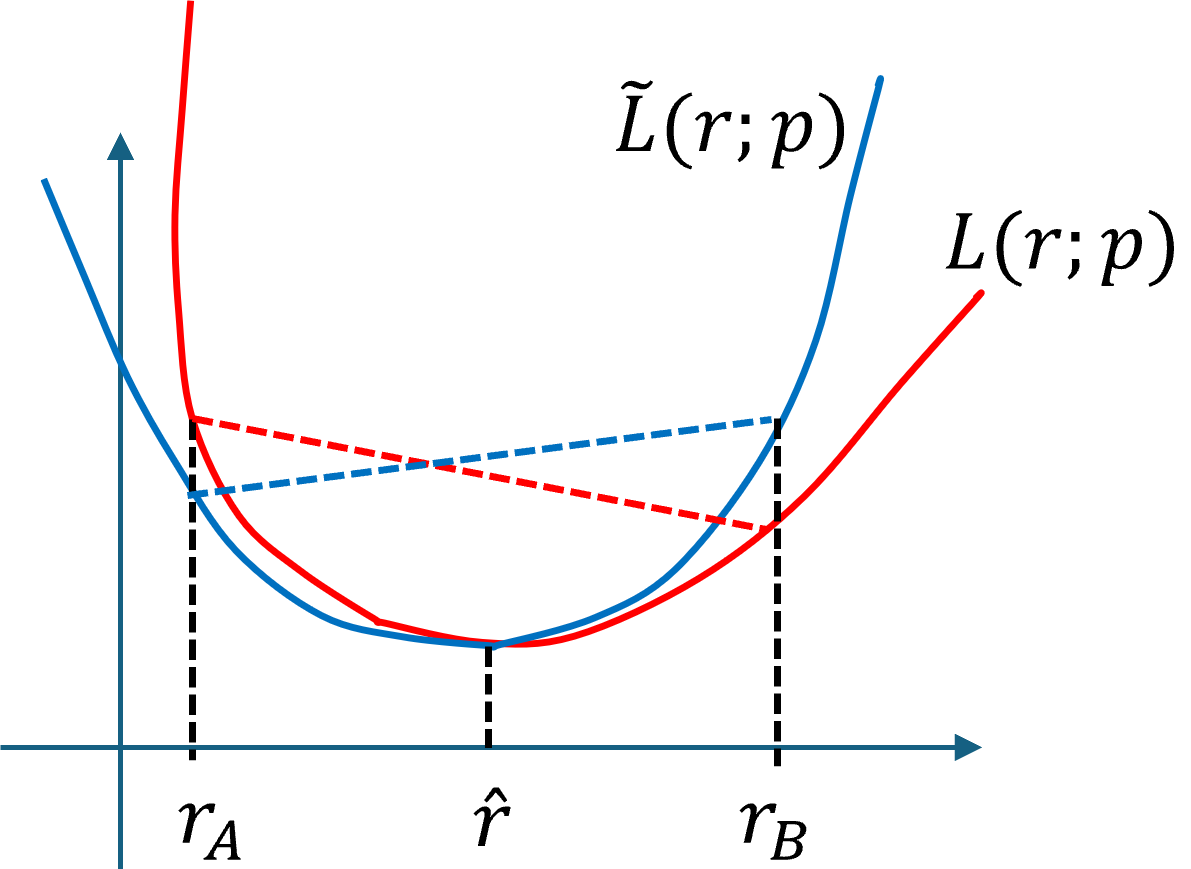}  
  \caption{ An example where two different losses assign different rankings to two imprecise predictions. $\hat{r}$ is the true value of some property of $\bm p$, and $r_A$ and $r_B$ are two different imprecise predictions. The loss $L$ prefers $r_B$ to $r_A$ while the loss $\tilde{L}$ prefers $r_A$ to $r_B$. }  
  \label{fig_loss_comparison_on_imperfect_predictions}
  
\end{figure}

% proposal: parametric model estimation for indirect-elicitation of properties
In this paper, we study the choice of proper scoring rules with a focus on the evaluation of imperfect predictions. A key insight is that different proper scoring rules potentially have different attitudes towards the imperfect forecasts, although all of them takes the perfect forecast as the best one, see Figure \ref{fig_loss_comparison_on_imperfect_predictions} for an illustration. There is some work studying the ranking orders among (imperfect) forecasts incurred by different proper scoring rules. \citet{bickel2007some} and \citet{machete2013contrasting} did a theoretical examination for different scoring rules and revealed that the choice does matter in terms of the rank orders, but their discussions are limited to the three commonly used scores. \citet{merkle2013choosing} tested a broader family of proper scoring rules and demonstrated that part of scoring rules would lead to very similar rankings among forecasters while part of them lead to obviously different rankings. However, their work is a purely empirical study and lacks the theoretical understanding. In contrast, we consider the choice among general proper scoring rules, and work on a different task -- parametric model estimation, which is commonly considered in statistics and machine learning. Generally, the parametric model will not perfectly fit the data, and different scoring rules would potentially give different comparisons among models and then lead to different optimal parametric solutions. We pursue to provide a theoretical understanding of how different scoring rules affect the optimal parametric estimation. 

Another key insight from the literature on property elicitation is that more properties are not directly-elicitable but they can always be indirectly elicited in the sense that any statistical property can be calculated from the full probability distribution and full probabilities are directly-elicitable as mentioned before. A less trivial indirect-elicitation style can happen when a target property is a function of several simpler sub-properties and the sub-properties are directly-elicitable. For example, variance can be estimated by eliciting the first two moments. In this case, we can use a positive weighted sum of proper sub-scoring rules for each sub-property as the total scoring rule, and then the prediction of the target property can be calculated by combining the predictions of all of those sub-properties. 

In view of the two insights discussed above, it is worth studying the choice of proper scoring rules in the task of indirect elicitation of a property with a parametric model assumption. To simplify the analysis, we suppose that all sub-scores are fixed and narrow down the scope to only the choice of weights. Intuitively, setting a larger value for one of the weights means putting more emphasis on the precision of estimating the corresponding sub-property. Different settings for the weights would lead to different constrained optimal solutions. Then what principle should we use for the best choice of scoring rules? From the viewpoint of indirect-elicitation, a natural idea is that the total score should encourage forecasters to predict the target property as precisely as possible. So in this paper we pursue a total scoring rule which can achieve the best estimation of the target property itself. 

The framework of trade-off between different function terms is very common in the loss design in machine learning, and the best weights are usually chosen based on empirical results in practice. However, normally researchers from the applied side do not seek to understand the reason of the best setting. In contrast, our focus is to shed some light on the theoretical understanding of how the weights reflect the preference of the target property for its different sub-properties. Naively, we might expect that it is a reasonable option to set all of weights to be equal to one another. However, our results show that it is not the best choice in many cases. 

We summarize our main results in five parts as follows. 

\begin{itemize}   
    \item[(1)] We first give a separable weighted-sum loss framework (the negation of a score) for property indirect-elicitation, and then formalize the main problem of how the choice of weights would affect the optimal parametric estimation of the target property.
    \item[(2)] We then do simulation studies, from which we observe the unexpected monotonicity pattern. In all tested cases the estimation of the target property change monotonically with the increase of each weight, and the best configuration for the weights is often to set one of weights as zero.
    \item[(3)] Next, we establish the elementary theoretical framework for the monotonicity pattern. Basically, the observation are decomposed into two parts, the movement of the target property with sub-properties and the movement of the sub-property estimation with weights, and the overall monotonicity pattern would happen if both movements are monotone. 
    \item[(4)] Then we provide more concrete assumptions for the desirable structure of the parametric model and the target property such that the elementary conditions specified in the general framework can be satisfied. The theory for the case of two sub-properties perfectly explain our observation from the simulation studies. 
    \item[(5)] Furthermore, we extend the analysis from 2-D into higher-dimensional situations, and especially study the linear cases. We suggest that more complex settings can be understood by locally mapping into linear situations or using linear approximations.
    %\item[(6)] Finally, we point out the trick of weight renormalization from the practical perspective, which can help relieve the inherent dominance of unwell-behaved sub-scores and also guide us to a more effective range of weights. 
\end{itemize}

%%%%%%%%%%%%%%%%%%%%%%%%%%%%%%%%%%%%%
%%%%%%%%%%%%%%%%%%%%%%%%%%%%%%%%%%%%%
%%%%%%%%%%%%%%%%%%%%%%%%%%%%%%%%%%%%%
\section{Preliminaries}

\subsection{Proper scoring rules or losses}

%%%%%%%%%%%%% Background info

Let $O$ the outcome space of a random event, $X$ the random variable over $O$, and $\mathcal{D}$ a convex subset of probability distributions over $O$. %\footnote{People consider a subset $\mathcal{D}$ instead of the universal set $\mathcal{P}$ because of multiple reasons. Sometimes, the existence of a property for some family of distributions depends on the parameter values. For example, for log-logistic distributions with parameter $\alpha$ and $\beta$, the $k$-th moment exists iff $k<\beta$. Sometimes, we might require some restrictions on the probability distribution space due to application purposes.}
Let $\bar{\mathbb{R}} = \mathbb{R}\cup \{+\infty, -\infty\}$, and $\mathcal{R}$ be any non-empty set of reports where the report is generally real-valued or vector-valued.

\begin{definition}[Statistical Property]
A statistical property is a function of probability distributions $\Gamma(\cdot): \mathcal{D} \rightarrow \mathcal{R}$. 
\end{definition}
 
\begin{definition}[Scoring Rule]
A scoring rule is a function $s(\cdot, *): \mathcal{R} \times O \rightarrow \bar{\mathbb{R}}$, representing a score assigned to a prediction report $\bm r \in \mathcal{R}$ with some materialized outcome $o \in O$. The expected score for reporting $\bm r$ given a true distribution $\bm p$ is 
\[
S(\bm r; \bm p) \triangleq E_{X\sim \bm p} [s(\bm r, X)].
\]

\end{definition}

\begin{definition}[Proper Scoring Rules w.r.t. Properties]
A scoring rule $s(\cdot, *): \mathcal{R} \times O \rightarrow \bar{\mathbb{R}}$ is proper for a property $\Gamma(\cdot): \mathcal{D} \rightarrow \mathcal{R}$ if for all $\bm p \in \mathcal{D}$,
\[
\Gamma(\bm p) \in \arg\max\limits_{ \bm r \in \mathcal{R} } S(\bm r; \bm p).
\]
If $\Gamma (\bm p)$ is the unique maximizer for all $\bm p \in \mathcal{D}$, we say that the scoring rule $s$ is strictly proper with the property $\Gamma$ and $s$ directly elicits $\Gamma$. 
\end{definition}
Here we require $S(\Gamma(\bm p); \bm p) \in \mathbb{R}$ for all $\bm p \in \mathcal{D}$ as described in \citep{gneiting2007strictly}.\footnote{We generally assume that scoring rules satisfy some regularity conditions, such as being integrable or quasi-integrable. The ultimate goal is to guarantee that scoring rules are well-defined and the expected score is finite at desirable places. }
We also refer to a proper loss function as the negation of a proper scoring rule, that is , $L(\bm q; \bm p) = -S(\bm q; \bm p)$.
%\[
%l(\bm q, X) \triangleq -s(\bm q, X), \quad L(\bm q; \bm p) \triangleq E_{X\sim \bm p} [l(\bm q, X)] = -S(\bm q; \bm p).
%\]
%Also, notice that if $s$ is proper and $h: O \rightarrow \bar{\mathbb{R}}$ is integrable, then
%\[\tilde{s}(\bm q, x) = c\cdot s(\bm q, x) + h(x), \, \forall c>0,\]
%is also proper. We say that $s$ and $\tilde{s}$ are equivalent, and we often omit the equivalent form. 

\begin{definition}[Accuracy-rewarding scoring rules \citealp{lambert2008eliciting}] \label{def-accuracyRewarding}
A scoring rule $s(\cdot, *): \mathcal{R} \times O \rightarrow \bar{\mathbb{R}}$ with the report space $\mathcal{R} \subseteq \mathbb{R}^m$ ($m\in \mathbb{N}_{+}$) is accuracy-rewarding with respect to a property $\Gamma(\cdot): \mathcal{D} \rightarrow \mathcal{R}$ if for all $\bm p \in \mathcal{D}$ $S(\bm r; \bm p) < S(\bm r'; \bm p)$ when either $r_i<r'_i \leq \Gamma_i(\bm p)$ or $\Gamma_i(\bm p)\leq r'_i < r_i$ for all $i\in \{1, \cdots, m\}$.
\end{definition}

Obviously, accuracy-rewarding scoring rules are strictly proper. From the other side, it still remains an open problem whether or not strictly proper scoring rules are always accuracy-rewarding, although \cite{lambert2008eliciting} and \cite{steinwart2014elicitation} showed that under some conditions the answer is yes. Being accuracy-rewarding is an important requirement for our theoretical analysis throughout the paper and thus we will use it as an explicit assumption on scoring rules.

\subsection{Property elicitation: direct or indirect}

%%% Among commonly used properties, what are elicitable, what are not.
A property $\Gamma(\cdot): \mathcal{D} \rightarrow \mathcal{R}$ is said to be directly-elicitable if there exists a strictly proper scoring rule eliciting it. It is well-known that $\Gamma(\bm p)= \bm p$ is always directly-elicitable \citep{gneiting2007strictly}. 
%Classical proper scoring rules w.r.t. full probabilities include Brier (quadratic) scoring rule \citep{brier1950verification} and Logarithmic scoring rule \citep{good1952rational}. 
%The quadratic score,
%\[
%QS(\bm q, x) = 2\bm q(x)-\|\bm q\|_2^2 -1.
%\]
%The logarithmic score,
%\[
%LogS(\bm q, x) = \log (\bm q(x)).
%\]
Besides, quantiles, ratios of expectations, and (vector-valued) linear properties, i.e. $\Gamma(\bm p) = E_{X\sim \bm p} [\bm \rho(X)]$ such as mean and general moments, are directly-elicitable. For example, according to Lemma 7 in \cite{abernethy2012characterization}, we know that the following equation is one type of strictly proper losses for linear properties,
%\begin{equation} \label{eq_linearProperty_properLoss_eg}
%l(\bm r, X) = - (\bm \rho (X) - \bm r)^T C (\bm \rho (X) - \bm r),
%\end{equation}
\[
l(\bm r, X) = (\bm \rho (X) - \bm r)^T C (\bm \rho (X) - \bm r),
\]
where $C$ is a symmetric and positive-definite matrix. 
%Here, if the matrix $C$ is diagonal, then the loss would be equivalent to a weighted sum of sub-losses for each component of the property vector, in which case we call the loss {\em fully separable}.

More properties are not directly-elicitable, such as variance, skewness, and kurtosis. However, they can always be indirectly elicited in the sense that we can always first elicit the full probability distribution $\Gamma_0(\bm p) = \bm p$ and then calculate the property from the full probability with $\Gamma(\bm p) = \Gamma(\Gamma_0(\bm p))$. Very often, the indirect elicitation would require less information. For example, variance can be indirectly elicited from the first two moments. There is growing literature on elicitation complexity \citep{lambert2008eliciting,fissler2016higher,frongillo2021elicitation}.

%%%%%%%%%%%%%%%%%%%%%%%%%%%%%%%%%%%%%
%%%%%%%%%%%%%%%%%%%%%%%%%%%%%%%%%%%%%
%%%%%%%%%%%%%%%%%%%%%%%%%%%%%%%%%%%%%
\section{Main problem}

In this paper, we consider the parametric model estimation for the indirect elicitation of a target property. Our goal is to figure out how the choice of scoring rules would affect the optimal parametric estimation of the target property. Now we formally present the problem. 

Consider a target property $\Gamma(\bm p) = t(\hat{\bm r}(\bm p))$, where $\hat{\bm r}(\cdot) = (\hat{r}_1 (\cdot), \cdots, \hat{r}_M (\cdot))$ is a vector of $M$ real-valued sub-property functions and $t$ is a ``link'' function that calculates the target property from them. For example, variance can be determined by the first two-order moments $(E_{\bm p}[X], E_{\bm p}[X^2])$ via the link function $t(\bm r) = r_2 - r_1^2$. Suppose that each sub-property is directly-elicitable, and let $L_i(r_i; \bm p)$ be a strictly proper sub-loss function w.r.t. the sub-property $\hat{r}_i$ for each $i$. Then the positive-weighted sum of sub-losses, denoted by 
\begin{equation} \label{Eq-total-loss}
\mathcal{L}_{\bm c} (\bm r; \bm p) = \sum_{i=1}^M c_i L_i (r_i; \bm p),
\end{equation} 
elicits the vector of sub-properties $\hat{\bm r} = (\hat{r}_1, \cdots, \hat{r}_M)$ and then indirectly infers the target property via the link function $t$, where $\bm c \in \mathbb{R}_{++}^M $,\footnote{$\mathbb{R}_{++}$ denotes the strictly positive reals}. We call such a total loss function {\em fully separable}.\footnote{ We pursue fully separable losses here in order to simplify the problem. If a sub-property vector is elicitable as a whole while there exist some not directly-elicitable components, generally the fully separable form of total losses do not exist. Thus we assume the direct-elicitability of each sub-property. }

There are an infinite number of options for the total loss $\mathcal{L}_{\bm c} (\bm r; \bm p)$ w.r.t. the sub-losses $\{L_i\}$ and the weights $\bm c$. The choice of the total loss does not matter in terms of properness, that is, strictly proper losses will always minimize the loss value at the precise prediction $\hat{\bm r}$. But normally different scoring rules would have different attitudes towards the imprecise predictions. Particularly, when restricting the predictions to a parametric space which generally does not cover the true value, different scoring rules would take different parametric (and imprecise) predictions as the best. 

Specifically, we apply a parametric model assumption to the probability distribution, i.e. $\bm q_{\bm \theta} \in \mathcal{D}_{\Theta} \subseteq \mathcal{D}$ with $\bm \theta \in \Theta$. For example, Gaussian distribution space can be represented as $\mathcal{D}_{\Theta} = \{ N(\mu, \sigma^2)| \bm \theta = (\mu, \sigma^2) \}$. This yields the corresponding parametric formula of sub-properties $r_i(\bm \theta) \triangleq \hat{r}_i(\bm q_{\bm \theta})$ for each $i$ and the target property $\gamma(\bm \theta) \triangleq \Gamma(\bm q_{\bm \theta}) = t(\bm r (\bm \theta))$. Given $\mathcal{L}_{\bm c}$, $\mathcal{D}_{\Theta}$, and $\bm p$, the best value of the parameter $\bm \theta$ (if existing) is
\begin{equation}\label{eq_parametric_model_optimization}
\bm \theta_{ \mathcal{L}_{\bm c} }^* (\bm p) \in \arg\min\limits_{\bm \theta \in \Theta} \mathcal{L}_{\bm c} (\bm r(\bm \theta); \bm p)
\end{equation}
If $\bm r(\bm \theta_{ \mathcal{L}_{\bm c} }^*) \neq \hat{\bm r} $, then possibly the choice of $\mathcal{L}_{\bm c}$ would affect the value of $\bm \theta_{ \mathcal{L}_{\bm c} }^*$ and consequently $\gamma(\bm \theta_{ \mathcal{L}_{\bm c} }^*)$. Since our task is to estimate the target property, we naturally use the quality of the parametric estimation for the target property as a guidance of the choice. The more precise the estimated value of the target property is, the better the loss function is. Furthermore, in this paper we will only discuss the choice of weights $\bm c$ with fixing all sub-losses $\{L_i\}$ and use $\bm \theta_{ \bm c }^* (\bm p)$ in replace of $\bm \theta_{\mathcal{L}_{\bm c} }^* (\bm p)$. We leave the exploration of alternative sub-losses to future work. In fact, we will see in our theoretical results that the choice of sub-losses does not affect our observation and conclusions. 

We summarize the preceding discussion of the core problem studied in this paper as follows. 

%The weight value captures how much emphasis to put on the sub-property. A higher weight on some sub-loss will lead to a higher requirement for the precision of predicting that sub-property and a lower requirement for other sub-properties correspondingly. 

\begin{problem} \label{main_problem} 
    Consider a target property $\Gamma(\bm p) = t(\hat{\bm r}(\bm p))$ where $\hat{\bm r}(\bm p) \in \mathbb{R}^M$ is a vector of $M$ directly-elicitable sub-properties of $\bm p$ and $t$ is a link function. The total loss $\mathcal{L}_{\bm c} (\bm r; \bm p) = \sum_{i=1}^M c_i L_i (r_i; \bm p)$ elicits $\hat{\bm r}$ with $L_i(r_i; \bm p)$ eliciting $\hat{r}_i $ for each $i$ and $\bm c \in \bar{\mathbb{R}}_{+}^M$.\footnote{$ \bm c \in \bar{\mathbb{R}}_{+}^M$ means $0 \leq c_i \leq +\infty$ for all $i$. Here, we allow $c_i = 0$ or $+\infty$ since we want to check the limit situation $c_i \rightarrow 0$ or $+\infty$, although $0 < c_i < +\infty$ is required from the normal application perspective.} 
    Let $\mathcal{D}_{\Theta}$ a parametric probability space and $\bm q_{\bm \theta} \in \mathcal{D}_{\Theta}$. Correspondingly, $\bm r(\bm \theta) = \hat{\bm r}(\bm q_{\bm \theta}) \in \mathcal{R}_{\Theta}$, and $\gamma(\bm \theta) = t(\bm r (\bm \theta))$. Let $\bm \theta_{ \bm c }^* (\bm p) \in \arg\min\limits_{\bm \theta \in \Theta} \mathcal{L}_{\bm c} (\bm r(\bm \theta); \bm p)$. We aim to figure out for a given $\bm p$ how $\gamma (\bm \theta_{\bm c}^* (\bm p) )$ changes w.r.t. $c_i$ for each $i$ while fixing $c_{-i}$\footnote{ $c_{-i}$ refers to all $c_j$ with $j\neq i$. }, and what the best setting of the weight $c_i^*$ is to minimize $|\Gamma(\bm p) - \gamma (\bm \theta_{\bm c}^*(\bm p) )|$, and why.
\end{problem}

%In this formulation we have made several assumptions for the problem. First, we assume that each sub-property is directly-elicitable and the structure of the total loss is fully separable, in order to simplify the problem and also to preserve the {\em accuracy-rewarding} property in that the separable sum of accuracy-rewarding sub-losses is still accuracy-rewarding. From the other side, given some mild restrictions, \citealt{lambert2008eliciting} showed that an accuracy-rewarding total loss function must be separable, at least in the discrete case (see their Corollary 2 and Theorem 5). While nonseparable total losses may have their own advantages, we leave exploring them to future work.

\begin{remark} \label{remark_parameterDimension}
    If $\hat{\bm r} \in \mathcal{R}_{\Theta}$, which is generally not true in practice, then the choice of $\bm c$ does not matter because every proper loss will be minimized at the ``true'' value whatever $\bm c$ is. To make the problem interesting, throughout the paper we assume that $dim(\Theta) = dim(\mathcal{R}_{\Theta}) = M-1$\footnote{$dim(\cdot)$ represents the intrinsic dimension or the number of degrees of freedom for parameters. For example, a curve on $\mathbb{R}^2$ has a dimension of $1$. }, in order to ensure that in general $\hat{\bm r} \notin \mathcal{R}_{\Theta}$. With $M=2$, $\theta$ should be a scalar, and then the image of $\mathcal{R}_{\Theta}$ is a curve in $\mathbb{R}^2$. For example, when eliciting the first two moments to infer variance, the assumption $\bm q_\theta \sim Poisson(\theta)$ leads to $\bm r (\theta) = (\theta, \theta + \theta^2)$, and then $\hat{\bm r} (\bm p) \in \mathbb{R}^2$ generated from data will be possibly located outside of the curve of $\bm r (\theta)$. In contrast, the assumption $\bm q_{\bm \theta} \sim N(\mu, \sigma^2)$ with $\bm \theta = (\mu, \sigma^2)$ leads to $\bm r(\bm \theta) = (\mu, \mu^2+\sigma^2)$ whose image is the area $\mathcal{R}_{\Theta} = \{ \bm r \in \mathbb{R}^2 | r_2 \geq r_1^2\}$, and $\hat{\bm r} (\bm p)$ generated from any valid distribution will fall into this area. Thus, in the latter situation, $\bm \theta_{\bm c}^* (\bm p) $ does not change with $\bm c$ for all given $\bm p$. Similarly, with $M=3$, $\bm \theta$ should be a two-dimensional vector and then $\mathcal{R}_{\Theta}$ is a surface in $\mathbb{R}^3$.
\end{remark}

\section{Simulation studies \& observations} \label{section-initialSimulations}

In the previous section, we present formally the mathematical model of our studied problem as Problem \ref{main_problem}, in which we use a fully separable weighted sum of proper losses to indirectly elicit a target property and we aim to figure out how the change of weights in the total loss affects the optimal parametric estimation of the target property. Now we start the discussion with designing simulation studies in order to obtain some guidance on the theoretical analysis.

As stated in Problem \ref{main_problem}, we will check for a given $\bm p$ how the optimal parametric estimation of the target property $\gamma (\bm \theta_{\bm c}^* (\bm p) )$ changes w.r.t. $c_i$ for each $i$ while fixing $c_{-i}$, and what $c_i^*$ is to make $\gamma (\bm \theta_{\bm c}^* (\bm p) )$ closest to the true value $\Gamma(\bm p)$. We will test different settings of $\Gamma$, $\mathcal{D}_{\Theta}$, and $\bm p$. 

We first try the variance property with $\Gamma(\cdot) = var(\cdot)$, whose sub-properties are moments $\hat{r}_i = E_{X \sim \bm p} [X^i]$. For the sub-losses $L_i(r_i; \bm p)$ for all $i$, we choose the same loss type, the squared loss, that is, $L_i(r_i; \bm p) = E_{X \sim \bm p} [(r_i - X^i)^2]$. We consider different settings for $\mathcal{D}_{\Theta}$ such as Poisson, Exponential, and Binomial distributions, see Table \ref{table-experiment-design-for-variance} in Appendix \ref{appendix-section-experimentsForVariance} for full detail. All losses would be calculated based on the empirical distribution $\hat{\bm p}$. To generate $\hat{\bm p}$, we first set a template distribution $\bm p_0$, and then use some random seed to randomly generate $1000$ samples from $\bm p_0$. For variance property, we choose $\bm p_0$ from Gaussian family since all the tested models can be approximated by a special case of Gaussian distributions and then the estimation task looks reasonable. Besides, we set $ c_{-i} = 1 $ by default. In fact, the setting of $ c_{-i}$ does not matter for our empirical observations and theoretical results. Finally, we show the plot of $\gamma (\bm \theta_{\bm c}^* (\hat{\bm p}) )$ w.r.t. $c_i$ for each $i$, which we also refer to as the $\Gamma_{c_i}$ curve.

We show major results for $\Gamma(\cdot) = var(\cdot)$ in Figure \ref{fig-property-c-curves-var}. Clearly, $\Gamma_{c_i}$ curves are monotone in all tested cases and the $c_i^*$ would be $0$ or $+\infty$. For more results with different empirical distribution $\hat{\bm p} $, please see Appendix \ref{appendix-section-experimentsForVariance}. 

Plus, we also do the similar experiments for the skewness property with $\Gamma(\cdot) = skew(\cdot)$ and some other model assumptions such as log-normal distributions, and then obtain the same pattern among most cases, even for $\bm p_0 \notin \mathcal{D}_{\Theta}$. We will see more detail in Section \ref{section-3D}. In all, the $\Gamma_{c_i}$ curves for different kinds of $\Gamma$, $\mathcal{D}_{\Theta}$, and $\bm p$, are monotone most of the time in our simulation studies. 

The monotonicity pattern of the $\Gamma_{c_i}$ curves looks surprising, but such a surprising phenomenon is so common in our simulation studies. We are naturally interested in why this monotonicity pattern happens so commonly and how it happens. In the following sections, we will uncover the theory behind this pattern.

\begin{figure}[!htbp]
  \centering
  % Requires \usepackage{graphicx}
  \subfigure[$\bm q_{\theta} = Poisson(\theta)$]{\includegraphics[width=0.3\textwidth]{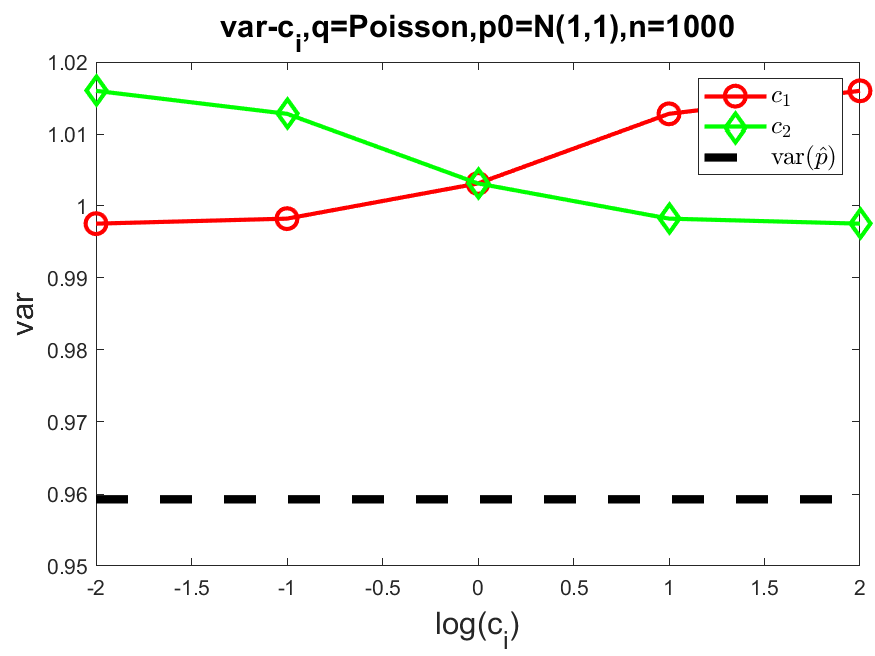}}
  \subfigure[$\bm q_{\theta} = \chi^2(\theta)$]{\includegraphics[width=0.3\textwidth]{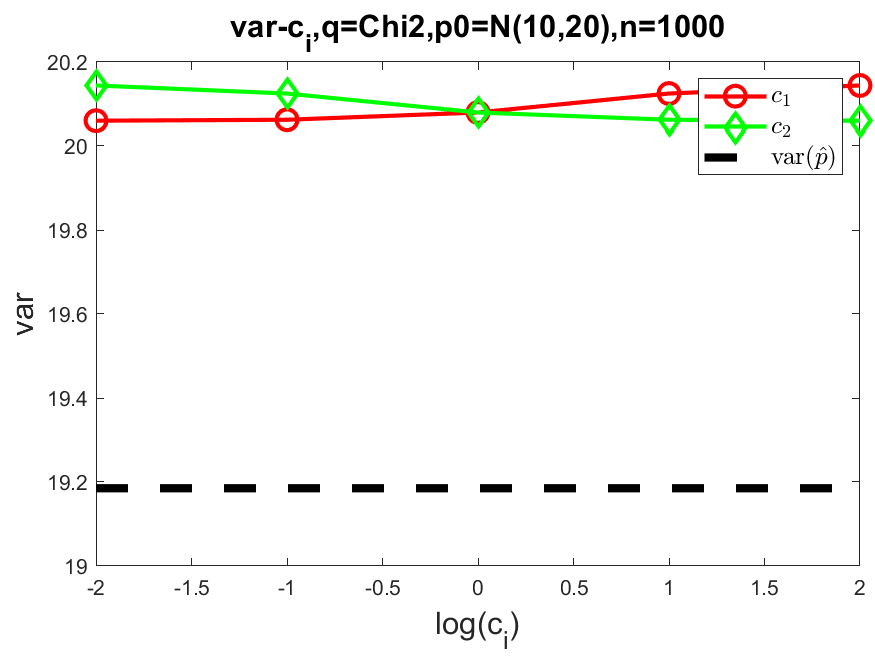}}
  \subfigure[$\bm q_{\theta} = Exp(1/\theta)$]{\includegraphics[width=0.3\textwidth]{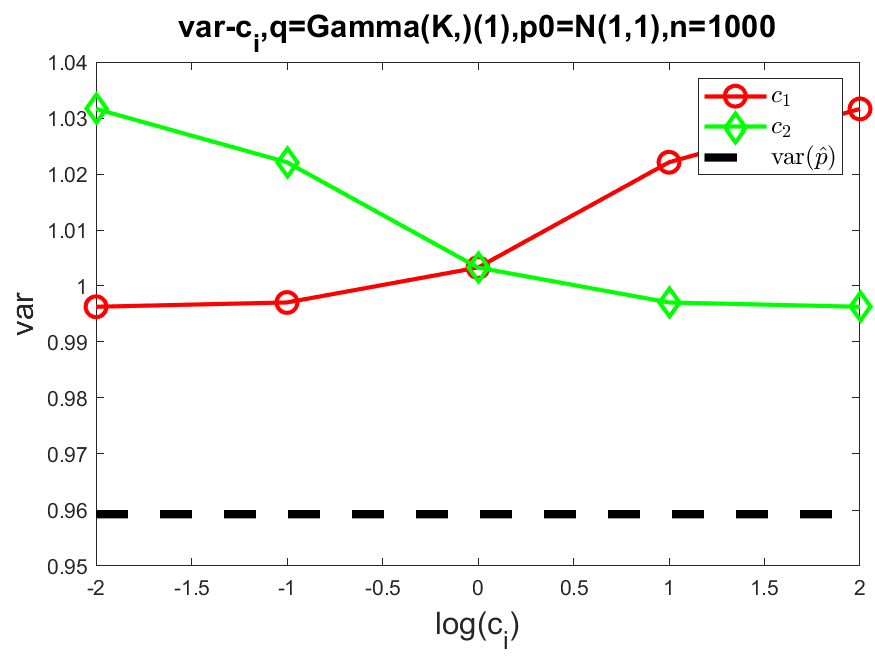}}
  \subfigure[$\bm q_{\theta} = Gamma(2, \theta)$]{\includegraphics[width=0.3\textwidth]{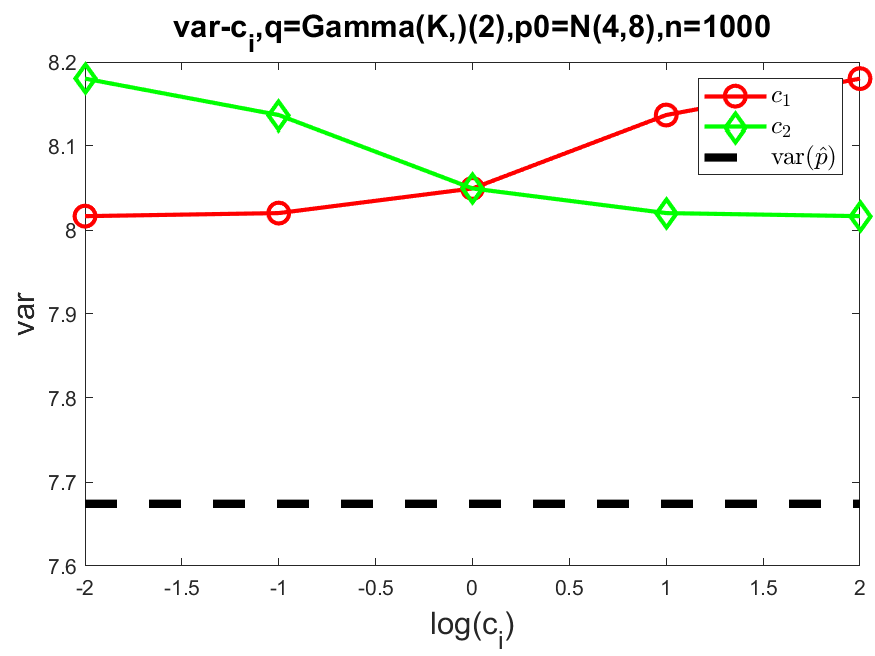}}
  \subfigure[$\bm q_{\theta} = Binomial(10, \theta)$]{\includegraphics[width=0.3\textwidth]{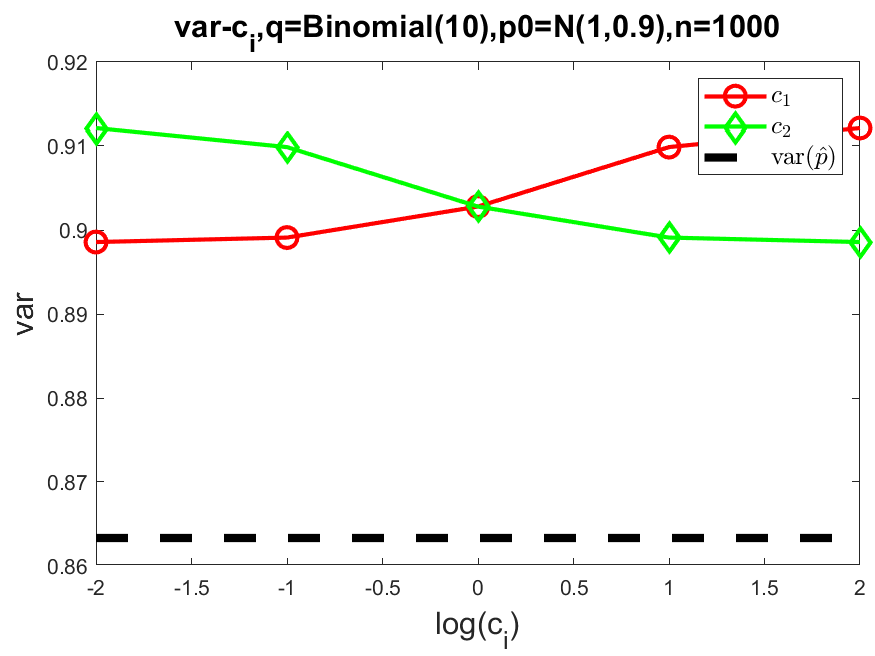}}
  
  \caption{ The $\Gamma_{c_i}$ curves for $\Gamma = var(\cdot)$ with different $\bm q_{\theta}$ and $\hat{\bm p}$. } 
  \label{fig-property-c-curves-var}

\end{figure}

%%%%%%%%%%%%%%%%%%%%%%%%%%%%%%%%%%%%%
%%%%%%%%%%%%%%%%%%%%%%%%%%%%%%%%%%%%%
%%%%%%%%%%%%%%%%%%%%%%%%%%%%%%%%%%%%%
\section{Elementary theoretical framework} \label{section-general-theory}

From the above simulation studies, we have observed that the $\Gamma_{c_i}$ curve is commonly monotone for each $i$, i.e., the estimation $\gamma (\bm \theta_{\bm c}^*) $ changes monotonically as increasing $c_i$. To explain why the monotonicity pattern of $\gamma (\bm \theta_{\bm c}^*) $ with $c_i$ happens commonly and how, in this section we will construct elementary theoretical framework for general situations, in which deeper analysis of special situations can reside as we will see in later sections. 

To analyze the change of $\gamma (\bm \theta_{\bm c}^*) $ with $\bm c$, a naive way is to first analyze how $\bm \theta_{\bm c}^*$ changes with $\bm c$ and then to calculate $\gamma (\bm \theta_{\bm c}^*) $. But the analysis of $\bm \theta_{\bm c}^*$ is not straightforward at all, since $\bm \theta_{\bm c}^*$ relies on the optimization procedure involving specific settings of $\{L_i\}$, $\mathcal{D}_{\Theta}$, and $\bm p$. To circumvent the barrier and simplify the analysis, we come up with an alternative. A key insight is that $\gamma  $ is linked to $\bm \theta$ via $\gamma = t(\bm r)$ and $\bm r = \bm r (\bm \theta)$, and intuitively increasing $c_i$ should lead to a more accurate estimation of $\hat{r}_i$. So, the analysis of $\gamma (\bm \theta_{\bm c}^*) $ can be decomposed into two parts: how $\bm r(\bm \theta_{\bm c}^*)$ moves with $c$ in the space $\mathcal{R}_{\Theta}$ and how $\gamma = t(\bm r ) $ changes with $\bm r$. Obviously, if both changes are monotone, then the composed change on $\gamma$ will be also monotone. Such a decomposition trick shows a much more concise profile of the analysis with skipping the intermediate step of the change of $\bm \theta^*_{\bm c}$ in the parameter space $\Theta$. 

%\footnote{In this paper, we are mainly intended to provide sufficient conditions instead of equivalent conditions. Our main goal is to establish theory to explain the empirical observation. Pursuing equivalent conditions would make things cumbersome and overwhelming. }

We begin with the following definitions about the movement of $\bm r(\bm \theta_{\bm c}^*)$ with $c$ and the change of $\gamma = t(\bm r ) $ with $\bm r$. 

\begin{definition}[A trajectory of sub-properties] \label{condition-subProperties}
    In Problem \ref{main_problem}, define a trajectory of sub-properties in $\mathcal{R}_{\Theta}$ to be a one-dimensional curve $\mathcal{T}_{x} = \{ \bm r (x) \in \mathcal{R}_{\Theta} | x\in I \subseteq \mathbb{R} \}$ in $\mathcal{R}_{\Theta}$ where $I$ is the index set. E.g., the points of $\bm r(\bm \theta^*_{\bm c}(\bm p))$ with increasing $c_i$ while fixing $c_{-i}$ is a trajectory denoted by $\mathcal{T}_{c_i} (\mathcal{L}_{\bm c}, c_{-i}, \bm p) = \{\bm r (c_i) = \bm r(\bm \theta^*_{\bm c}(\bm p)) | c_i\in \mathbb{R}_{+} \}$ or $\mathcal{T}_{c_i} (\bm p)$ for short, which means that the trajectory relies on $\bm p$ and potentially $\mathcal{L}_{\bm c}$ and $c_{-i}$.  
     
\end{definition}

\begin{definition}[One-sided trajectory]
     In Problem \ref{main_problem}, given $\tilde{\bm r} \in \mathbb{R}^M $, a trajectory of sub-properties $\mathcal{T}_{x} $ is called one-sided from $\tilde{r}_i$ for some $i$ if either $r_i (x) \leq \tilde{r}_i $ for all $x$ or $r_i(x) \geq \tilde{r}_i $ for all $x$.
\end{definition}  

\begin{definition}[Monotone function along a trajectory]
     In Problem \ref{main_problem}, a function $f(\bm r)$ is monotone along a trajectory of sub-properties $\mathcal{T}_{x} $ in $r_i$ for some $i$, if for all $\bm r^A \in \mathcal{T}_{x} $ and $\bm r^B \in \mathcal{T}_{x} $ with $r^A_i < r^B_i$, $f(\bm r^A) \leq f(\bm r^B)$ (non-increasing) or $f(\bm r^A) \geq f(\bm r^B)$ (non-decreasing).
\end{definition}

We also need the following intuitive but important lemma, which can be used to clarify why increasing $c_i$ would always lead to a more accurate estimation of $\hat{r}_i$.

\begin{lemma}\label{lemma-increaseC-decreaseFunction}
Suppose $f_1: \Theta \rightarrow \mathbb{R}$, $f_2: \Theta \rightarrow \mathbb{R}$. Let $F_{\bm c}(\bm \theta) = c_1 f_1(\bm \theta)+c_2 f_2(\bm \theta)$ with $\bm c = (c_1, c_2) \in \mathbb{R}^2$. Suppose the minimizer of $F_{\bm c}$ exists for all $\bm c$, and denote $\bm \theta_{\bm c}^* \in \arg\min_{\bm \theta} F_{\bm c}(\bm \theta)$. If we change $\bm c$ to $\tilde{\bm c} = (c_1 + \Delta c_1, c_2)$ with $\Delta c_1> 0$, then we have $f_1(\bm \theta^*_{\tilde{\bm c}}) \leq f_1(\bm \theta_{\bm c}^*)$.
\end{lemma}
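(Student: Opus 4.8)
The plan is to use a revealed-preference (adding-up) argument based purely on the two defining minimization inequalities, avoiding any differentiability or convexity assumptions on $f_1, f_2$. First I would write down what optimality means at each of the two weight vectors. Since $\bm\theta^*_{\bm c}$ minimizes $F_{\bm c}$, comparing its value against the competitor $\bm\theta^*_{\tilde{\bm c}}$ gives
\[
c_1 f_1(\bm\theta^*_{\bm c}) + c_2 f_2(\bm\theta^*_{\bm c}) \leq c_1 f_1(\bm\theta^*_{\tilde{\bm c}}) + c_2 f_2(\bm\theta^*_{\tilde{\bm c}}).
\]
Symmetrically, since $\bm\theta^*_{\tilde{\bm c}}$ minimizes $F_{\tilde{\bm c}}$ with $\tilde{\bm c}=(c_1+\Delta c_1, c_2)$, comparing against $\bm\theta^*_{\bm c}$ gives
\[
(c_1+\Delta c_1) f_1(\bm\theta^*_{\tilde{\bm c}}) + c_2 f_2(\bm\theta^*_{\tilde{\bm c}}) \leq (c_1+\Delta c_1) f_1(\bm\theta^*_{\bm c}) + c_2 f_2(\bm\theta^*_{\bm c}).
\]

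The key step, and the whole trick of the argument, is simply to add these two inequalities. When I sum the left-hand sides and the right-hand sides, every term carrying the coefficient $c_2$ appears identically on both sides and cancels, and likewise the terms carrying the bare coefficient $c_1$ cancel. What survives is only the contribution from the increment $\Delta c_1$, leaving
\[
\Delta c_1 \, f_1(\bm\theta^*_{\tilde{\bm c}}) \leq \Delta c_1 \, f_1(\bm\theta^*_{\bm c}).
\]
Dividing by $\Delta c_1 > 0$ yields $f_1(\bm\theta^*_{\tilde{\bm c}}) \leq f_1(\bm\theta^*_{\bm c})$, which is exactly the claim.

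There is essentially no hard step here: the result is a one-line consequence once the two optimality inequalities are in hand, and the cancellation is automatic. The only thing worth flagging is the modeling point that $\bm\theta^*_{\bm c}$ and $\bm\theta^*_{\tilde{\bm c}}$ need only be \emph{some} minimizers (the $\arg\min$ may not be a singleton), but the argument never uses uniqueness — it holds for any selection of minimizers, so the conclusion is valid regardless of ties. I would note that the statement is one-sided and asymmetric by design: increasing the weight on $f_1$ can only (weakly) decrease the attained value of $f_1$ at the optimum, which is the precise sense in which "putting more weight on a sub-property sharpens its estimate" that the surrounding framework relies on. No assumption on the sign of $c_1, c_2$ or on the structure of $\Theta$ is needed.
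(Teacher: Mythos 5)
Your proof is correct and is essentially the same revealed-preference argument as the paper's: the paper writes $F_{\tilde{\bm c}} = \Delta c_1 f_1 + F_{\bm c}$ and combines the two optimality inequalities, which is algebraically identical to your step of adding them and cancelling. No gaps; the remark that uniqueness of minimizers is not needed is a nice (if minor) clarification.
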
 %\label{lemma-increaseC-decreaseFunction}

\begin{proof} 
Notice that $F_{\tilde{\bm c}}(\bm \theta) = \Delta c_1 f_1(\bm \theta) + F_{\bm c}(\bm \theta)$. Because $\bm \theta_{\tilde{\bm c}}^* \in \arg\min_{\bm \theta} F_{\tilde{\bm c}}(\bm \theta)$, then $ F_{\tilde{\bm c}}(\bm \theta_{\tilde{\bm c}}^*) \leq F_{\tilde{\bm c}}(\bm \theta_{\bm c}^*)$, i.e., 
\[
\Delta c_1 f_1(\bm \theta_{\tilde{\bm c}}^*) + F_{\bm c}(\bm \theta_{\tilde{\bm c}}^*) 
\leq 
\Delta c_1 f_1(\bm \theta_{\bm c}^*) + F_{\bm c}(\bm \theta_{\bm c}^*).
\]
Since $\bm \theta_{\bm c}^* \in \arg\min_{\bm \theta} F_{\bm c} (\bm \theta)$, $F_{\bm c}(\bm \theta_{\tilde{\bm c}}^*) \geq F_{\bm c}(\bm \theta_{\bm c}^*)$. Then $f_1(\bm \theta^*_{\tilde{\bm c}}) \leq f_1(\bm \theta_{\bm c}^*)$.
\end{proof}

%\begin{lemma}\label{lemma-increaseC-decreaseLoss}
%For the total loss function $\mathcal{L}_{\bm c}(\bm r(\bm \theta); \bm p)= \sum_{i=1}^M c_i L_i (r_i(\bm \theta); \bm p)$ from Problem \ref{main_problem}, let $\bm c \oplus \Delta c_j$ be adding $\Delta c_j > 0$ to the $j$-th component of $\bm c$. Then 
%$L_j (r_j(\bm \theta_{\bm c \oplus \Delta c_j}^*); \bm p) \leq L_j (r_j(\bm \theta_{\bm c}^*); \bm p)$. 
%\end{lemma} 

Now, we can establish the elementary conditions for the sub-property trajectory $\mathcal{T}_{c_i} $ and the link function $\gamma = t(\bm r ) $ such that both of them are "monotone" and thus the composite result $\gamma (\bm \theta_{\bm c}^*) = t( r(\bm \theta_{\bm c}^*) )$ would be naturally monotone.

% Here we need to assume that each sub-loss is accuracy-rewarding
\begin{theorem}\label{theorem-general}
For Problem \ref{main_problem}, given $\bm p$, suppose that $\theta^*_{\bm c}(\bm p) $ exists for all $\bm c$. For some $i$ and fixed $c_{-i}$, we have the following statements.
\begin{itemize}
    \item[(A)] If the sub-loss $L_i$ is accuracy-rewarding, and the trajectory $\mathcal{T}_{c_i} (\bm p))$ is one-sided from $\hat{r}_i(\bm p)$, then $r_i(\bm \theta^*_{\bm c} (\bm p) )$ increases or decreases to $\hat{r}_i (\bm p) $ with increasing $c_i$.
    \item[(B)] Based on (A), furthermore $ \gamma (\bm \theta^*_{\bm c}(\bm p) ) $ is monotone in $c_i$ if and only if the link function $\gamma = t(\bm r)$ is monotone along the trajectory $\mathcal{T}_{c_i} (\bm p) $ in $r_i$.
\end{itemize}

\end{theorem}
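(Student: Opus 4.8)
The plan is to prove (A) by combining the monotonicity lemma (Lemma~\ref{lemma-increaseC-decreaseFunction}) with the accuracy-rewarding property of $L_i$, and then to prove (B) as a purely order-theoretic statement about composing two monotone maps, feeding (A) in as input. Throughout I fix $\bm p$ and $c_{-i}$ and abbreviate the two scalar curves I need to control as $g(c_i) \triangleq r_i(\bm\theta^*_{\bm c}(\bm p))$ and $h(c_i) \triangleq \gamma(\bm\theta^*_{\bm c}(\bm p)) = t(\bm r(\bm\theta^*_{\bm c}(\bm p)))$.

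For (A), I would first split the total loss as $\mathcal{L}_{\bm c}(\bm r(\bm\theta);\bm p) = c_i\, L_i(r_i(\bm\theta);\bm p) + \sum_{j\neq i} c_j L_j(r_j(\bm\theta);\bm p)$ and apply Lemma~\ref{lemma-increaseC-decreaseFunction} with $f_1(\bm\theta)=L_i(r_i(\bm\theta);\bm p)$ and $f_2(\bm\theta)=\sum_{j\neq i} c_j L_j(r_j(\bm\theta);\bm p)$, the fixed weights $c_{-i}$ being absorbed into $f_2$. This yields that $L_i(r_i(\bm\theta^*_{\bm c});\bm p)$ is non-increasing in $c_i$. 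Next I would translate this into a statement about $g$ using that $L_i$ is accuracy-rewarding: for a scalar sub-property, Definition~\ref{def-accuracyRewarding} forces $L_i(\cdot;\bm p)$ to be strictly decreasing on $(-\infty,\hat r_i]$ and strictly increasing on $[\hat r_i,+\infty)$. On whichever branch the one-sided hypothesis places the trajectory, $L_i$ is therefore strictly monotone, hence injective with a strictly monotone inverse; inverting the non-increasing loss then shows that $g$ moves monotonically toward $\hat r_i$ (upward when the trajectory lies below $\hat r_i$, downward when it lies above). This is exactly (A), and it also records the fact I need for (B): $g$ is a monotone function of $c_i$.

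For (B), I would treat everything as a composition. By (A), $g$ is monotone in $c_i$; assume without loss of generality the non-decreasing case (the non-increasing case is symmetric, only flipping orientations). For the forward implication, if $h$ is monotone in $c_i$, then any two trajectory points $\bm r^A,\bm r^B$ with $r^A_i<r^B_i$ must arise from parameters $c^A_i<c^B_i$, because $g$ non-decreasing forces the $c_i$-order to agree with the $r_i$-order whenever the $r_i$-values differ (if $c^A_i\ge c^B_i$ then $g(c^A_i)\ge g(c^B_i)$, a contradiction); the assumed monotonicity of $h$ then orders $t(\bm r^A)$ and $t(\bm r^B)$ consistently, which is monotonicity of $t$ along $\mathcal{T}_{c_i}$ in $r_i$. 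For the converse, if $t$ is monotone along the trajectory in $r_i$, then for $c^A_i<c^B_i$ with $g(c^A_i)<g(c^B_i)$ the definition directly orders $h(c^A_i)$ and $h(c^B_i)$, delivering monotonicity of $h$.

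The main obstacle is the degenerate part of the converse: when $g$ is constant on an interval of $c_i$ while the full vector $\bm r(\bm\theta^*_{\bm c})$, and hence $t$, still varies, the hypothesis ``monotone along the trajectory in $r_i$'' says nothing, since it only constrains pairs with \emph{distinct} $r_i$-values, so $h$ could in principle fail to be monotone. I would dispose of this in one of two ways. In the two-dimensional setting of Remark~\ref{remark_parameterDimension}, $\mathcal{R}_\Theta$ is a one-dimensional curve, so a fixed value of $r_i$ pins the point down on that curve (away from any vertical segment), forcing $t$ to be constant there and closing the gap. In higher dimensions I would either impose a mild non-degeneracy assumption guaranteeing that $g$ is \emph{strictly} monotone in $c_i$ (which holds whenever the constrained optimizer genuinely moves with $c_i$, e.g.\ under strict convexity of the sub-losses in the relevant parametrization), or state the equivalence restricted to the maximal sub-intervals on which $g$ is strictly monotone. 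I expect verifying that this degeneracy is truly absent under the paper's standing assumptions to be the only delicate point; the remainder is routine bookkeeping with monotone maps.
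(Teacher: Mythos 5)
Your proof of (A) is exactly the paper's argument: apply Lemma~\ref{lemma-increaseC-decreaseFunction} with $f_1=L_i(r_i(\bm\theta);\bm p)$ and the remaining weighted sub-losses absorbed into $f_2$, conclude that $L_i(r_i(\bm\theta^*_{\bm c});\bm p)$ is non-increasing in $c_i$, and then use the accuracy-rewarding property together with one-sidedness to convert this into monotone movement of $r_i(\bm\theta^*_{\bm c})$ toward $\hat r_i$. For (B) the paper simply declares the claim straightforward and omits the proof, so your composition argument is a genuine addition rather than a divergence. The degeneracy you isolate is real: the definition of ``monotone along a trajectory in $r_i$'' only constrains pairs with distinct $r_i$-values, so if $r_i(\bm\theta^*_{\bm c})$ stalls on an interval of $c_i$ while $\bm r(\bm\theta^*_{\bm c})$ and hence $t$ keep moving, the stated equivalence in (B) can fail as literally written; the paper never addresses this. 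Your proposed resolutions (the curve structure pinning down the point in the $M=2$ setting of Remark~\ref{remark_parameterDimension}, or a strict-monotonicity/non-degeneracy hypothesis in higher dimensions) are the right way to close it, and overall your write-up is more careful than the paper's on this point.
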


\begin{proof}
    Lemma \ref{lemma-increaseC-decreaseFunction} shows that increasing $c_i$ would definitely incur a smaller or equal value of $L_i (r_i(\bm \theta_{\bm c}^*); \bm p)$ , no matter what the sub-loss is. If (A) the trajectory $\mathcal{T}_{c_i} (\bm p)$ is one-sided and also the sub-loss $L_i$ is accuracy-rewarding, according to Definition \ref{def-accuracyRewarding} we know that $r_i(\bm \theta_{\bm c}^*)$ increases or decreases to $\hat{r}_i$ with increasing $c_i$ and fixed $c_{-i}$. (B) is straightforward and thus the proof is omitted here. 
\end{proof}

Theorem \ref{theorem-general} provides an intuitive theoretical framework. It points out the nature of how the monotonicity pattern of $ \gamma (\bm \theta^*_{\bm c} ) $ with $c_i$ would happen. It would be interesting to scrutinize why the two intuitive conditions (A) and (B) are commonly satisfied by the models used in our simulation studies in a large diversity. More importantly, the two conditions do not provide the corresponding characterization of the model structure, and then we are unable to verify them directly. So, the remaining work is to dig the deeper assumptions about when the conditions (A) and (B) in Theorem \ref{theorem-general} can be satisfied. In fact, as we will see later, these conditions are not as strong as they look like and the deeper assumptions behind them are mild. We will start the exploration in the case with two sub-properties and then move to higher-dimensional cases. Also, we will connect the theory back to the simulated results shown in Section \ref{section-initialSimulations}.

%%%%%%%%%%%%%%%%%%%%%%%%%%%%%%%%%%%%%
%%%%%%%%%%%%%%%%%%%%%%%%%%%%%%%%%%%%%
%%%%%%%%%%%%%%%%%%%%%%%%%%%%%%%%%%%%%
\section{Extensive analysis for 2-D cases}\label{section-2D}

Theorem \ref{theorem-general} describes two sufficient conditions (A) and (B) to support the monotonicity pattern of $\gamma (\bm \theta_{\bm c}^*)$ in $\bm c$. 
The condition (A) guarantees that the sub-property trajectory $\mathcal{T}_{c_i}$ goes monotonically to the true value $\hat{r}_i$ with increasing $c_i$, and the condition (B) tells us that $\gamma$ changes monotonically along $\mathcal{T}_{c_i}$. As we mentioned in the end of the previous section, we need to furthermore uncover the deeper assumptions behind these conditions. These assumptions should expose the desirable structure of the parametric sub-property model and the target property such that the conditions (A) and (B) in Theorem \ref{theorem-general} can be satisfied. In this section, we will discuss the 2-D situation, where there are only two sub-properties $M=2$. In the end, we will establish the correspondence between theory and experiments. 

To begin with, we need to specify several new notations. First, as specified in Remark \ref{remark_parameterDimension}, when $M=2$, the parameter $\theta \in \Theta \subseteq \mathbb{R}$ (thus not denoted in bold font) and correspondingly the image of $\mathcal{R}_\Theta$ would be a curve on $\mathbb{R}^2$ such that $\hat{\bm r} \notin \mathcal{R}_{\Theta}$ in general. Second, if possible, $\bm r(\theta)= (r_1(\theta), r_2(\theta))$ will be equivalently denoted by a function $r_2 (\theta) = R(r_1(\theta))$ w.r.t. $r_1$ for convenience. Similarly, we denote the contour of the target property $\{\bm r | t(\bm r) = t_0\}$ at any level $t_0$ by a function $ r_2 = T(r_1;t_0)$ if possible.

\subsection{Assumption for the parametric sub-properties} \label{section-2D-conditionForSubproperties}

As we have seen in Theorem \ref{theorem-general}, the condition (A) requires the sub-property trajectory $\mathcal{T}_{c_i}$ to be one-sided from $\hat{r}_i$ for some $i$ and $c_{-i}$. Now we discuss what structure of the parametric sub-property model $\mathcal{R}_{\Theta}$ can satisfy the condition (A), that is, ensuring $\mathcal{T}_{c_i}$ one-sided from $\hat{r}_i$.

We first have the following Theorem \ref{theorem-conditionForSubproperties-2D} for the case of strictly monotone $\mathcal{R}_{\Theta}$. It is necessary for readers to read through the proof of Theorem \ref{theorem-conditionForSubproperties-2D} to shape an intuition about how the mechanism works. 

\begin{figure}[!htbp]
  \centering
  % Requires \usepackage{graphicx}
  \includegraphics[width=0.45\textwidth]{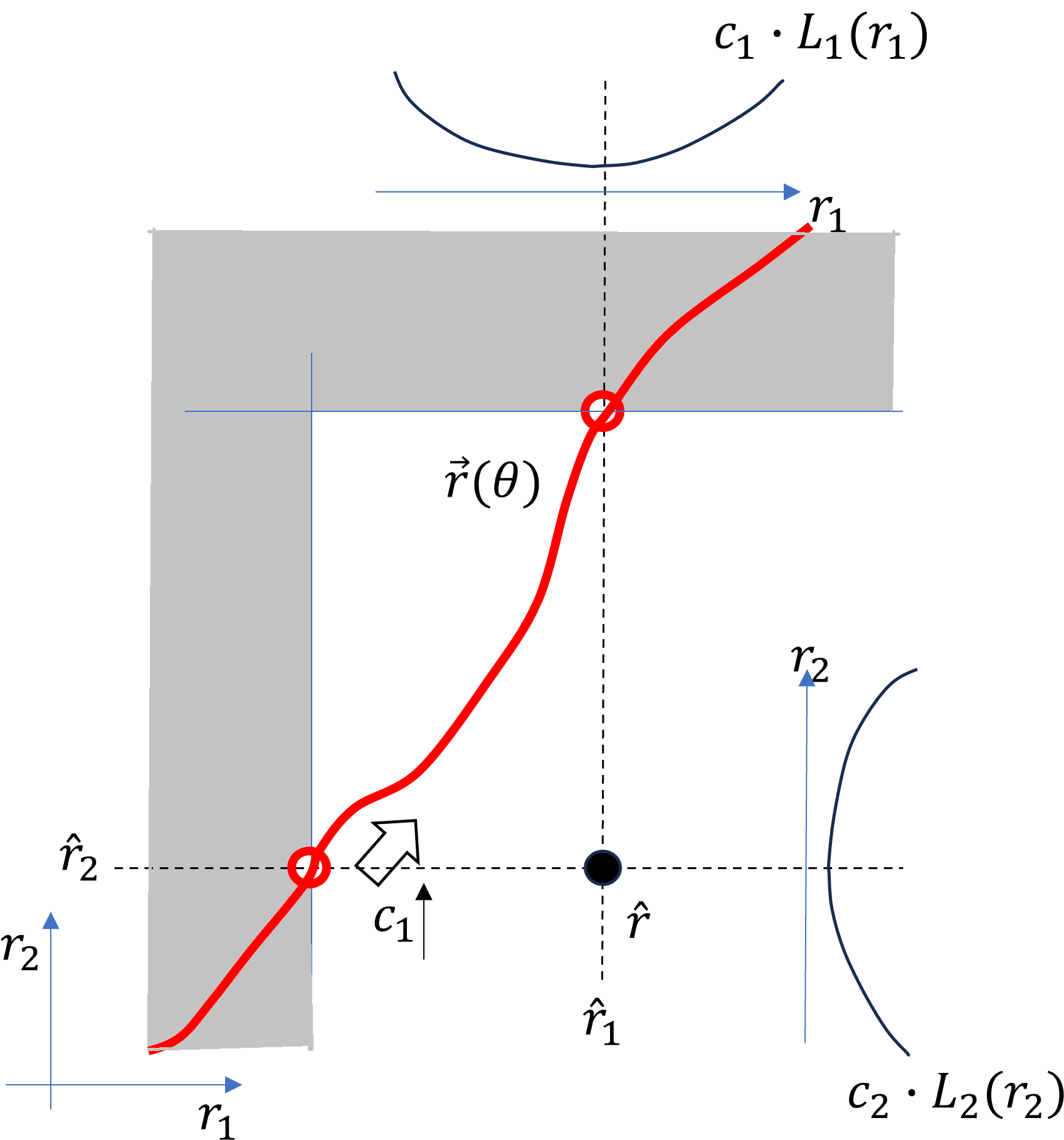}
  %\subfigure[]{\includegraphics[width=0.45\textwidth]{2D_model_monotone_illustration.png}\label{fig_2D_model_monotone_illustration} }
  %\subfigure[]{\includegraphics[width=0.45\textwidth]{2D_model_general_illustration(new).png} \label{fig_2D_model_general_illustration} }
  
  \caption{ The impossible area of $\bm r(\theta^*_{\bm c})$ for all $\bm c$ with a strictly monotone $\mathcal{R}_{\Theta}$ in a 2-D space, marked with gray shading. } %\label{fig_2D_model_illustration}
  \label{fig_2D_model_monotone_illustration} 

\end{figure}

\begin{theorem}\label{theorem-conditionForSubproperties-2D}
For Problem \ref{main_problem} with $M=2$, suppose that $\theta^*_{\bm c}(\bm p) $ exists for all $\bm p$ and $\bm c$ and all sub-losses are accuracy-rewarding. If $\bm r(\theta)$ characterizes a strictly monotone function $r_2 = R(r_1)$ \footnote{In this paper, the "strictly monotone" function only requires the inequality $f(x_1) < f(x_2)$ or conversely for all $x_1 < x_2$. The continuity is not required. }, then $\mathcal{T}_{c_i} (\bm p)$ is one-sided from $\hat{r}_i (\bm p)$ for each $i$ and all $c_{-i}$, that is, $\mathcal{T}_{c_i} (\bm p)$ is in the same quadrant centered at $\hat{\bm r} (\bm p) $ for all $i$ and $c_{-i}$. Thus, the condition (A) in Theorem \ref{theorem-general} is satisfied, and $r_i(\bm \theta^*_{\bm c} (\bm p) )$ moves monotonically to $\hat{r}_i (\bm p) $ with increasing $c_i$ from $0$ to $+ \infty$ for each $i$.
\end{theorem}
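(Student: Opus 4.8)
The plan is to eliminate the parameter $\theta$ entirely and recast \eqref{eq_parametric_model_optimization} as a one-dimensional problem along the curve $\mathcal{R}_\Theta$, written as the graph $r_2 = R(r_1)$. Without loss of generality I take $R$ strictly increasing; the decreasing case follows by the reflection $r_2 \mapsto -r_2$ (together with its target $\hat r_2$), which preserves both the accuracy-rewarding property of $L_2$ and the quadrant structure. Since every point of $\mathcal{R}_\Theta$ has the form $(r_1, R(r_1))$ with $r_1$ ranging over $I_1 := \{r_1(\theta): \theta \in \Theta\}$, minimizing $\mathcal{L}_{\bm c}(\bm r(\theta); \bm p)$ over $\theta$ is the same as minimizing
\[
g(r_1) := c_1 L_1(r_1; \bm p) + c_2 L_2\bigl(R(r_1); \bm p\bigr)
\]
over $r_1 \in I_1$. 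The value of this reduction is that $g$ is a $\bm c$-weighted sum of two functions whose shapes I can pin down.

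Next I would record the shape of each summand. Because $L_i$ is accuracy-rewarding (Definition \ref{def-accuracyRewarding}) with a scalar report, $L_1(\cdot; \bm p)$ is strictly decreasing on $(-\infty, \hat r_1]$ and strictly increasing on $[\hat r_1, +\infty)$ (intersected with $I_1$): a strict ``V'' with vertex at $\hat r_1$. Crucially, strict monotonicity of $R$, and nothing more, transports this V-shape through the composition, so that $L_2\bigl(R(\cdot); \bm p\bigr)$ is strictly decreasing for $r_1 \le \rho$ and strictly increasing for $r_1 \ge \rho$, where $\rho$ is the threshold separating $\{r_1 : R(r_1) \le \hat r_2\}$ from $\{r_1 : R(r_1) > \hat r_2\}$ (morally $\rho = R^{-1}(\hat r_2)$). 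The key observation is that both vertices $\hat r_1$ and $\rho$ depend only on $\bm p$ and $R$, \emph{not} on $\bm c$.

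The heart of the argument is a confinement step. To the left of both vertices each summand is strictly decreasing, so $g$ is strictly decreasing there (for any $\bm c$ with the relevant weight positive); symmetrically $g$ is strictly increasing to the right of both vertices. Hence \emph{every} minimizer of $g$, for \emph{every} $\bm c$, lies in the closed interval bounded by $\hat r_1$ and $\rho$. When $\hat{\bm r}$ lies above the curve one has $R(\hat r_1) < \hat r_2$, hence $\hat r_1 < \rho$, and $[\hat r_1, \rho]$ translates into $r_1(\theta^*_{\bm c}) \ge \hat r_1$ together with $r_2(\theta^*_{\bm c}) = R(r_1(\theta^*_{\bm c})) \le \hat r_2$, i.e. the single quadrant $\{r_1 \ge \hat r_1,\, r_2 \le \hat r_2\}$; the case $\hat{\bm r}$ below the curve is symmetric and gives the opposite quadrant. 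Since this quadrant is fixed across all $\bm c$, any trajectory $\mathcal{T}_{c_i}(\bm p)$ is contained in it and is therefore one-sided from $\hat r_i$ for each $i$ and every $c_{-i}$; condition (A) of Theorem \ref{theorem-general} then yields the final monotone-movement claim.

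The main obstacle I anticipate is the boundary bookkeeping when a target coordinate falls outside the curve's range, i.e. $\hat r_1 \notin I_1$ or $\hat r_2$ outside the range of $R$. Then a vertex ($\rho$, or the effective vertex of the $L_1$ term) sits at $\pm\infty$, the confining interval degenerates to a half-line, and one-sidedness in that coordinate becomes automatic; I would verify that each such degenerate configuration still places all minimizers in one quadrant, leaving the conclusion intact. The only other delicate point is making the ``V-shape survives composition with $R$'' claim rigorous without assuming $R$ continuous (as the paper's convention permits): this rests solely on $R$ strictly monotone, so that $r_1 < r_1'$ keeps $R(r_1) < R(r_1')$ on the same side of $\hat r_2$, whence the strict monotonicity of $L_2$ on each side of its vertex carries over verbatim.
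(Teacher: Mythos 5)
Your proposal is correct and is essentially the paper's own argument in analytic rather than geometric dress: your two vertices $\hat r_1$ and $\rho$ are exactly the paper's two intersection points of the curve with the coordinate axes centered at $\hat{\bm r}$, and your confinement step (both V-shaped summands decrease to the left of both vertices and increase to the right) is precisely the paper's domination argument ruling out the shaded regions. Your explicit handling of the degenerate boundary cases is a minor added care the paper glosses over, but the approach is the same.
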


\begin{proof}
    Given any $\bm p$, if $\hat{r}(\bm p) \in \mathcal{R}_\Theta$, then $\bm r(\theta^*_{\bm c})\equiv \hat{\bm r}$ for all $\bm c$ and the conclusion holds. Next, let us consider the non-trivial case with $\hat{r}(\bm p) \notin \mathcal{R}_\Theta$.
    
    Without loss of generality, assume that $r_2 = R(r_1)$ is strictly increasing, as shown in Figure \ref{fig_2D_model_monotone_illustration}. There are two intersection points between the model curve $\bm r(\theta)$ and the coordinate axes centered at $\hat{\bm r}$, which are marked in red circles. Notice that the outer shaded area can be ruled out for the candidates of $\bm r(\theta^*_{\bm c})$ for all $\bm c$, which means that the possible area of $\bm r(\theta^*_{\bm c})$ is intrinsically constrained. The reason is as follows. 
    
    The top intersection corresponds to a precise estimation of $\hat{r}_1$. Because all sub-losses are accuracy-rewarding, all points $\bm r$ in the top shaded area except the top intersection incur a higher loss value for $L_1$ and/or for $L_2$ and thus they could never be an option for $\bm r(\theta^*_{\bm c})$ no matter what $\bm c$ is. That is, the top shaded area is {\bf dominated} by the top intersection. Similarly, the left shaded area is dominated by the left intersection. So, $\bm r(\theta^*_{\bm c})$ would only be located between the two intersection points (included) for all $\bm c$. 
    
    Thus, $\mathcal{T}_{c_i} (\bm p)$ is in the same quadrant centered at $\hat{\bm r} (\bm p) $ for all $i$ and $c_{-i}$. It is easy to verify that the conclusion is always true no matter where $\hat{r}(\bm p)$ is (but may be in a different quadrant of a different $\hat{r}(\bm p)$). 

    In Figure \ref{fig_2D_model_monotone_illustration}, the left intersection corresponds to the worst estimation of $\hat{r}_1$ but a precise estimation of $\hat{r}_2$ which can be achieved by setting $c_1=0$, while the top intersection corresponds to the precise estimation of $\hat{r}_1$ that can be achieved by setting $c_1= + \infty$. Following from the condition (A) in Theorem \ref{theorem-general}, we see that $\bm r(\theta^*_{\bm c})$ moves monotonically from the left intersection to the top with increasing $c_1$ from $0$ to $+\infty$ and moves conversely with increasing $c_2$. 
\end{proof}

\begin{remark}
    Theorem \ref{theorem-conditionForSubproperties-2D} discusses the special case where $\bm r(\theta)$ corresponds to a strictly monotone function. One may be curious about whether the strict monotonicity of $\bm r(\theta)$ is a necessary assumption to satisfy the condition (A). The answer is no. Basically, as illustrated by the proof of Theorem \ref{theorem-conditionForSubproperties-2D} with Figure \ref{fig_2D_model_monotone_illustration}, only the part of $\bm r(\theta)$ inside the white possible area matters and we do not need to care about what $\bm r(\theta)$ looks like in the shaded impossible area. What's more, even if the part of $\bm r(\theta)$ inside the possible area is not monotone, the condition (A) can still be satisfied in many cases. Nonetheless, with exploring a more general example of $\bm r(\theta)$, we find that the possible area for candidates of $\bm r(\theta^*_{\bm c})$ for all $\bm c$ would be composed of strictly monotone pieces within each quadrant centered at $\hat{\bm r} (\bm p)$ and it suffices to consider each quadrant at a time and then consider the results in all quadrants together. We leave detail to Appendix \ref{appendix-generalSubproperty-2D}.
\end{remark}

\subsection{Assumption for the target property} \label{section-2D-conditionForTargetProperty}

We have seen above that $\bm r(\theta)$ corresponding to a strictly monotone function can guarantee the condition (A). Next we will discuss what structure of the link function $\gamma = t(\bm r)$ can satisfy the condition (B) in Theorem \ref{theorem-general}, that is, the function $\gamma = t(\bm r)$ is monotone along $\mathcal{T}_{c_i} $ in $r_i$ for some $i$ and $c_{-i}$. We have the following theorem. Basically, it reveals that the relationship between the derivative of $\bm r(\theta)$ and the derivative of the contours of $\gamma = t(\bm r)$ matters. 

\begin{theorem}\label{theorem-conditionForTargetProperty} 
    For Problem \ref{main_problem} with $M=2$, suppose that $\bm r (\theta)$ corresponds to a differentiable function $r_2 = R(r_1)$, and $\gamma = t(\bm r)$ is continuous and partially differentiable. Also assume that all non-empty contours $T(r_1; t_0)$ are differentiable and monotone. We have $\gamma = t(\bm r)$ is monotone along $\bm r (\theta)$ if and only if $R'(r_1)- T'(r_1; t_0)$ keeps the sign unchanged ($0$ included) at all points where $R(r_1) = T(r_1; t_0)$.  
\end{theorem}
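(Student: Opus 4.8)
The plan is to collapse the two-variable picture onto a single scalar function and reduce monotonicity to a sign computation via the chain rule. Using $r_2=R(r_1)$ to parametrize the model curve by its first coordinate, I would define $g(r_1)\triangleq t(r_1,R(r_1))$, the value of the target property as the point $\bm r$ travels along $\bm r(\theta)$. By construction the assertion ``$\gamma=t(\bm r)$ is monotone along $\bm r(\theta)$'' is literally ``$g$ is monotone,'' and since $g$ is differentiable on the (connected) range of $r_1$, this is equivalent to $g'$ keeping a constant sign with zeros allowed. Both directions of the desired equivalence will then follow at once from a formula for $g'$.

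\textbf{Key steps.} First I would differentiate $g$ by the chain rule, obtaining $g'(r_1)=\frac{\partial t}{\partial r_1}+\frac{\partial t}{\partial r_2}R'(r_1)$, which is legitimate since $R$ and $t$ are differentiable. Second, I would bring in the contours: the point $(r_1,R(r_1))$ lies on exactly one contour, the one of level $t_0=g(r_1)$, which is locally the graph $r_2=T(r_1;t_0)$; differentiating the identity $t(r_1,T(r_1;t_0))=t_0$ in $r_1$ gives $\frac{\partial t}{\partial r_1}+\frac{\partial t}{\partial r_2}T'(r_1;t_0)=0$. These are exactly the points ``where $R(r_1)=T(r_1;t_0)$'' named in the statement, so $R'-T'$ is being compared precisely there. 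Substituting eliminates $\partial t/\partial r_1$ and yields the central identity
\[
g'(r_1)=\frac{\partial t}{\partial r_2}\big(R'(r_1)-T'(r_1;t_0)\big),
\]
so that the sign of $g'$ factors as the sign of $\partial t/\partial r_2$ times the sign of $R'-T'$ at every point of the curve.

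\textbf{Finishing and the main obstacle.} With this identity the equivalence is immediate \emph{provided} $\partial t/\partial r_2$ is nonzero and of one fixed sign along $\bm r(\theta)$: then $g'$ keeps a constant sign if and only if $R'-T'$ does, which is the claim in both directions. Establishing this constancy is the step I expect to be the main obstacle, and I would derive it from the hypotheses rather than assume it. Nonvanishing of $\partial t/\partial r_2$ is forced by the assumption that each contour is a \emph{differentiable function} $r_2=T(r_1;t_0)$: a vertical contour (where $\partial t/\partial r_2=0$) could not be written as such a graph, which is the implicit-function-theorem content. Constancy of its sign I would then obtain from continuity of the partial derivative together with connectedness of the $r_1$-range, since a continuous nowhere-zero function on an interval cannot change sign by the intermediate value theorem; I should state explicitly whether this needs $t\in C^1$ rather than the bare ``partially differentiable'' hypothesis, and if only the latter is granted I would instead exploit the assumed monotonicity of the contours to pin down the sign. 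Finally I would remark that tangency points, where $R'=T'$ and hence $g'=0$, do not spoil the argument, which is precisely why the statement permits zeros (``$0$ included''): they are compatible with non-strict monotonicity of $g$ on either side.
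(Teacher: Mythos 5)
Your proposal matches the paper's argument essentially step for step: parametrize the property along the model curve as $t(r_1,R(r_1))$, differentiate by the chain rule, eliminate $\partial t/\partial r_1$ via implicit differentiation of the contour identity to get $g'=\frac{\partial t}{\partial r_2}\bigl(R'(r_1)-T'(r_1;t_0)\bigr)$, and then control the sign of $\partial t/\partial r_2$. On the one point you flag as the main obstacle, the paper takes exactly your fallback route — it derives the sign-constancy of $\partial t/\partial r_2$ from continuity of $t$ together with monotonicity of the contours, not from a $C^1$ assumption — so no strengthening of the hypotheses is needed.
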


\textit{Sketch of proof.} Continuous $\gamma = t(\bm r)$ and monotone $T(r_1; t_0)$ for all $t_0$ guarantees that the sign of $\frac{\partial t}{\partial r_2}$ keeps unchanged ($0$ included). Plus the assumption that $R'(r_1)- T'(r_1; t_0)$ keeps the sign unchanged, then we can prove that $\gamma = t(r_1, r_2)$ is monotone along the curve $R(r_1)$ with taking the derivative of $t(r_1, R(r_1))$ w.r.t. $r_1$. Thus, the condition (B) is satisfied. The complete proof is given in Appendix \ref{appendix-proofs-2D-general}.

\subsection{General theory for 2-D cases}

Now we can integrate the discovered assumptions above about the structure of $\mathcal{R}_{\Theta}$ and $\gamma = t (\bm r) $ into the framework of Theorem \ref{theorem-general}, and then obtain the complete theory for the 2-D cases as follows. 

\begin{theorem} \label{theorem-2D-general}
    For Problem \ref{main_problem} with $M=2$, suppose that $\theta^*_{\bm c}(\bm p) $ exists for all $\bm p$ and $\bm c$. We assume that: (i) all sub-losses are accuracy-rewarding, and $\bm r(\theta)$ or equivalently $r_2 = R(r_1)$ is differentiable and strictly monotone; (ii) $\gamma = t(\bm r)$ is continuous and partially differentiable, and all non-empty contours $T(r_1; t_0)$ are differentiable and monotone. Then we have the following statements.
    \begin{itemize}
        \item[(a)] If $R'(r_1) $ and $ T'(r_1; t_0)$ have the same sign and $R'(r_1) < T'(r_1; t_0)$ for all $r_1$ and $t_0$, then increasing $c_1$ makes $\gamma (\theta_{\bm c}^*(\bm p) )$ move closer to $\Gamma (\bm p) $ for all $\bm p$, and $c_1^* = +\infty$. 
        
        \item[(b)] If $R'(r_1) $ and $ T'(r_1; t_0)$ have the same sign and $R'(r_1) > T'(r_1; t_0)$ for all $r_1$ and $t_0$, then increasing $c_1$ makes $\gamma (\theta_{\bm c}^*(\bm p) )$ move farther away from $\Gamma (\bm p) $ for all $\bm p$, and $c_1^* = 0$.
        \item[(c)] If $R'(r_1)$ and $T'(r_1; t_0)$ have different signs for all $r_1$ and $t_0$, then increasing $c_1$ makes $\gamma (\theta_{\bm c}^*(\bm p) )$ move first closer to $\Gamma (\bm p) $ and then farther away from $\Gamma (\bm p) $ for all $\bm p$, and $c_1^* \in (0,+\infty)$.
    \end{itemize}
\end{theorem}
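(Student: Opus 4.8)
The plan is to stack the three preceding results so that $\gamma(\theta^*_{\bm c})$ is \emph{always} monotone in $c_1$, and then to locate the true value $\Gamma(\bm p)$ relative to the two endpoints of the trajectory $\mathcal{T}_{c_1}$; the position of $\Gamma(\bm p)$ is exactly what distinguishes the three cases. First I would establish the uniform monotonicity. Strict monotonicity of $R$ makes $R'$ of one fixed sign, and the assumption that every nonempty contour is monotone with a common orientation makes $T'(\cdot\,;t_0)$ of one fixed sign for all $r_1$ and $t_0$. Consequently $R'-T'$ has constant sign in each case: negative in (a), positive in (b), and --- since $R'$ and $T'$ are then of opposite sign --- again of a single sign in (c). Theorem~\ref{theorem-conditionForTargetProperty} therefore applies and gives that $\gamma=t(\bm r)$ is monotone along the entire model curve $\bm r(\theta)$, hence along the sub-arc $\mathcal{T}_{c_1}$. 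At the same time assumption~(i) lets me invoke Theorem~\ref{theorem-conditionForSubproperties-2D}: condition~(A) holds and $\mathcal{T}_{c_1}$ is the one-sided arc running, as $c_1$ grows from $0$ to $+\infty$, from the endpoint $B$ at which $r_2=\hat r_2$ is pinned exactly (the case $c_1=0$) to the endpoint $A$ at which $r_1=\hat r_1$ is pinned exactly (the case $c_1=+\infty$). Feeding both facts into Theorem~\ref{theorem-general}(B) yields that $\gamma(\theta^*_{\bm c})$ is monotone in $c_1$ in all three cases, so the only remaining question is how its monotone range sits around $\Gamma(\bm p)$.

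Next I would locate $\Gamma(\bm p)=t(\hat{\bm r})$ by a short sign computation at the two endpoints. Fixing the representative orientation $R'>0$ with $\hat{\bm r}$ above the curve (the other three orientations follow by reflecting a coordinate axis, which preserves distances, the accuracy-rewarding property, and all case hypotheses), the point $A=(\hat r_1,R(\hat r_1))$ sits directly below $\hat{\bm r}$ and $B=(R^{-1}(\hat r_2),\hat r_2)$ directly to its right. Using that $t$ is coordinatewise monotone with $\mathrm{sign}(t_{r_2})$ constant --- a fact already extracted inside the proof of Theorem~\ref{theorem-conditionForTargetProperty} --- together with the contour identity $T'=-t_{r_1}/t_{r_2}$, I would evaluate the signs of $\gamma(A)-\Gamma$ and $\gamma(B)-\Gamma$. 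They turn out to agree exactly when $T'>0$ (the same sign as $R'$) and to be opposite when $T'<0$. Thus in (a)--(b) both endpoints lie on the same side of $\Gamma(\bm p)$, so the monotone range of $\gamma$ does not enclose $\Gamma(\bm p)$, whereas in (c) the two endpoints straddle it.

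Finally I would read off $c_1^*$. In the straddling case (c), a monotone $\gamma$ passing from one side of $\Gamma(\bm p)$ to the other forces $|\Gamma(\bm p)-\gamma(\theta^*_{\bm c})|$ to fall and then rise, so the minimizer is interior, $c_1^*\in(0,+\infty)$ (continuity of $t$ along the arc even yields exact recovery $\gamma=\Gamma$ at the crossing). In (a)--(b), where both endpoints are on one side, I would use the sign of $d\gamma/dc_1=-\mathrm{sign}(t_{r_2})\,\mathrm{sign}(R'-T')$, the extra minus sign recording that increasing $c_1$ \emph{decreases} $r_1$ along this arc: when $R'<T'$ the value $\gamma$ is driven toward the common side's boundary $\Gamma(\bm p)$, so the distance decreases monotonically and $c_1^*=+\infty$ (case a); when $R'>T'$ it is driven away, the nearest point is $B$, and $c_1^*=0$ (case b).

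The hard part will be carrying out the sign bookkeeping of the second paragraph consistently with the orientation reduction: I must pin down $\mathrm{sign}(t_{r_1})$ and $\mathrm{sign}(t_{r_2})$, the placement of $A$ and $B$ relative to $\hat{\bm r}$, and the identification of $c_1=0,+\infty$ with the two endpoints, and then check that all four orientation choices reproduce the labeled conclusions rather than interchanging (a) and (b). A secondary subtlety is the meaning of ``moves closer/farther'': since $\gamma(A),\gamma(B)\neq\Gamma(\bm p)$ in general, this must be read as monotone decrease (resp.\ increase) of $|\Gamma(\bm p)-\gamma(\theta^*_{\bm c})|$, which is precisely what the ``same side of $\Gamma$ plus monotonicity'' structure supplies.
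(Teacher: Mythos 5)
Your proposal follows essentially the same route as the paper: invoke Theorem~\ref{theorem-conditionForSubproperties-2D} for condition (A), Theorem~\ref{theorem-conditionForTargetProperty} for condition (B), conclude monotonicity of $\gamma(\theta^*_{\bm c})$ in $c_1$ via Theorem~\ref{theorem-general}, and then compare the values of $t$ at the two trajectory endpoints ($c_1=0$ and $c_1=+\infty$) with $t(\hat{\bm r})$ to separate the three cases, and your sign bookkeeping checks out. The only caveat is your parenthetical claim of exact recovery $\gamma=\Gamma$ at the crossing in case (c): the paper explicitly remarks that $\Gamma(\bm p)$ may \emph{not} be reachable under the stated assumptions, since nothing guarantees that $c_1\mapsto \bm r(\theta^*_{\bm c})$ sweeps the arc continuously.
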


\textit{Sketch of the proof.} The assumption (i) makes the condition (A) satisfied for all $\bm p$ according to Theorem \ref{theorem-conditionForSubproperties-2D}. Obviously, each of the condition of (a) (b) (c) can guarantee that $R'(r_1)- T'(r_1; t_0)$ keeps the sign unchanged. Plus the assumption (ii), we can prove that the condition (B) is satisfied according to Theorem \ref{theorem-conditionForTargetProperty}. Following from Theorem \ref{theorem-general}, we see that $ \gamma (\theta^*_{\bm c}(\bm p))$ is monotone in $c_1$ with fixed $c_2$, for all $\bm p$. Furthermore, each case of (a) (b) (c) involves one possible order among the values of $t$ at $ \bm r (\theta^*_{\bm c}) $ with $c_1=0$, at $ \bm r (\theta^*_{\bm c}) $ with $c_1=+\infty$, and at $\hat{\bm r}$. Then, the final conclusion holds. The complete proof is given in Appendix \ref{appendix-proofs-2D-general}. 

We give the illustrative example for each case of (a) (b) (c) below in Figure \ref{fig_dR_dT_consistent} to help readers understand the theory. In the case (a), for the $\hat{r}(\bm p)$ located at the bottom right corner, increasing $c_1$ will make $\bm r(\theta^*_{\bm c})$ move from the bottom left to the top right, and consequently $\gamma (\theta^*_{\bm c}) = t( \bm r(\theta^*_{\bm c}) )$ moves towards the best. In the case (b), increasing $c_1$ makes $\gamma (\theta^*_{\bm c}) $ move far away from the best. In the case (c), increasing $c_1$ makes $\gamma (\theta^*_{\bm c}) $ move first towards the best and then far away from the best.

\begin{figure}[!htbp]
  \centering
  % Requires \usepackage{graphicx}
  \subfigure[]{\includegraphics[width=0.3\textwidth]{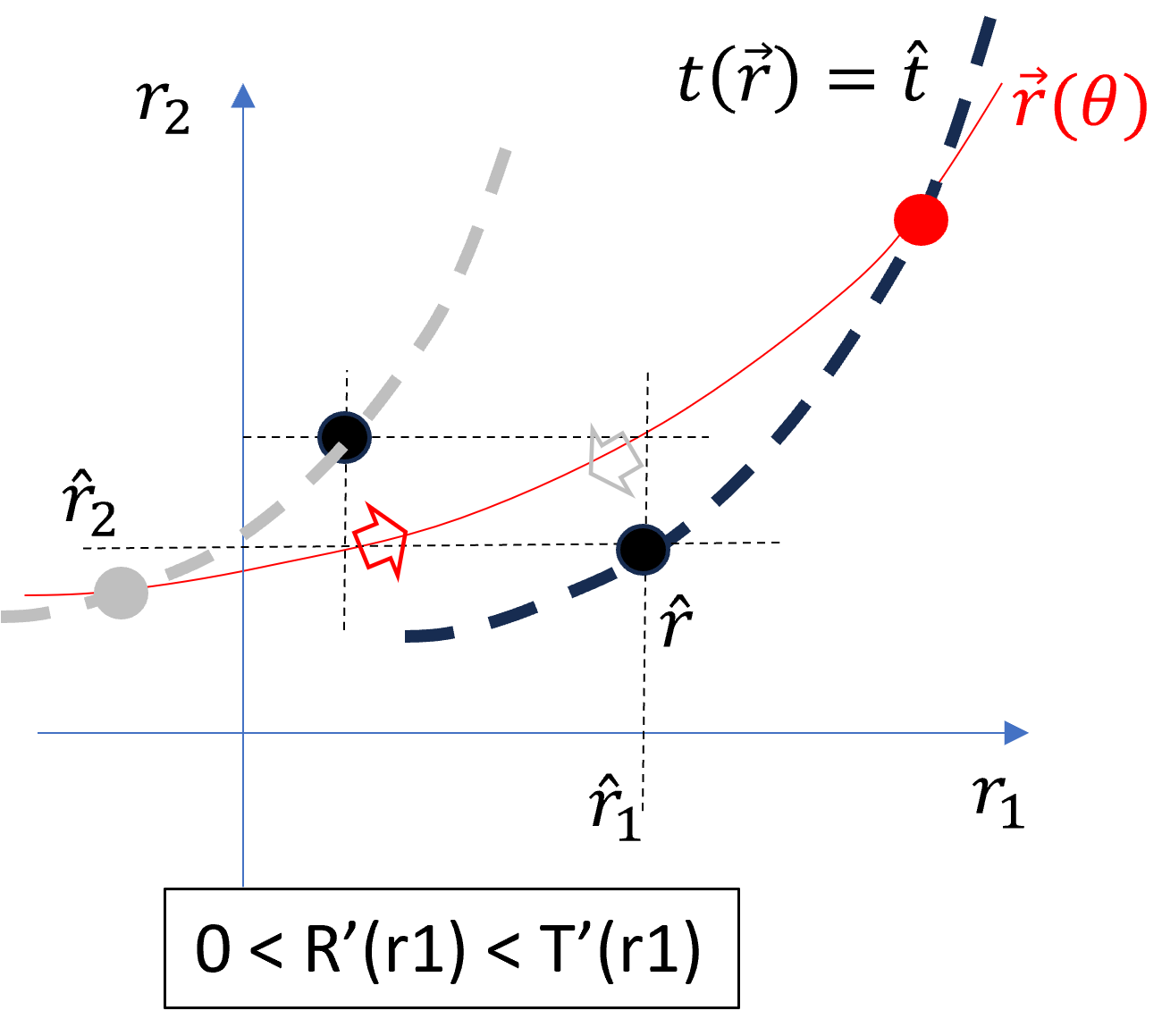}}
  \subfigure[]{\includegraphics[width=0.3\textwidth]{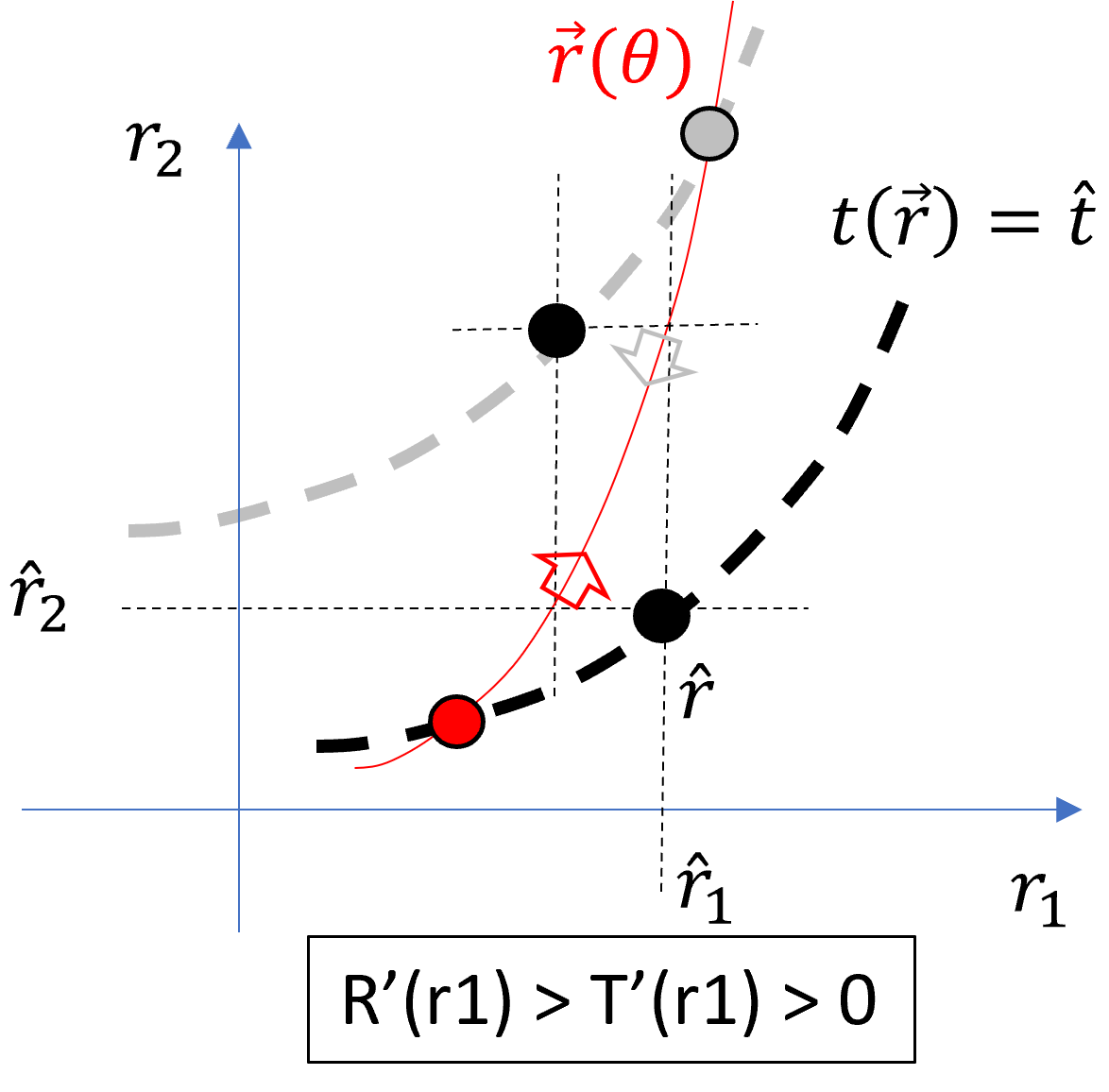}}
  \subfigure[]{\includegraphics[width=0.3\textwidth]{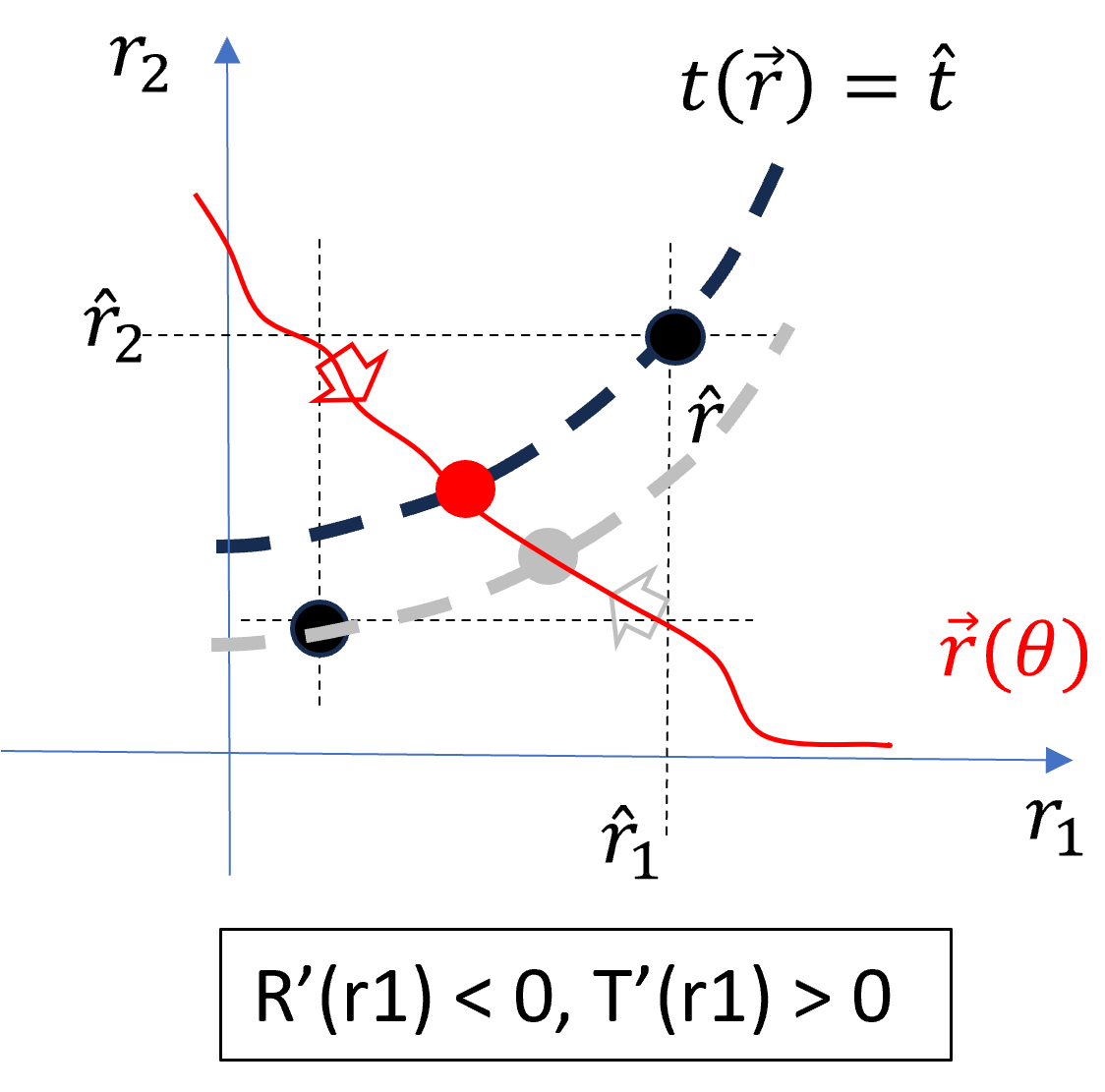}}
  \caption{ Example illustrations for Theorem \ref{theorem-2D-general}, where each sub-figure corresponds to each sub-case specified in the theorem. Recall that here $\bm r(\theta)= (r_1(\theta), r_2(\theta))$ is equivalent to a function $r_2 = R(r_1)$ and the contour $\{\bm r | t(\bm r) = t_0\}$ at any level $t_0$ is denoted by a function $ T(r_1;t_0)$. In each sub-figure, we plot two different $\hat{r}(\bm p)$, but it is easy to verify that the location of $\hat{r}(\bm p)$ does not affect the analysis.} 
  \label{fig_dR_dT_consistent}

\end{figure}

\begin{remark}
    First, for $M=2$, if $c_1^* = 0$ for some $c_2$, then $c_1^* = 0$ for all $c_2$ and $c_2^* = +\infty$ for all $c_1$. It is because the total loss $\mathcal{L}_{\bm c}$ is equivalent to the normalized loss 
    $
        \frac{1}{c_1 + c_2} \mathcal{L}_{\bm c} = \frac{c_1}{c_1 + c_2} L_1 + (1- \frac{c_1}{c_1 + c_2}) L_2
    $
    in the sense that they always have the same minimizer. Thus, the value of $c_1$ intrinsically determines the normalized value of $c_2$. But for $M>2$, it is not guaranteed. Second, for the case (c), notice that the true value of the target property $\Gamma(\bm p) = t(\hat{\bm r}(\bm p) )$ may not be reachable by $c_i^*$ under the current assumptions.  
\end{remark}

\subsection{Correspondence to empirical results} \label{section-2D-empiricalAnalysis}

Theorem \ref{theorem-2D-general} falls into the framework described in Theorem \ref{theorem-general}, and specifies what structure of the parametric sub-property model $\bm r(\theta)$ and the target property link function $\gamma = t(\bm r) $ can satisfy the condition (A) and (B) in Theorem \ref{theorem-general}. Now we connect the theory to the empirical results shown in simulation studies to see how the theorem works in practice. 

In Section \ref{section-initialSimulations}, for the variance target property, we have observed that all $\Gamma_{c_i}$ curves, that are, the image of $\gamma (\theta_{\bm c}^*)$, are monotone among different tested models of distributions. Furthermore, the value of the best weight $c_i^*$ is $0$ or $+\infty$ for each $i$. Now, we plot the figure for the contours of the target property link function and also the parametric model of sub-properties in Figure \ref{fig-var-for-q-model}, and see how the figures for different parametric models match to one of the (a) (b) (c) cases described in Theorem \ref{theorem-2D-general}.

\begin{figure}[!htbp]
  \centering
  % Requires \usepackage{graphicx}
  \subfigure[$ Poisson(\theta) \approx N(\theta, \theta)$]{\includegraphics[width=0.3\textwidth]{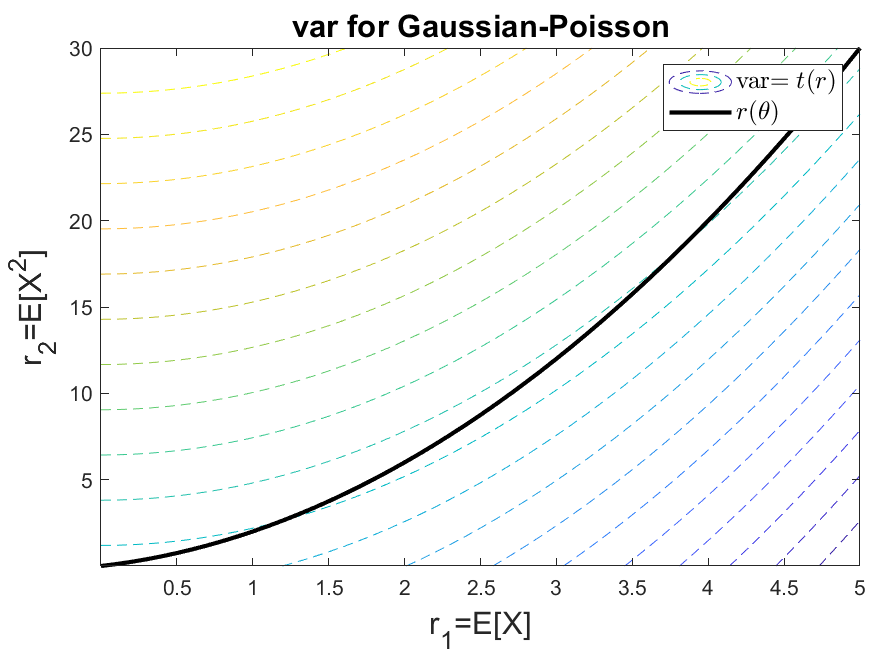}}
  \subfigure[$ \chi^2 (\theta) \approx N(\theta, 2 \theta)$]{\includegraphics[width=0.3\textwidth]{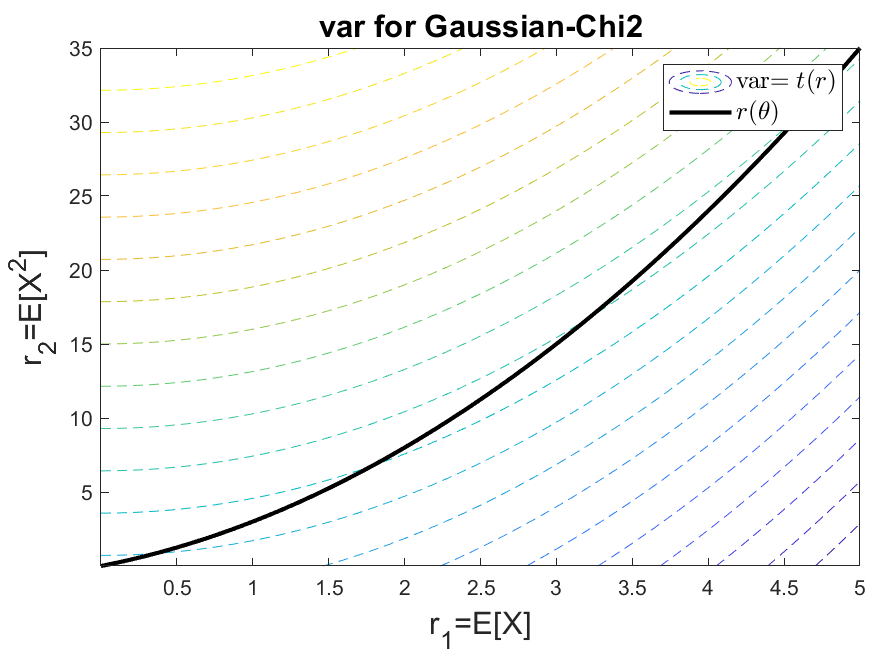}}
  \subfigure[$ Exp(\frac{1}{\theta}) \approx N(\theta, \theta^2 )$]{\includegraphics[width=0.3\textwidth]{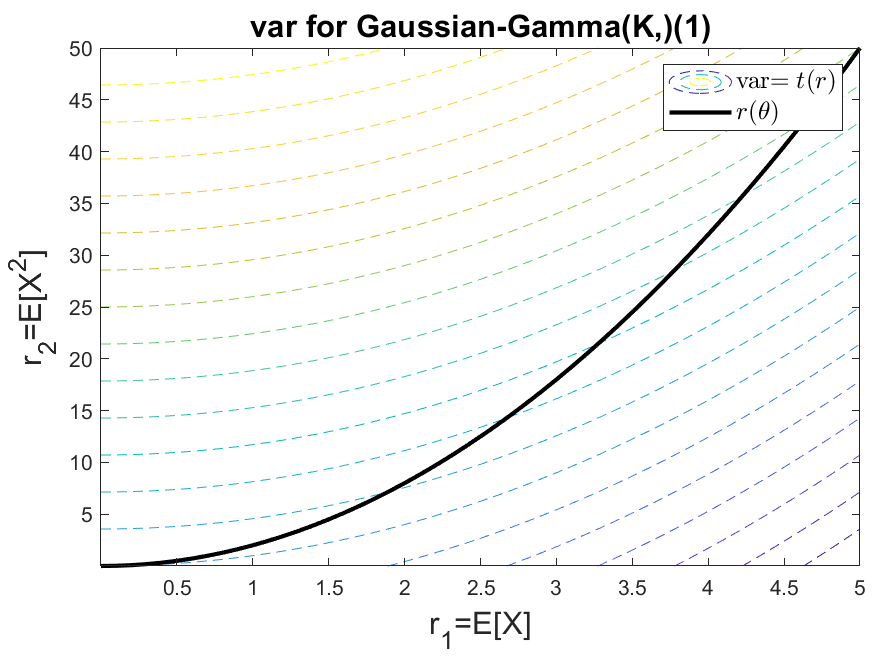}}
  \subfigure[$ Gamma(2, \theta) \approx  N(2 \theta, 2 \theta^2 )$]{\includegraphics[width=0.3\textwidth]{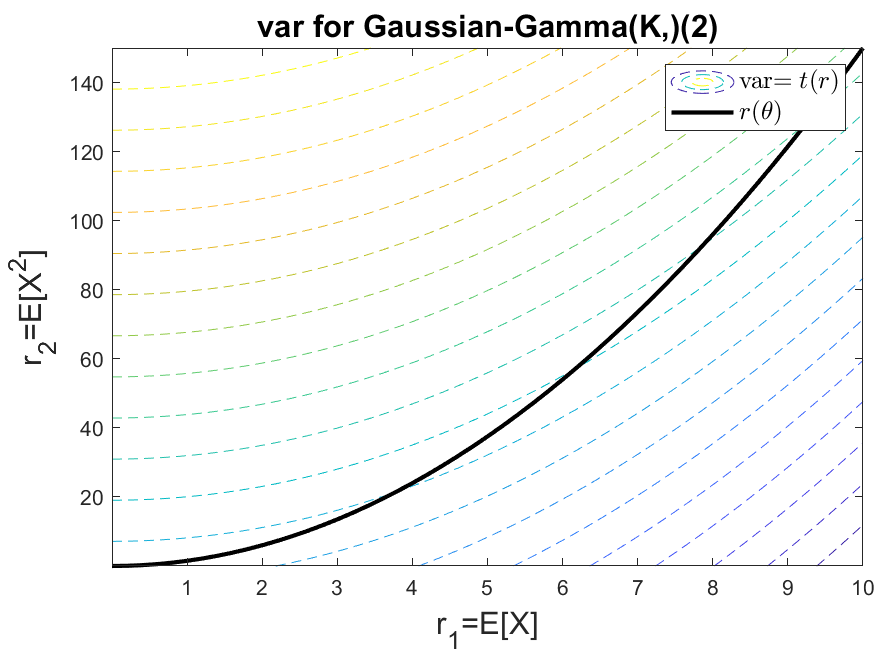}}
  \subfigure[$ B(1,\theta) $ $\approx$ $ N(\theta, \theta (1-\theta) )$]{\includegraphics[width=0.3\textwidth]{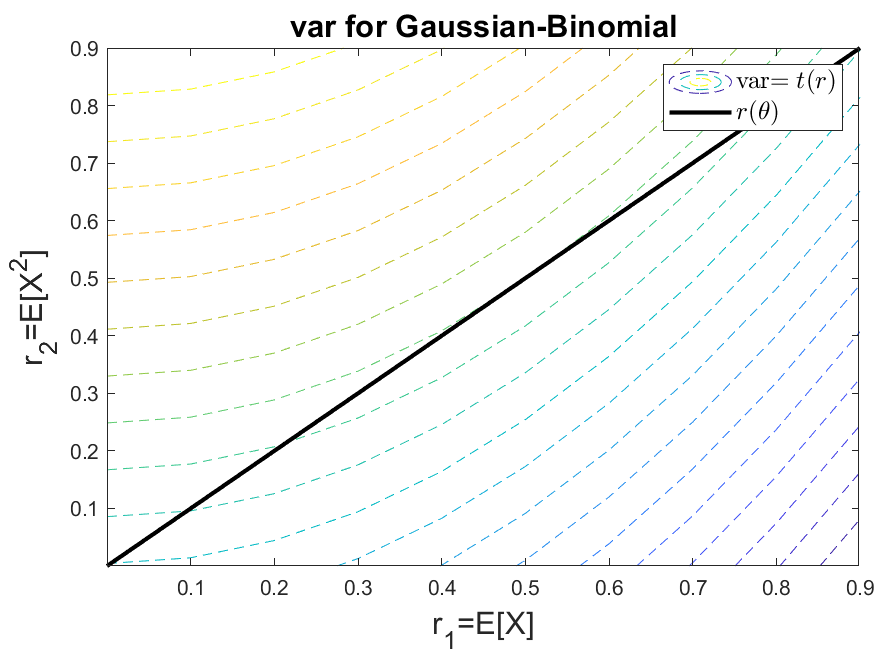}}
  \subfigure[$ N(\theta, 2-2 \theta )$]{\includegraphics[width=0.3\textwidth]{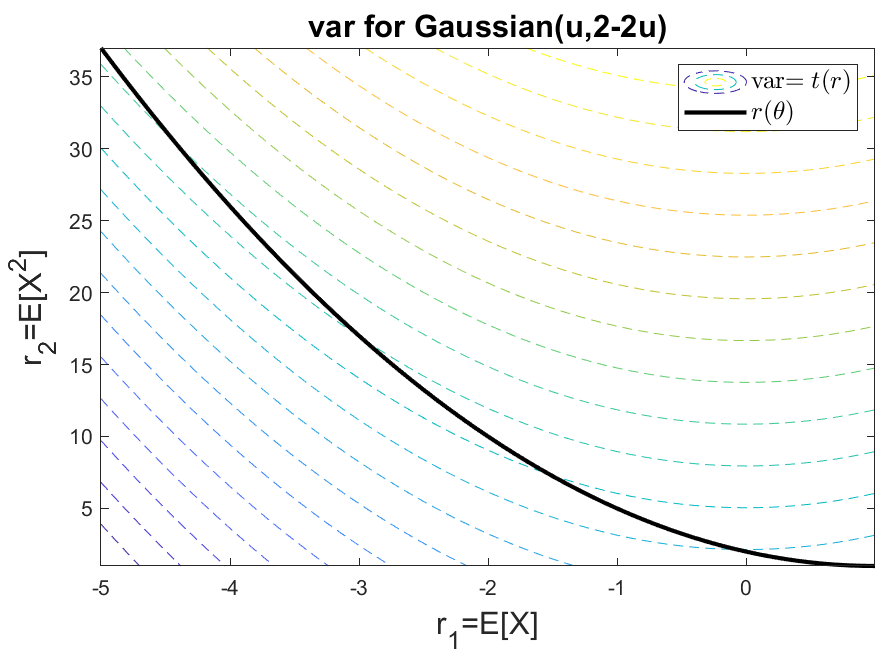}}
  
  \caption{The display of different types of model assumptions for $\bm r(\theta)$ and the contours of the link function $\gamma = t(\bm r) = r_2 - r_1^2$ for $\Gamma = var(\cdot)$. The black curves are the models of $\bm r(\theta)$ while the colored dashed curves are the contours of the link function. }
  \label{fig-var-for-q-model}

\end{figure}

In Figure \ref{fig-var-for-q-model}, examples (a)-(d) corresponds to the situation of (b) in Theorem \ref{theorem-2D-general} and Figure \ref{fig_dR_dT_consistent}, where $R'(r_1) > T'(r_1; t_0) > 0$ for all $t_0$ and thus $\gamma (\theta_{\bm c}^*) = t (\bm r(\theta_{\bm c}^*) )$ always move farther away from $\Gamma (\bm p) = t (\hat{\bm r}(\bm p)) $ for all $\hat{\bm r}(\bm p)$ as increasing $c_1$. Example (e) is special, where the distribution model is Bernoulli and the relationship between $R'(r_1) $ and $ T'(r_1; t_0)$ are not consistent on the whole domain. In the area around $r_1 = \theta = 0.1$, $R'(r_1) > T'(r_1; t_0)$, while in the area around $r_1 = \theta = 0.9$, $R'(r_1) < T'(r_1; t_0)$. In the area around $r_1 = \theta = 0.5$, the link function would not be monotone. Thus, how $t (\bm r(\theta_{\bm c}^*))$ behaves as increasing $c_1$ depends on where the $\hat{\bm r}$ is. Appendix \ref{appendix-section-experimentsForVariance} provides sufficient supporting simulated results. That being said, we can see here that if $\hat{\bm r} (\bm p)$ is relatively close to the sub-property model curve $\bm r (\theta)$, then the possible area of $\bm r(\theta_{\bm c}^*)$ will not cover the area of $r_1 = \theta = 0.5$ with a high probability. This is why the monotonicity pattern of $\gamma (\theta_{\bm c}^*)$ is still common with different $\hat{\bm r} (\bm p)$ in the Bernoulli model setting. Example (f) corresponds to the situation of (c) in Theorem \ref{theorem-2D-general}, but it uses an artificial distribution. So, our theory on 2-D cases perfectly explains how the experimental results occur with different settings in the simulation studies.

%As for why many examples of commonly used distribution models satisfy the situation of (b) in Theorem \ref{theorem-2D-general}, we guess that one possible reason is that all of them can be well-approximated by a special case of the Gaussian distribution, as indicated in the captions of Figure \ref{fig-var-for-q-model}. In other words, they are in the same high-level distribution family basically. 

%%%%%%%%%%%%%%%%%%%%%%%%%%%%%%%%%%%%%
%%%%%%%%%%%%%%%%%%%%%%%%%%%%%%%%%%%%%
%%%%%%%%%%%%%%%%%%%%%%%%%%%%%%%%%%%%%
\section{Extensive analysis for higher-dimensional spaces} \label{section-3D}

In the previous section, we have studied in 2-D space what structure of the parametric model of sub-properties and the target property link function can satisfy the condition (A) and (B) in Theorem \ref{theorem-general} in Section \ref{section-general-theory}. In this section, we will discuss the situation in higher-dimensional spaces with $M>2$ in Problem \ref{main_problem}. 

\subsection{Theoretical results}
We give the theoretical results in higher-dimensional cases as follows. 

\begin{figure}[!htbp]
  \centering
  % Requires \usepackage{graphicx}
  %\subfigure[An example of $\bm r(\bm \theta)$ in 3-D]{\includegraphics[width=0.3\textwidth]{model_error_sign_c_illustration(3D).png} \label{fig_model_error_sign_c_illustration_3D} }

  \subfigure[An example of $\bm r(\bm \theta)$ in 3-D satisfying the assumption in Theorem \ref{theorem-conditionForSubproperties-MD}. ]{ \includegraphics[width=0.45\textwidth]{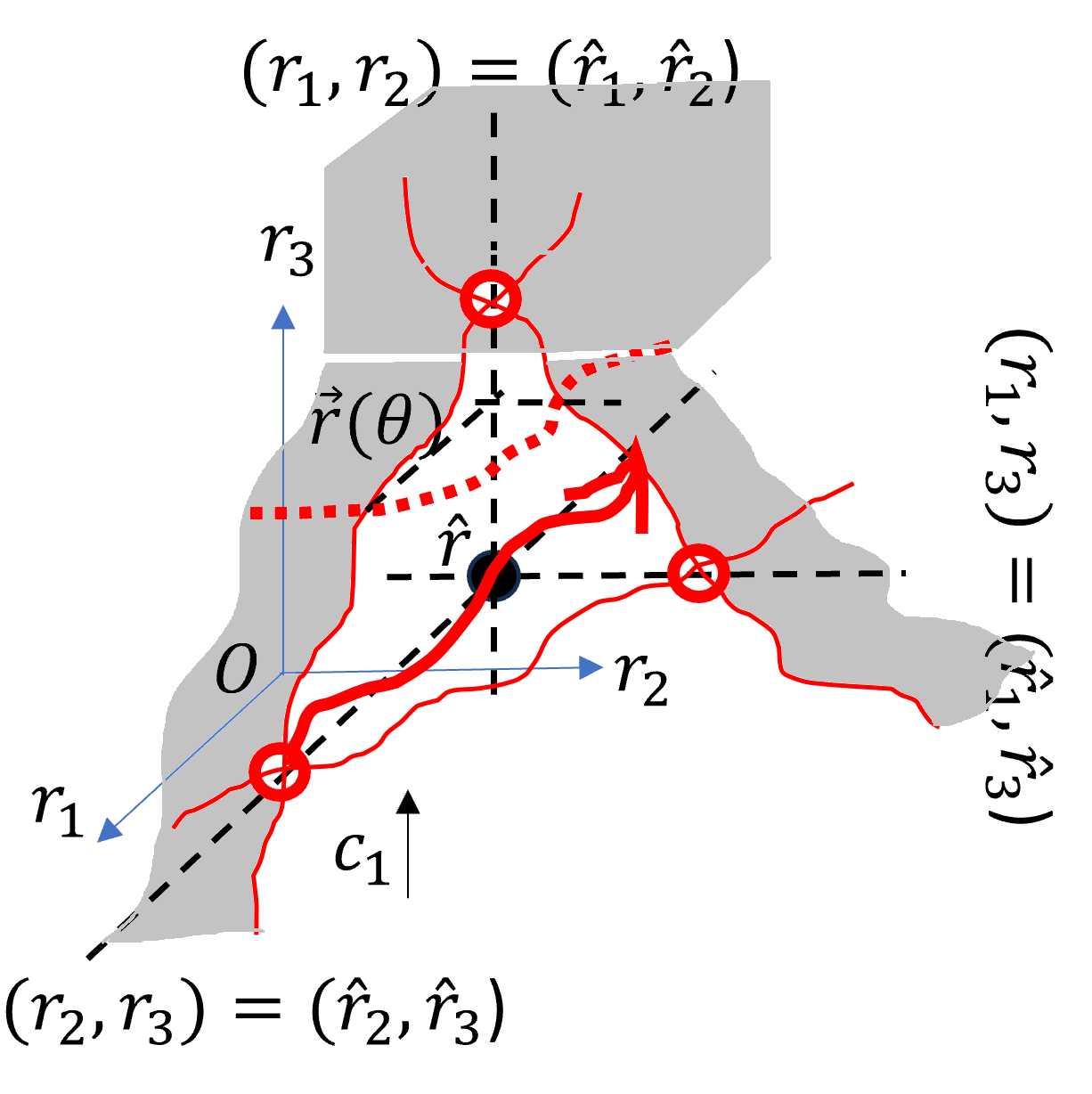} 
  \label{fig_model_error_sign_c_illustration_proof}}
  \quad
  \subfigure[An example of linear cases in 3-D.]{\includegraphics[width=0.45\textwidth]{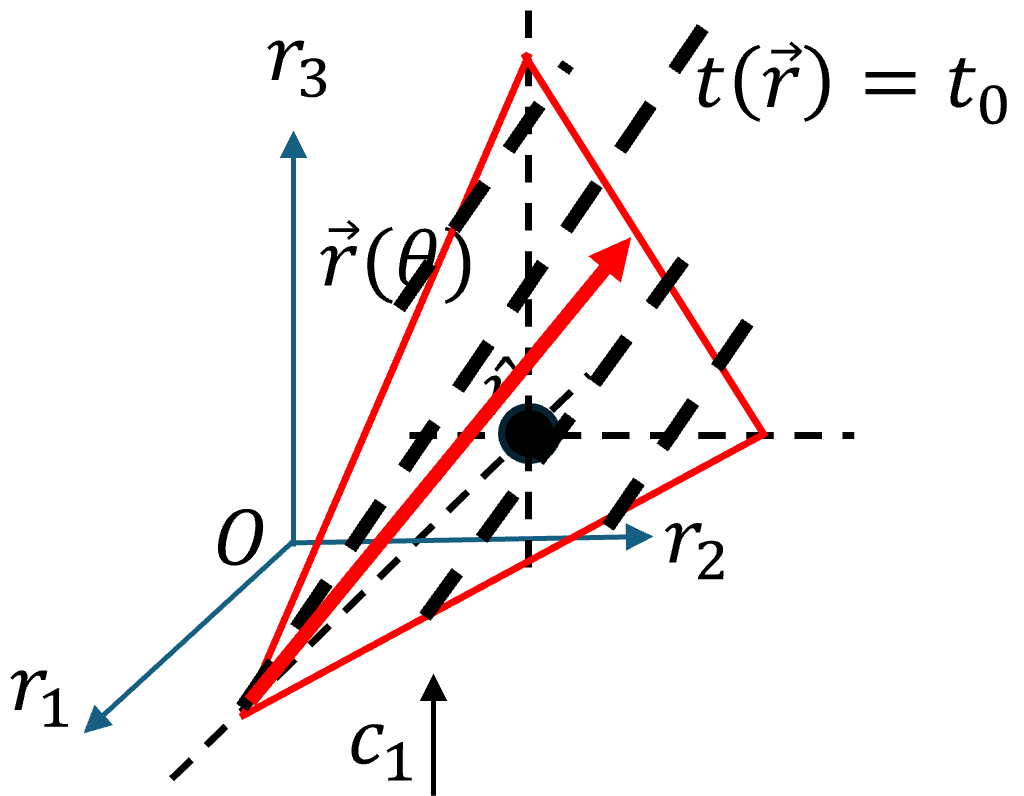} \label{fig_property_model_c_linear_3D}}
  \caption{ Illustrative examples in 3-D where $\mathcal{R}_{\Theta}$ is a surface and $\bm \theta \in \mathbb{R}^2$. } 

\end{figure} 

First, to check what structure of $\mathcal{R}_{\Theta}$ with $M>2$ will satisfy the condition (A), we try embedding the analysis for 2-D cases into any $M$-D situation and then have the following theorem. 

\begin{theorem}\label{theorem-conditionForSubproperties-MD}
For Problem \ref{main_problem} with $ M \geq 2 $, suppose that $\bm \theta^*_{\bm c}(\bm p) $ exists for all $\bm p$ and $\bm c$ and all sub-losses are accuracy-rewarding. If the slice of $\bm r(\bm \theta)$ with fixing any $M-2$ components, that is, the set $\{ \bm r (\bm \theta) | r_{-(i,j)}(\bm \theta) = r'_{-(i,j)} \}$ for all $i<j$ and all $r'_{-(i,j)}$, is always a strictly monotone curve, then $\mathcal{T}_{c_i} (\bm p)$ is in the same orthant centered at $\hat{\bm r} (\bm p) $ for all $i$ and $c_{-i}$.
\end{theorem}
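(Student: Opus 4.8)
The plan is to prove the stronger statement that the \emph{entire} set of optimal points $\{\bm r(\bm\theta^*_{\bm c}(\bm p)) : \bm c \in \bar{\mathbb{R}}_+^M\}$ lies in a single closed orthant centered at $\hat{\bm r}(\bm p)$; since each trajectory $\mathcal{T}_{c_i}(\bm p)$ is a subset of this set, the conclusion is then immediate. Throughout I fix $\bm p$, abbreviate $\hat{\bm r} = \hat{\bm r}(\bm p)$, and use that each accuracy-rewarding $L_k$ is strictly decreasing as $r_k$ approaches $\hat r_k$ from either side, so minimizing $L_k$ is the same as moving $r_k$ toward $\hat r_k$.

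First I would record the Pareto-domination principle, the natural generalization of the domination argument in the proof of Theorem \ref{theorem-conditionForSubproperties-2D}. Call a point $P = \bm r(\bm\theta)$ \emph{dominated} if there is some $Q = \bm r(\bm\theta')$ with $L_k(r_k(Q)) \le L_k(r_k(P))$ for every $k$ and strict inequality for at least one $k$. Then for every $\bm c \in \mathbb{R}_{++}^M$ we get $\mathcal{L}_{\bm c}(\bm r(\bm\theta'); \bm p) < \mathcal{L}_{\bm c}(\bm r(\bm\theta); \bm p)$, so $P$ is never optimal. Hence every optimal point is undominated, i.e. Pareto-optimal with respect to componentwise closeness to $\hat{\bm r}$.

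The engine of the argument is then to \emph{embed the 2-D analysis into each coordinate slice}. Given an optimal, hence undominated, point $P$ and any pair $i<j$, consider the slice $\mathcal{S}_{ij}$ through $P$ obtained by fixing $r_{-(i,j)}$ at their values at $P$. By hypothesis $\mathcal{S}_{ij}$ is a strictly monotone curve in the $(r_i,r_j)$-plane, and along it only $L_i$ and $L_j$ vary while all other sub-losses stay constant. Consequently $P$ must be undominated \emph{within} $\mathcal{S}_{ij}$, for otherwise a dominating point of the slice would dominate $P$ globally. Applying the domination picture in the proof of Theorem \ref{theorem-conditionForSubproperties-2D} to $\mathcal{S}_{ij}$ with target $(\hat r_i,\hat r_j)$, the pair $(r_i(P),r_j(P))$ is confined to the single quadrant of $(\hat r_i,\hat r_j)$ cut out by the two axis-intersections of $\mathcal{S}_{ij}$, which pins down the signs of $r_i(P)-\hat r_i$ and $r_j(P)-\hat r_j$. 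Running this over all pairs assigns $P$ a full sign vector, i.e. places it in one orthant centered at $\hat{\bm r}$; Figure \ref{fig_model_error_sign_c_illustration_proof} illustrates the resulting confinement in 3-D.

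The hard part will be to show this orthant is the \emph{same} for every optimal $P$, so that it does not drift as $\bm c$ (in particular $c_i$) varies. The difficulty is a mild circularity: the confining quadrant of $\mathcal{S}_{ij}$ is determined by the position of $(\hat r_i,\hat r_j)$ relative to that slice, yet the slice itself depends on the remaining coordinates of $P$, which move with $\bm c$. This is exactly where the hypothesis is needed for \emph{all} pairs simultaneously: the monotonicity of the slices involving the fixed coordinates rigidly links neighboring slices and prevents their axis-intersections from crossing $(\hat r_i,\hat r_j)$ as the fixed coordinates vary, so the ``correct side'' of each coordinate $k$ is well-defined independently of the particular optimal point. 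I would make this precise either (i) directly, by showing that any point whose sign in some coordinate $k$ disagrees with this globally defined correct side can be strictly improved by sliding along a suitable $(k,j)$-slice toward $\hat r_k$ without worsening the other coordinates, hence is dominated and never optimal; or, as a fallback, (ii) by a connectedness argument, since the optimizer set over the connected region $\{\bm c\}$ is connected and a sign of $r_k-\hat r_k$ can flip only through an optimal point lying on $r_k=\hat r_k$, a configuration the slice analysis already excludes except at the orthant boundary. Establishing the global consistency in (i) is the main obstacle; once it is in hand, confinement of all optimal points, and therefore of each $\mathcal{T}_{c_i}(\bm p)$, to one orthant follows at once.
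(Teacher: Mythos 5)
Your setup matches the paper's in spirit: the Pareto-domination principle and the embedding of the 2-D analysis of Theorem \ref{theorem-conditionForSubproperties-2D} into the $(r_i,r_j)$-slices are exactly the engine the paper uses (its proof slices $\mathcal{R}_{\Theta}$ orthogonal to one axis at a time and applies the 2-D domination picture to each sliced curve). The problem is that you stop at precisely the step that carries the content of the theorem. Your slice argument only shows that each optimal point $P$ lies in \emph{some} orthant, the one cut out by the axis-intersections of the slices through $P$; whether that orthant is the same for all optimal points as $\bm c$ varies is the whole question, and you explicitly leave it open (``the main obstacle''). Neither of your proposed repairs is viable as stated: route (ii) presumes the set $\{\bm r(\bm \theta^*_{\bm c}(\bm p))\}$ is connected, which nothing in the hypotheses guarantees (the $\arg\min$ need not be unique or vary continuously in $\bm c$), and route (i) presupposes the ``globally defined correct side'' of each coordinate, whose existence is exactly what has to be proved.

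The paper closes this gap with an ordered, inductive slicing rather than a pointwise one. Fix a slicing axis, say $r_M$, and let $\varepsilon_k$ denote the intercept of $\bm r(\bm \theta)$ on axis-$r_k$ centered at $\hat{\bm r}$. Slices $\{ \bm r(\bm\theta) \mid r_M(\bm\theta) = r'_M\}$ with $r'_M$ beyond $\hat r_M + \varepsilon_M$ are dominated outright by the intersection point $A_M$; for slices on the other side, the strict monotonicity of the cross-slices (those involving coordinate $M$) forces the intercepts $\varepsilon'_k(r'_M)$ of each sliced curve to keep the sign of $\varepsilon_k$ for every $k$. This sign-preservation is the consistency statement you are missing, and it is where the hypothesis that the slices are strictly monotone for \emph{all} pairs $i<j$ and \emph{all} $r'_{-(i,j)}$ is actually consumed: it rigidly ties the quadrants of neighboring slices together. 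Repeating over all slicing directions and inducting on $M$ pins every undominated point into the single orthant $\{\bm r \mid \operatorname{sign}(r_k - \hat r_k) = \operatorname{sign}(\varepsilon_k)\ \forall k\}$. If you add a lemma establishing that intercept-sign preservation along each slicing direction, your route (i) goes through and essentially reproduces the paper's argument; without it the proof is incomplete.
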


%\begin{lemma} \label{theorem-conditionForSubproperties-3D}
%For Problem \ref{main_problem} with $ M =3 $, suppose that $\theta^*_{\bm c}(\bm p) $ exists for all $\bm p$ and $\bm c$ and all sub-losses are accuracy-rewarding. If the slice of $\bm r(\theta)$ with fixing any coordinate component, that is, the set $\{ \bm r (\bm \theta) | r_{j}(\bm \theta) = r'_{j} \}$ for all $j$ and all $r'_{j}$,  is always a strictly monotone curve, then $\mathcal{T}_{c_i} (\bm p)$ is in the same orthant centered at $\hat{\bm r} (\bm p) $ for all $i$ and $c_{-i}$.
%\end{lemma}

\begin{proof}
    We can use the induction method to prove Theorem \ref{theorem-conditionForSubproperties-MD}. In order to visualize the basic idea of the induction step and to make it easier for readers to understand the general proof, we will discuss first how to extend the 2-D theory into 3-D theory.  
    
    Figure \ref{fig_model_error_sign_c_illustration_proof} shows an illustrative example of $\bm r(\bm \theta)$ in 3-D satisfying the assumption of Theorem \ref{theorem-conditionForSubproperties-MD}. The axis-$r_i$ centered at $\hat{\bm r} $ corresponds to the precise value of $\hat{r}_{-i}$ for each $i$. Based on the assumption, non-trivially there would be three different intersections between $\mathcal{R}_{\Theta}$ and the three axes centered at $\hat{\bm r} (\bm p)$. Define the intercept on each axis-$r_i$ by $\varepsilon_i$, and then the global coordinates of the intersection point with axis-$r_i$ would be $A_i \triangleq (\hat{r}_i + \varepsilon_i, \hat{r}_{-i} ) $.\footnote{Here the representation of the coordinates are conceptual, the real order of coordinates should be re-organized such that the indices of $\hat{r}$ are increasing such as $(\hat{r}_{1}, \hat{r}_2 + \varepsilon_2, \hat{r}_{3} ) $. } Without loss of generality, we assume that $\varepsilon_i > 0$ for all $i$, which is true in Figure \ref{fig_model_error_sign_c_illustration_proof}.

    Now we slice $\mathcal{R}_{\Theta}$ orthogonal to axis-$r_3$ and then get a curve on each slicing plane. Denote the slicing plane at any $r'_3$ by $H_{r'_3} = \{ \bm r | r_3 = r'_3 \}$ and the sliced curve by $S_{r'_3} = \{ \bm r (\bm \theta) | r_3(\bm \theta) = r'_3 \}$. According to the current assumption, $S_{r'_3}$ is strictly monotone for all $r'_3$. We have the following two claims about $S_{r'_3}$.

    \begin{itemize}
        \item[(1)] When $r'_3 \geq \hat{r}_3 + \varepsilon_3$, the sliced curve $S_{r'_3}$ is above the top intersection $A_3$, and thus is dominated by $A_3$ and can be ruled out for the optimal estimation for all $\bm c$. This can be proved with the similar analysis in the proof of Theorem \ref{theorem-conditionForSubproperties-2D}.
        \item[(2)] When $r'_3 < \hat{r}_3 + \varepsilon_3$, the sliced curve $S_{r'_3}$ is strictly below the top intersection $A_3$ such as the red dotted curve in Figure \ref{fig_model_error_sign_c_illustration_proof}. Use $\varepsilon'_k (r'_3)$ for $k \in \{1,2\}$ to denote the intercept of the sliced curve $S_{r'_3}$ on each axis of the plane $H_{r'_3}$ centered at $(\hat{r}_1, \hat{r}_2, r'_3)$. We can verify that $\varepsilon'_k (r'_3)$ keeps the same sign of $\varepsilon_k$ for all $k$ due to the strict monotonicity assumption specified in the theorem. According to Theorem \ref{theorem-conditionForSubproperties-2D}, the shaded area of each slice is the impossible area of $\bm r (\bm \theta_{\bm c}^*)$ for all $\bm c$. Then the possible area of $\bm r (\bm \theta_{\bm c}^*)$ for all $\bm c$ would be in the portion $\{ \bm r | r_3 = r'_3, r_k(\bm \theta) \geq \hat{r}_k, k=1,2 \}$. 
    \end{itemize}

    Combining (1) and (2) above, we see that the possible area of $\bm r (\bm \theta_{\bm c}^*)$ for all $\bm c$ would be in the portion $\{ \bm r (\bm \theta) | r_k(\bm \theta) \geq \hat{r}_k, k =1,2 \}$.
    
    Similarly, we can show that the possible area of $\bm r (\bm \theta_{\bm c}^*)$ for all $\bm c$ would be in the portion $\{ \bm r (\bm \theta) | r_k(\bm \theta) \geq \hat{r}_k, k =2,3 \}$, with slicing $\mathcal{R}_{\Theta}$ orthogonal to axis-$r_1$. In one word, $\bm r (\bm \theta_{\bm c}^*)$ for all $\bm c$ can be proved to stay in the portion $\{ \bm r (\bm \theta) | r_k(\bm \theta) \geq \hat{r}_k, k \neq j \}$ with slicing $\mathcal{R}_{\Theta}$ orthogonal to axis-$r_j$ for each $j$. Therefore, in total $\bm r (\bm \theta_{\bm c}^*)$ for all $\bm c$ would be in the portion $\{ \bm r (\bm \theta) | r_k(\bm \theta) \geq \hat{r}_k, \forall k \}$.

    We can extend the above trick of proof to the general induction step from $(m-1)$-dimensional cases to $m$-dimensional cases with $m \geq 3$. We leave the complete proof to Appendix \ref{appendix-proofs-MD-subproperties}
\end{proof}

Now we check the structure of the target property link function $\gamma = t (\bm r) $ along the sub-property trajectories. With one more dimension, there are more possibilities for sub-property trajectories. As Figure \ref{fig_model_error_sign_c_illustration_proof} shows, as increasing $c_1$ from $0$ to $+\infty$, the trajectory $\mathcal{T}_{c_1}$ goes from the left intersection with axis-$r_1$ (corresponding to the precise estimation of $\hat{r}_2$ and $\hat{r}_3$ ) to the hyperplane $\{ \bm r | r_3 = \hat{r}_3 \}$ (corresponding to the precise estimation of $\hat{r}_3$). But the endpoint of $\mathcal{T}_{c_i}$ would depend on $c_{-1}$. Besides depending on $c_{-i}$, the specific trajectory $\mathcal{T}_{c_i}$ also relies on the whole optimization problem, which is hard to analyze. Consequently, for the condition (B) about the behavior of the target property along sub-property trajectories, it is not straightforward to extend the 2-D theory into higher-dimensional cases. 

Due to these challenges, we study the linear special case, where both $\mathcal{R}_{\Theta}$ and $\gamma = t(\bm r)$ are linear. We will also assume that the trajectories $\mathcal{T}_{c_i} (\bm p))$ are linear for all $\bm p$, $i$ and $c_{-i}$, because it is unknown whether the linearity of $\bm r(\bm \theta)$ can guarantee this or not. It is intuitive that in linear situations the target property $\gamma = t(\bm r)$ is monotone along the linear trajectory $\mathcal{T}_{c_i} (\bm p)$ for all $i$. We have the following theorem about linear cases.

\begin{theorem}\label{theorem-linearCases}
Suppose that $\bm \theta^*_{\bm c}(\bm p) $ exists for all $\bm p$ and $\bm c$, and that $t(\bm r)$ and $\bm r(\bm \theta)$ are linear. Additionally, suppose that trajectories $\mathcal{T}_{c_i} (\bm p)$ are also linear for all $\bm p$, $i$, and $c_{-i}$. Then, $\gamma (\bm \theta^*_{\bm c}(\bm p) ) = t (\bm r (\bm \theta^*_{\bm c}(\bm p) )$ is monotone with increasing $c_i$ for all $\bm p$, $i$, and $c_{-i} $, and is constant if furthermore $\mathcal{T}_{c_i} (\bm p)$ is on an isosurface of $\gamma (\bm r) = t (\bm r)$. 
\end{theorem}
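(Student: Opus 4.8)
The plan is to reuse the two-part decomposition behind Theorem~\ref{theorem-general}: monotonicity of $\gamma(\bm\theta^*_{\bm c}(\bm p))$ in $c_i$ factors into (A) a monotone traversal of the trajectory $\mathcal{T}_{c_i}(\bm p)$ as $c_i$ grows and (B) monotone behaviour of $t$ along that trajectory, and in the linear setting both pieces become transparent. First I would write the link function as $t(\bm r)=\bm a^\top\bm r+b$, and, since $\mathcal{T}_{c_i}(\bm p)$ is assumed to be a line, parametrize its points by $\bm r(\bm\theta^*_{\bm c}(\bm p))=\bm r_0+\lambda\,\bm v$ for a fixed base point $\bm r_0$, fixed direction $\bm v$, and scalar $\lambda=\lambda(c_i)$ (all depending on $\bm p$ and $c_{-i}$, which are held fixed). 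Composing gives
\[
\gamma(\bm\theta^*_{\bm c}(\bm p))=\bm a^\top\bm r_0+b+\lambda(c_i)\,(\bm a^\top\bm v),
\]
an affine function of $\lambda$. Hence condition (B) is automatic: $t$ is monotone along the line with slope $\bm a^\top\bm v$, and it is \emph{constant} precisely when $\bm a^\top\bm v=0$, i.e.\ when the line lies in a level set $\{\bm r:t(\bm r)=t_0\}$ of $t$ --- which is exactly the isosurface hypothesis of the second claim.

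It then remains to show that $\lambda(c_i)$ itself is monotone in $c_i$, so that the affine dependence on $\lambda$ transfers to a monotone dependence on $c_i$. I would apply Lemma~\ref{lemma-increaseC-decreaseFunction}, grouping the fixed terms $\sum_{j\neq i}c_j L_j$ into a single function so that the lemma's two-weight form applies, to conclude that $L_i(r_i(\bm\theta^*_{\bm c}(\bm p));\bm p)$ is weakly decreasing as $c_i$ increases. Because the sub-losses are accuracy-rewarding throughout the paper, $L_i(\cdot;\bm p)$ is strictly monotone on each side of $\hat r_i(\bm p)$; combined with the trajectory being one-sided from $\hat r_i(\bm p)$ --- the content of condition (A), which for a line follows from the same dominance argument used in Theorems~\ref{theorem-conditionForSubproperties-2D} and~\ref{theorem-conditionForSubproperties-MD} --- the map $r_i\mapsto L_i$ is monotone along the entire trajectory. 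Writing $r_i(\bm\theta^*_{\bm c}(\bm p))=r_{0,i}+\lambda(c_i)\,v_i$, a weakly decreasing $L_i$ then forces $r_i(\bm\theta^*_{\bm c}(\bm p))$, and hence $\lambda(c_i)$ whenever $v_i\neq0$, to move monotonically toward $\hat r_i(\bm p)$. Substituting back into the display yields the desired monotonicity of $\gamma(\bm\theta^*_{\bm c}(\bm p))$ in $c_i$.

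The hard part will be step (A): confirming that the linear trajectory really is one-sided from $\hat r_i(\bm p)$, so that $L_i$ is genuinely monotone (not merely quasi-convex) along it. Were the line allowed to cross the hyperplane $\{r_i=\hat r_i(\bm p)\}$ in its interior, the decreasing-loss conclusion of Lemma~\ref{lemma-increaseC-decreaseFunction} would no longer pin down the direction of travel. I would close this by observing that the limit $c_i\to+\infty$ forces the precise estimate $r_i=\hat r_i(\bm p)$, so $\hat r_i(\bm p)$ is attained only at an \emph{extreme} of the trajectory, not in its interior, which together with the dominance reasoning of Theorems~\ref{theorem-conditionForSubproperties-2D} and~\ref{theorem-conditionForSubproperties-MD} rules out interior crossings. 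The remaining degenerate possibility $v_i=0$ would require $r_i$ constant along $\mathcal{T}_{c_i}(\bm p)$; since $c_i\to+\infty$ still drives $r_i\to\hat r_i(\bm p)$, this can only occur when the trajectory collapses to the point $\{r_i=\hat r_i(\bm p)\}$, in which case $\gamma$ is constant and the monotonicity claim holds vacuously via the affine argument of the first paragraph.
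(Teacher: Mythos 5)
Your overall route is the paper's own: specialize the decomposition of Theorem~\ref{theorem-general} to the linear setting, where condition (B) becomes the trivial observation that a linear $t$ restricted to a line is affine in the line parameter (and constant exactly when the line lies in a level set), and condition (A) is delivered by Lemma~\ref{lemma-increaseC-decreaseFunction} plus accuracy-rewarding sub-losses plus one-sidedness of $\mathcal{T}_{c_i}(\bm p)$ from $\hat r_i(\bm p)$. That skeleton is correct, and you are right that the entire burden falls on the one-sidedness step: a weakly decreasing value of the quasi-convex map $r_i\mapsto L_i(r_i;\bm p)$ does not pin down the direction of travel if the trajectory straddles $\{r_i=\hat r_i(\bm p)\}$.

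The gap is in how you close that step. First, Theorems~\ref{theorem-conditionForSubproperties-2D} and~\ref{theorem-conditionForSubproperties-MD} require every two-dimensional slice of $\bm r(\bm\theta)$ to be a \emph{strictly monotone} curve; linearity of $\bm r(\bm\theta)$ alone does not guarantee this (an affine image $\mathcal{R}_\Theta$ can be parallel to a coordinate direction, in which case a slice is constant in one coordinate and the hypothesis fails), and Theorem~\ref{theorem-linearCases} does not assume it, so you cannot simply cite ``the dominance reasoning of'' those results. Second, the argument ``$c_i\to+\infty$ forces $r_i=\hat r_i(\bm p)$, so $\hat r_i(\bm p)$ is attained only at an extreme'' does not rule out an interior crossing: $\bm\theta^*_{\bm c}$ need not vary continuously in $c_i$, so the trajectory could sit at $r_i<\hat r_i$ for one weight and at $r_i>\hat r_i$ for a larger weight without ever selecting the intermediate point of the line where $r_i=\hat r_i$; nothing in the stated hypotheses makes that intermediate point dominate either endpoint. (What you \emph{can} get cheaply from Lemma~\ref{lemma-increaseC-decreaseFunction} and strict properness is that once $r_i(\bm\theta^*_{\bm c})=\hat r_i$ at some finite $c_i$, it stays there for all larger $c_i$ --- but that only forbids touching and leaving, not jumping across.) To make the proof airtight you should either add the slice-monotonicity hypothesis so the earlier orthant result applies, or supply a direct dominance argument for the affine $\mathcal{R}_\Theta$. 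The isosurface claim and the degenerate case $v_i=0$ are fine (in the latter, $L_i$ is constant along the trajectory, so the optimizer does not move with $c_i$ at all), though your phrasing that the trajectory ``collapses to a point'' should be justified by that observation rather than by degeneracy of the line itself.
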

    
The proof of Theorem \ref{theorem-linearCases} is straightforward and we leave it to Appendix \ref{appendix-proofs-MD-linearCases}. The linear case is highly constrained, but it is an important element for the analysis of more complicated situations, as we will see later.

%%%%%%%%%%%%%%%%%%%%%%%%%%%%%%%%%%%%%
%%%%%%%%%%%%%%%%%%%%%%%%%%%%%%%%%%%%%
%%%%%%%%%%%%%%%%%%%%%%%%%%%%%%%%%%%%%
\subsection{Correspondence to empirical results} \label{section-3D-empiricalAnalysis}

In this section, we will discuss the simulation study in 3-D. We find that the theory on linear cases can help interpret the empirical phenomenon.  

For 3-D experiments, we consider the skewness property $\Gamma = skew(\cdot)$. We try different settings of $\bm q_{\bm \theta} $ and $\bm p_0$, including log-normal, log-logistic, Gamma, and Beta distribution models. Similar to the results in 2-D, in most cases $\Gamma_{c_i}$ curves in 3-D are also monotone in $c_i$ for all $i$. Due to the page limit, we leave all experimental design and results to Appendix \ref{appendix-section-experimentsForSkewness}.

\begin{remark}
During the simulation with the skewness property and the log-normal model for distributions, we find that due to the intrinsic dominance of some poorly-behaved sub-losses, the optimal estimation is highly sensitive to the initial iterative point in optimization. To solve this issue, we use the weight-renormalization trick. Re-denote the weights by $\tilde{\bm c} = \bm c ./ \bm k$, where $\bm k \in \mathbb{R}_{++}^M$ and $\bm c \in \bar{\mathbb{R}}_{+}^M$ and $./$ is the element-wise division. This renormalization would not affect the theoretical analysis in this paper, but is helpful from practical perspectives. See Appendix \ref{appendix-LogN-optimizationDifficulties} and \ref{appendix-weightRenormalization} for more detail.
\end{remark}

\begin{figure}[!htbp]
  \centering
  % Requires \usepackage{graphicx}
  \subfigure[$\bm q_{\bm \theta} = LogLogistic$]{\includegraphics[width=0.31\textwidth]{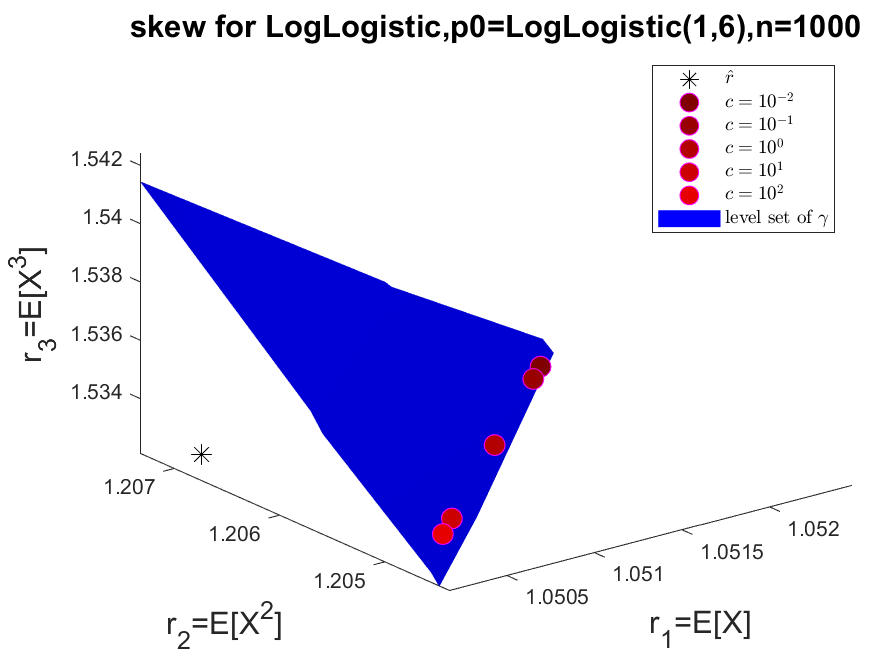} }
  \subfigure[$\bm q_{\bm \theta} = Gamma$]{\includegraphics[width=0.31\textwidth]{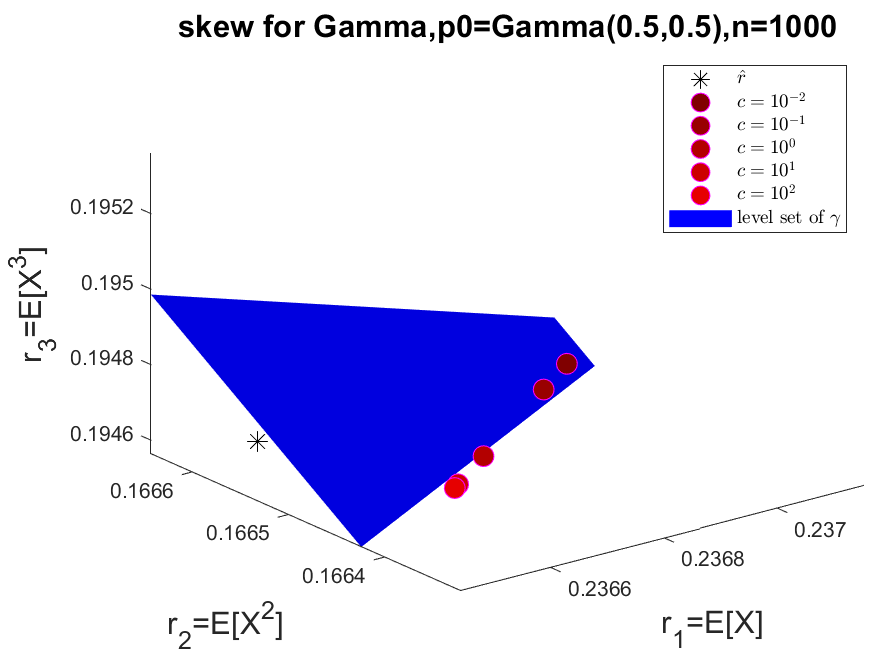} }
  \subfigure[$\bm q_{\bm \theta} = Beta$]{\includegraphics[width=0.31\textwidth]{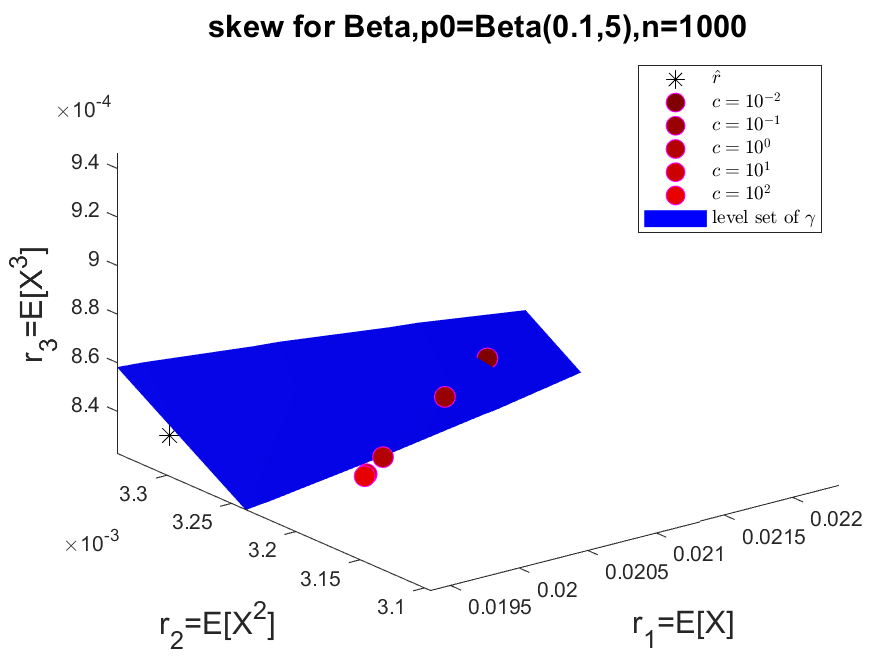} }
  \caption{ The sub-property trajectory $\mathcal{T}_{c_1}(\hat{\bm p})$ for different $\bm q_{\bm \theta}$. The blue surface is the level set of $\Gamma = skew(\cdot)$ nearby each trajectory. } 
  \label{fig_skew_for_q_model}

\end{figure}

Figure \ref{fig_skew_for_q_model} plots the sub-property trajectory $\mathcal{T}_{c_1}(\hat{\bm p})$ for different $\bm q_{\bm \theta}$ and also the level set of $\Gamma = skew(\cdot)$ nearby the trajectory. Interestingly, we can see that in each case the trajectory $\mathcal{T}_{c_1}(\hat{\bm p})$ and the isosurface of the target property are approximately linear, and then results can be roughly interpreted by the linear theory. This is reasonable from the ex-post perspective. Recall that if we focus on a small local area of a well-behaved function, it would be monotone or approximately linear most of the time. So we think that a more general case of the studied problem can be approximated by a linear case when $\hat{r}(\bm p)$ is close enough to the model surface $\bm r(\bm \theta)$. This can interpret from the high-level view why most of the empirical results that we have tested so far show the monotonicity pattern of the optimal model estimation for the target property as increasing each weight.

Furthermore, it is possible to locally find an isomorphism between a surface and a hyperplane based on the theory in differentiable geometry. Let us talk about the log-normal model $\bm q_{\bm \theta} = LogN(u,v^2)$ with $\bm \theta = (u,v^2)$ and skewness property here. Recall that the moments of the log-normal model is 
\[
r_i(\bm \theta) = E[X^i] = e^{iu + \frac{1}{2} i^2 v^2}, \, i=1, \cdots, 3.
\]
Then we have $\log(r_i) = iu + \frac{1}{2} i^2 v^2$, which is linear w.r.t. $(u,v^2)$. Applying the $\log$-transformation on the coordinates $\bm r = (r_1, r_2,r_3)$ with $ (x, y,z)= \log(\bm r)$, we have 
\[
\left \{
\begin{array}{l}
\frac{1}{2}(4x(\bm \theta)-y(\bm \theta)) = u, \\
-2x(\bm \theta)+y(\bm \theta) = v^2 \geq 0, \\
3x(\bm \theta)-3y(\bm \theta)+z(\bm \theta)=0.
\end{array}
\right .
\]
Then it is very clear that the image of the log-normal model, $(x(\bm \theta), y(\bm \theta), z(\bm \theta))$ is a subset of a hyperplane on $(x,y,z)$ space. What's more,
\begin{eqnarray*}
   \Gamma & = & skew(\bm q_{\bm \theta}) = (e^{v^2}+2) \sqrt{e^{v^2} -1 } \\
   & \approx & e^{v^2} e^{\frac{v^2}{2}} = e^{\frac{3}{2} v^2} \\
   & = & e^{\frac{3}{2} (y(\bm \theta)-2x(\bm \theta))}.
\end{eqnarray*}
%\[
%\Gamma = skew(\bm q_{\bm \theta}) = (e^{v^2}+2) \sqrt{e^{v^2} -1 }
%\approx e^{v^2} e^{\frac{v^2}{2}}
%= e^{\frac{3}{2} v^2}
%= e^{\frac{3}{2} (y(\bm \theta)-2x(\bm \theta))}.
%\]
It means that the level set of skewness for the log-normal model is approximately a hyperplane on $(x,y,z)$ space if $v^2$ is large enough. So, $\Gamma = skew(\cdot)$ and $\bm q_{\bm \theta} = LogN(u,v^2)$ correspond to a linear case in a large domain after applying the $\log$ coordinate transformation. Even if for more general cases where the mapped target property function may not be linear, it would be still easier to analyze the target property's behavior on a mapped linear sub-property model space. 

%\begin{corollary}
%Suppose that $\bm \theta^*_{\bm c}(\bm p)$ exists for all $\bm p$ and $\bm c$. If there exists a coordinate transformation $\varphi: \mathbb{R}^M \rightarrow \mathbb{R}^M$ such that $t(\bm r(\bm \theta))$, $\bm r(\bm \theta)$, and also the trajectory $\mathcal{T}_{c_i} (\bm p))$ are all linear on the transformed space of $\bm x_{\varphi} = \varphi(\bm r)$ for all $\bm p$, $i$, and fixed $c_{-i}$, then $\gamma (\bm \theta^*_{\bm c}) = t (\bm r (\bm \theta^*_{\bm c}) )$ is monotone in $c_i$, and is constant if $\mathcal{T}_{c_i} (\bm p)$ is on an isosurface of $\gamma (\bm r) = t (\bm r)$. 
%\end{corollary}

%%%%%%%%%%%%%%%%%%%%%%%%%%%%%%%%%%%%%
%%%%%%%%%%%%%%%%%%%%%%%%%%%%%%%%%%%%%
%%%%%%%%%%%%%%%%%%%%%%%%%%%%%%%%%%%%%
\section{Conclusions}

In this paper, we study the choice of proper losses for the task of indirect-elicitation of a target property through the lens of parametric model estimation. 
First, we introduce a fully separable loss function framework consisting of a weighted sum of several sub-losses, where the target property is a function of several sub-properties and each sub-loss directly elicits a sub-property. With fixing the choice of sub-losses, we propose a problem of how to choose the weights for sub-losses to achieve the best parametric estimation for the target property. 
The empirical results of simulation studies display that in most of the tested cases with different settings of the target property and the distribution model assumption and the true distribution, the optimal parametric estimation for the target property change monotonically with the increase of each weight. 

Concentrated on the monotonicity pattern, we establish a general theoretical framework about the potential reason behind it. An important trick is to decompose the change of the parametric estimation of the target property into two parts: the change of the parametric estimation of sub-properties with weights and the change of the target property with sub-properties. Consequently, we provide high-level conditions for the sub-property model and the target property respectively. Basically, we require that the changes in both parts are "monotone". But, these high-level conditions do not indicate what the desirable structures of the sub-property model and the target property should be and thus cannot be directly verified. 
So next we provide deeper theoretical analysis to uncover the assumption for the desirable structure such that the high-level conditions can be satisfied. We start it with 2-D cases. It shows that for the sub-property model curve, being strictly monotone is sufficient. As for the target property, if the relationship between the derivatives of the target property contour and that of the sub-property model curve can keep consistent, then the corresponding condition can be satisfied. The theoretical results in 2-D cases perfectly explains our observation in the 2-D simulated results. We also try extending the analysis into higher-dimensional situations, and we especially study the linear case where both the parametric model of sub-properties and the target property function are linear. With connecting to 3-D empirical results, we realize that any general case can be locally approximated by a linear case or mapped into a linear case if the true value of the sub-properties is close enough to the model assumption. We also find the map for the log-normal distribution model and skewness property. All these explain reasonably why the monotonicity pattern of the parametric estimation is so common among all different experimental settings.

\section{Discussion}
There are some remaining issues for our work to discuss here. First, for high-dimensional situations, our theory is not complete and mostly focuses on the linear special case. We are excited about the broader picture of the story. Second, even in 2-D cases, we do not dig deeper to see what the specific setting for the best weight would be if the best value is a finite number. Basically, there are two things to figure out here. One is whether or not the precise estimation of the target property is achievable by tuning weights, and the other is what the best weight is and how to calculate it. Third, it is noticeable that among simulation studies with commonly used probability distribution models, the best configuration of each weight would be $0$ or $+\infty$ and some sub-loss should be canceled. If this is not desirable, how should we handle it. More importantly, we could ask from the theoretical perspective what the best weight being $0$ or $+\infty$ means and whether it would happen in another different problem framework or not. 

Besides those lower-level issues mentioned above, there are also some higher-level open problems for future research. First, except the monotonicity pattern discussed in this paper, what other aspects could we observe about the task framework? Second, we only discuss the choice of weights in the total loss while fixing all sub-losses, there would be definitely more diversity if incorporating the choice of sub-losses into discussion. Third, we choose the target property itself as the guideline for the best setting of weights in this paper, but we could choose a different criterion for what the "best" parametric estimation should be. Fourth, we only study fully separable form of the total loss in our framework, and have not proposed any feasible idea about the non-fully separable losses. It is worth exploring further in this direction. 

%Furthermore, the monotonicity pattern has many interesting implications. First, based on the monotonicity pattern, it would be easier to configure the best setting of weights or at least to shrink the possible range of the best weights. It could be done numerically or even in an analytical form due to the uniqueness of the best choice. Second, the monotonicity pattern may imply some new theoretical questions. For example, as seen in the simulated results, in the extreme situation the best setting of some weights are zero, which means that it is the best for the estimation of the target property to ignore some sub-scores. Is it fine? If not, how to design parametric models to circumvent this issue? That being said, the focus of this paper is sharing the finding of the monotonicity pattern in weights and showing the theoretical reasons behind it. We leave those implied issues for future work. 

\section*{Acknowledgement}
This work is partially supported by the National Science Foundation under Grant No. 2110707. We would like to thank Shuaiang Rong and Siyu Li for the feedback on an early version of the introduction part, and Shihui Ying for the discussion on the contribution of this work.

% In the interest of anonymization, please do not include acknowledgements in your submission.
%
%\begin{acks}
%
%	The authors would like to thank Dr. Maura Turolla of Telecom
%	Italia for providing specifications about the application scenario.
%
%	The work is supported by the \grantsponsor{GS501100001809}{National
%		Natural Science Foundation of
%		China}{http://dx.doi.org/10.13039/501100001809} under Grant
%	No.:~\grantnum{GS501100001809}{61273304\_a}
%	and~\grantnum[http://www.nnsf.cn/youngscientsts]{GS501100001809}{Young
%		Scientsts' Support Program}.
%
%
%\end{acks}

% Bibliography
%\bibliographystyle{ACM-Reference-Format}
%\bibliography{sample-bibliography}
\bibliographystyle{plainnat}
\bibliography{reference}

%%%%%%%%%%%%%%%%%%%%%%%%%%%%%%%%%%%%%
%%%%%%%%%%%%%%%%%%%%%%%%%%%%%%%%%%%%%
%%%%%%%%%%%%%%%%%%%%%%%%%%%%%%%%%%%%%
%\newpage
\appendix

%\input appendices.tex

%%%%%%%%%%%%%%%%
%%%%%%%%%%%%%%%%
%%%%%%%%%%%%%%%%
%%%%%%%%%%%%%%%%
%%%%%%%%%%%%%%%%
%%%%%%%%%%%%%%%%
\section{More experiments for variance} \label{appendix-section-experimentsForVariance}

The experiment design for $\Gamma=var(\cdot)$ is seen in Table \ref{table-experiment-design-for-variance}. Actually, every choice of $\mathcal{D}_{\Theta}$ here can be approximated by a special case of Gaussian distribution family under certain conditions, as listed in the column of "approximation".

\begin{table}[!htbp] 
    \caption{ The experiment design for showing the $\Gamma_{c_i}$ curves with $\Gamma=var(\cdot)$. }
    %\vspace{20pt}
    \vspace{0.01pt}
    \centering
    \begin{tabular}{c | c | c | c | c }%{p{2cm}p{3cm}p{2.5cm}p{2.5cm}p{2.5cm}p{2.5cm}}
%        \hline
%         &\thead[l] {target distribution\\ $\bm p$} & \thead[l] {iteration initial\\ $\bm q_{0}$} & \thead[l]{optimal estimation \\ $\bm q^{*}$}  \\
        \hline
        $\Gamma$ & $\hat{\bm r}$ & $t(\bm r)$ & $\bm q_{\bm \theta}$ \& $\bm p_{0}$ & approximation\\
        \hline  
        \multirow{10}*{variance} & \multirow{10}*{$(E[X], E[X^2])$} & \multirow{10}*{$r_2 - r_1^2$} & 
        $Poisson(\theta) $ & \thead[c] {$N(\theta, \theta)$\\ $\theta>0$} \\
        \cline{4-5}
        & & & $\chi^2(\theta)$ & \thead[c] {$N(\theta, 2\theta)$\\ $\theta>0$} \\
        \cline{4-5}
        & & & $Exponential(1/\theta)$ & \thead[c] {$N(\theta, \theta^2)$\\ $\theta>0$} \\
        \cline{4-5}
        & & & $Gamma(K, \theta)$ & \thead[c] {$N(K\theta, K \theta^2)$\\ $K>0$, $\theta>0$} \\
        \cline{4-5}        
        & & & $Binomial(K, \theta)$ & \thead[c] {$N(K\theta, K\theta(1-\theta))$\\ $K \in \mathbb{N}_+$, $\theta\in [0,1]$} \\
        \hline
        
%        \multirow{3}*{3D}  & $(1,2,3)/\sum$ 
%        & $\bm p$  & $(0,0,1)$  \\
%        \cline{2-3}  
%        & $(1,10,15)/\sum=$ & $\bm p$ & $(0,0,1)$  \\
%        \cline{2-3}
%        & $(1,10,100)/\sum=$ & $\bm p$ & $(0,0,1)$  \\ 
%        \hline
                
    \end{tabular}
	\label{table-experiment-design-for-variance}
\end{table}

We show the simulated results for all different settings of $\bm q_{\theta}$ and $\bm p_0$ here. Overall, $\Gamma_{c_i}$ curves are monotone in all tested cases. Furthermore, most of the time the monotonicity directions keep consistent regardless of the setting of $\bm p_0$ given the same model assumption $\bm q_{\theta}$. But the Binomial distribution $B(K,\theta)$ is an exception, where $K$ is number of trials. Figure \ref{fig-property-c-curves-var-Binomial} shows that $\theta_0 = 0.1$ and $\theta_0 = 0.9$ lead to totally opposite monotonicity directions of $\Gamma_{c_i}$ for the same $i$. These results can be perfectly explained by our theory in 2-D spaces.

\begin{figure}[!htbp]
  \centering
  % Requires \usepackage{graphicx}
  \subfigure[$ \theta_0 = 0.1 $]{\includegraphics[width=0.3\textwidth]{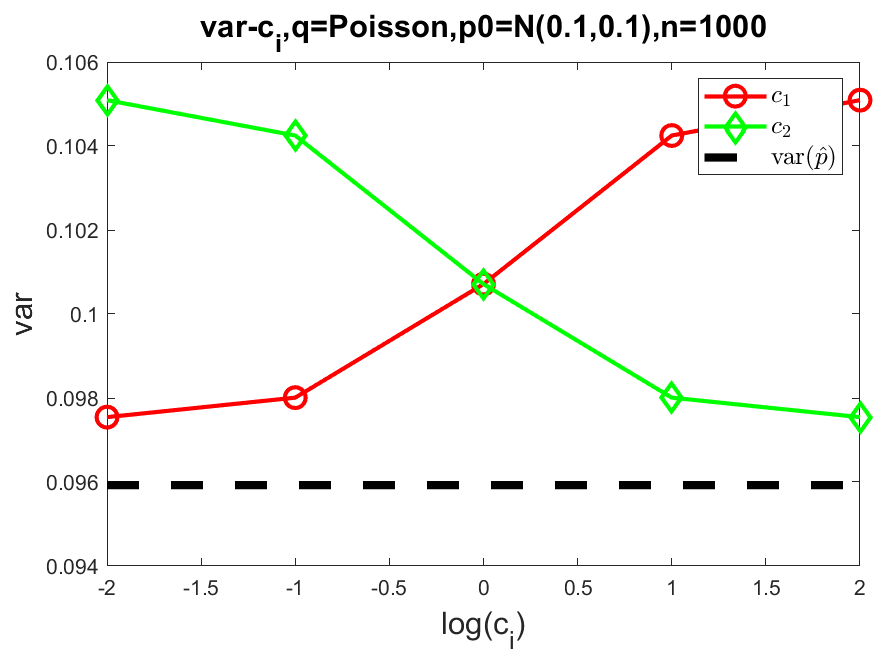}}
  \subfigure[$ \theta_0 = 1 $]{\includegraphics[width=0.3\textwidth]{var_c_1D,q=Poisson,p0=N1,1,n=1000_random3_new2.png}}
  \subfigure[$ \theta_0 = 10 $]{\includegraphics[width=0.3\textwidth]{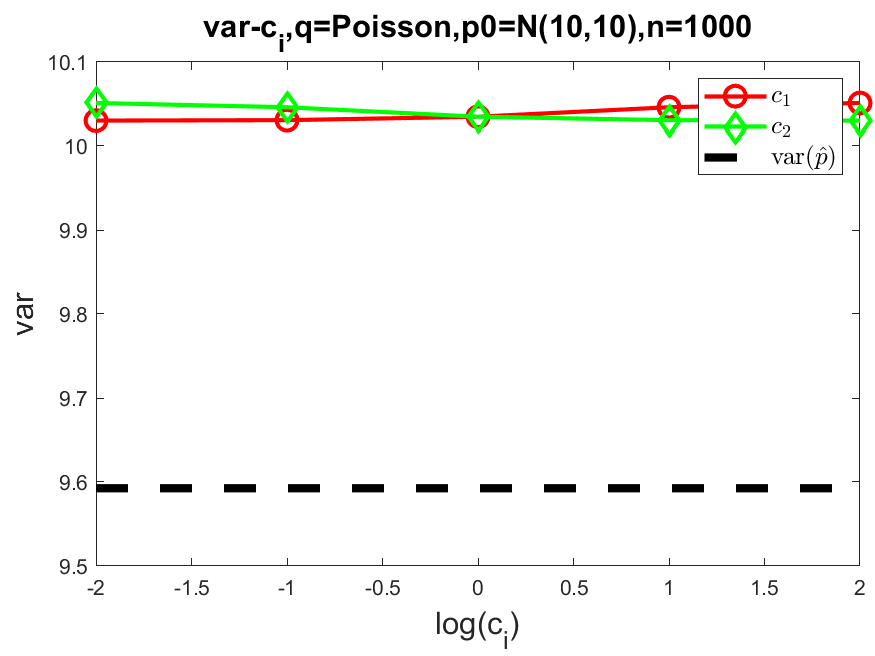}}
  
  \caption{ The $\Gamma_{c_i}$ curves for the cases of $\Gamma = var(\cdot)$ and $\bm q_{\theta} = Poisson(\theta)$ and $\bm p_0 = N(\theta_0, \theta_0)$ with different $\theta_0$. } \label{fig-property-c-curves-var-Poisson}
\end{figure}

\begin{figure}[!htbp]
  \centering
  % Requires \usepackage{graphicx}
  \subfigure[$ \theta_0 = 1 $]{\includegraphics[width=0.3\textwidth]{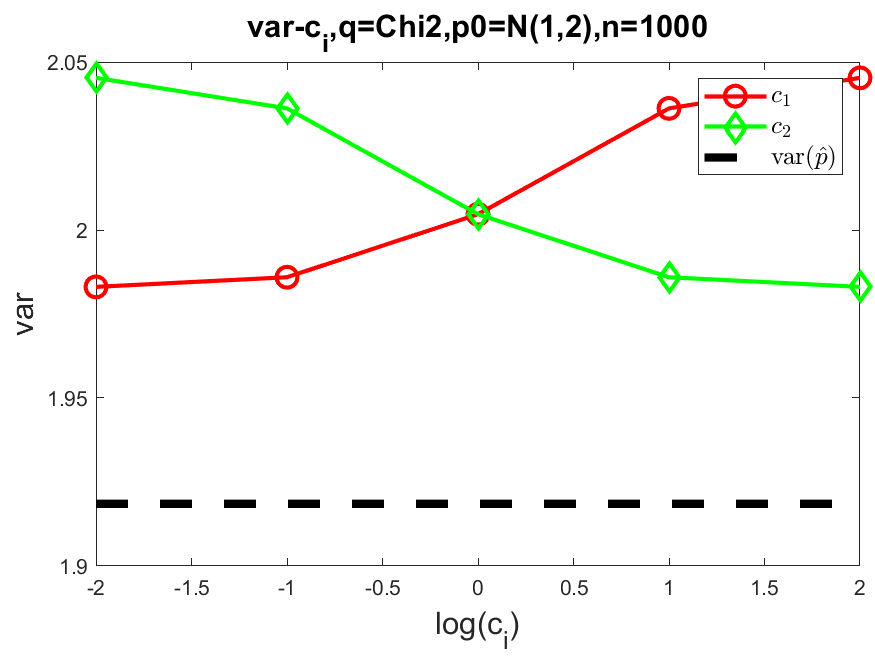}}
  \subfigure[$ \theta_0 = 10 $]{\includegraphics[width=0.3\textwidth]{var_c_1D,q=Chi2,p0=N10,20,n=1000_random3_new2.png}}
  \subfigure[$ \theta_0 = 100 $]{\includegraphics[width=0.3\textwidth]{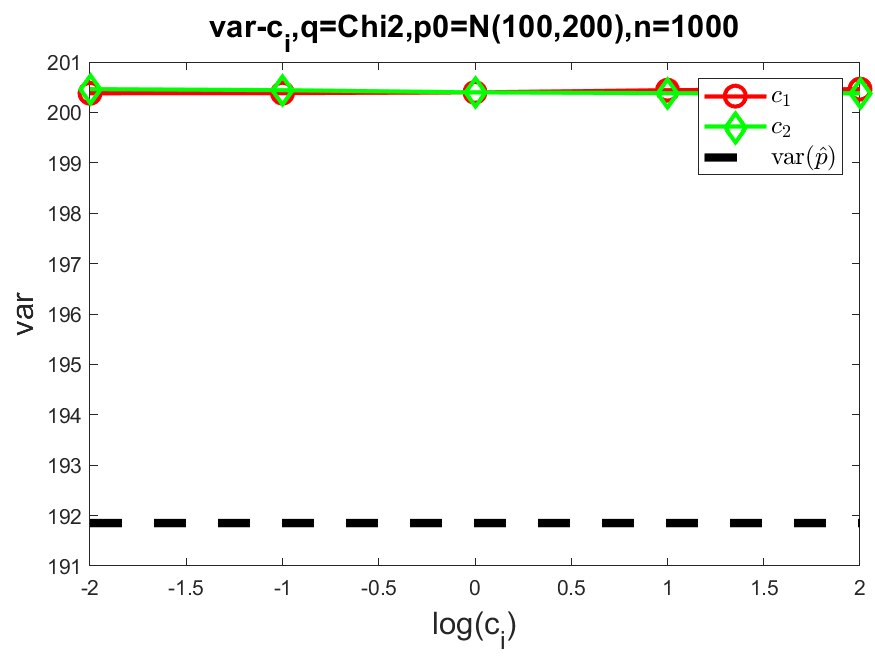}}
  
  \caption{ The $\Gamma_{c_i}$ curves for the cases of $\Gamma = var(\cdot)$ and $\bm q_{\theta} = \chi^2(\theta)$ and $\bm p_0 = N(\theta_0, 2 \theta_0)$ with different $\theta_0$. }%\label{fig2}
\end{figure}

\begin{figure}[!htbp]
  \centering
  % Requires \usepackage{graphicx}
  \subfigure[$ \theta_0 = 0.01 $]{\includegraphics[width=0.3\textwidth]{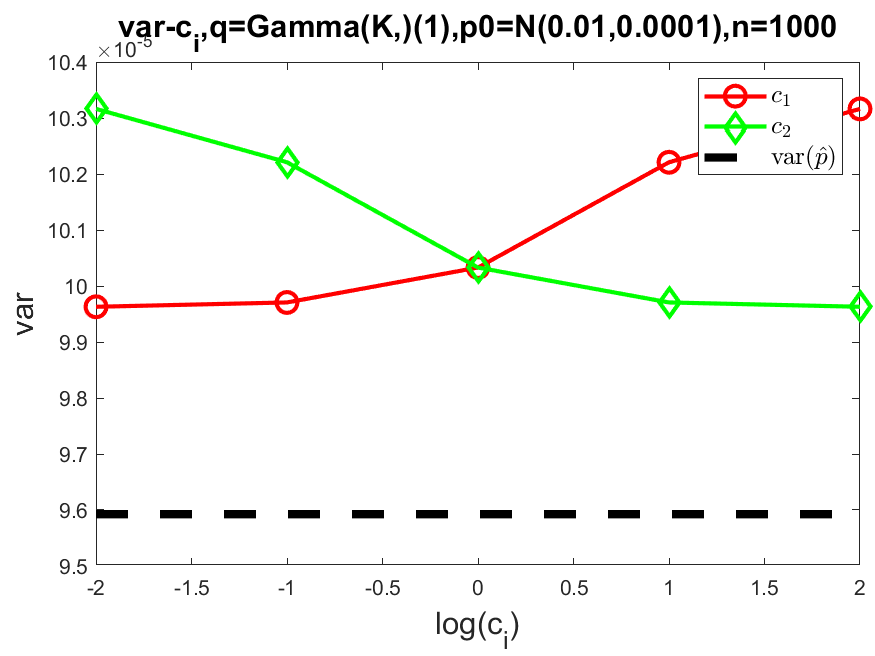}}
  \subfigure[$ \theta_0 = 1 $]{\includegraphics[width=0.3\textwidth]{var_c_1D,q=GammaK,1,p0=N1,1,n=1000_random3_new2.png}}
  \subfigure[$ \theta_0 = 100 $]{\includegraphics[width=0.3\textwidth]{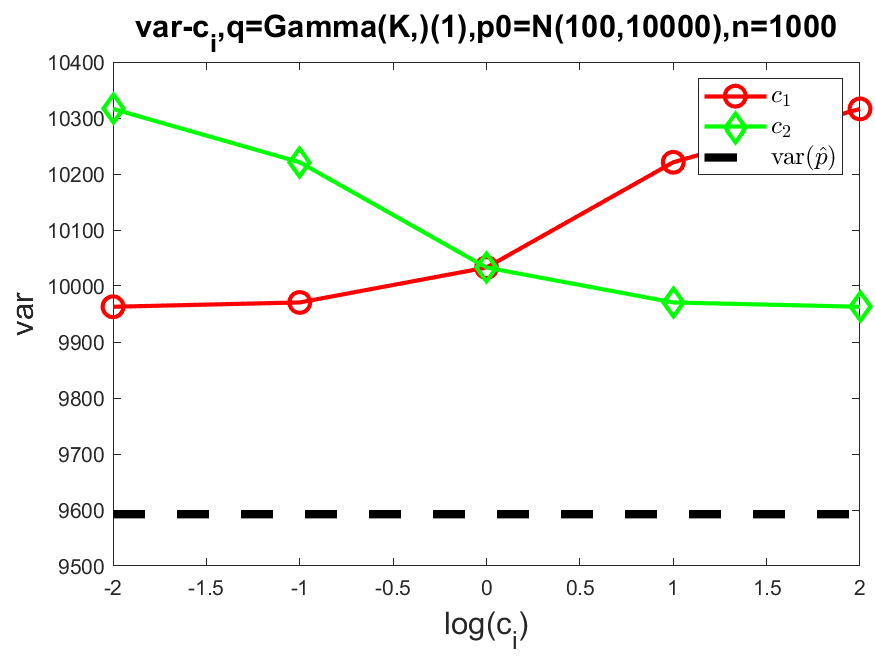}}
  
  \caption{ The $\Gamma_{c_i}$ curves for the cases of $\Gamma = var(\cdot)$ and $\bm q_{\theta} = Exponential(1/\theta)$ and $\bm p_0 = N(\theta_0, \theta_0^2)$ with different $\theta_0$. }%\label{fig2}
\end{figure}

\begin{figure}[!htbp]
  \centering
  % Requires \usepackage{graphicx}
  \subfigure[$K=0.1$, $\theta_0 = 1$]{\includegraphics[width=0.3\textwidth]{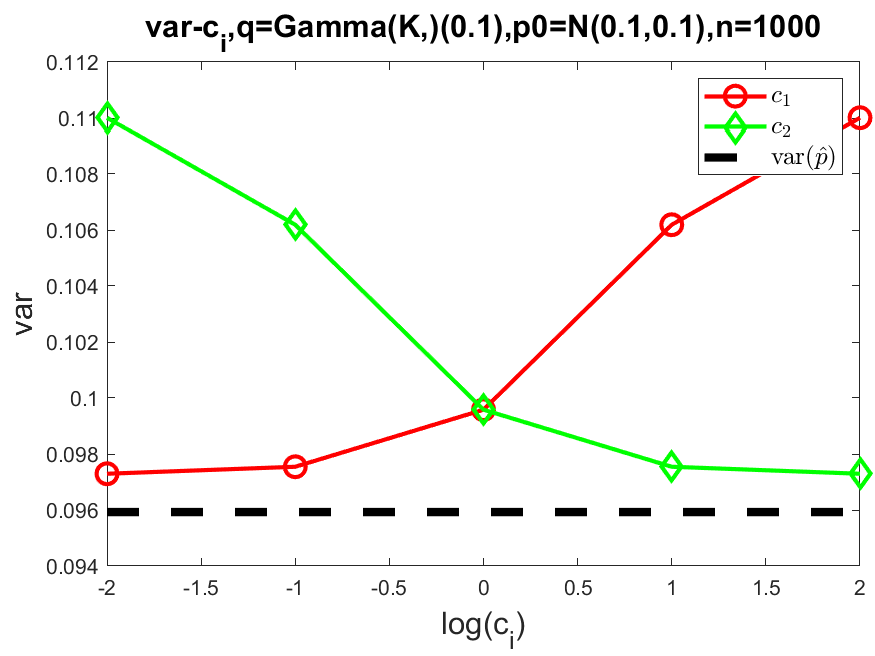}}
  \subfigure[$K=2$, $\theta_0 = 2$]{\includegraphics[width=0.3\textwidth]{var_c_1D,q=GammaK,2,p0=N4,8,n=1000_random3_new2.png}}
  \subfigure[$K=10$, $\theta_0 = 2$]{\includegraphics[width=0.3\textwidth]{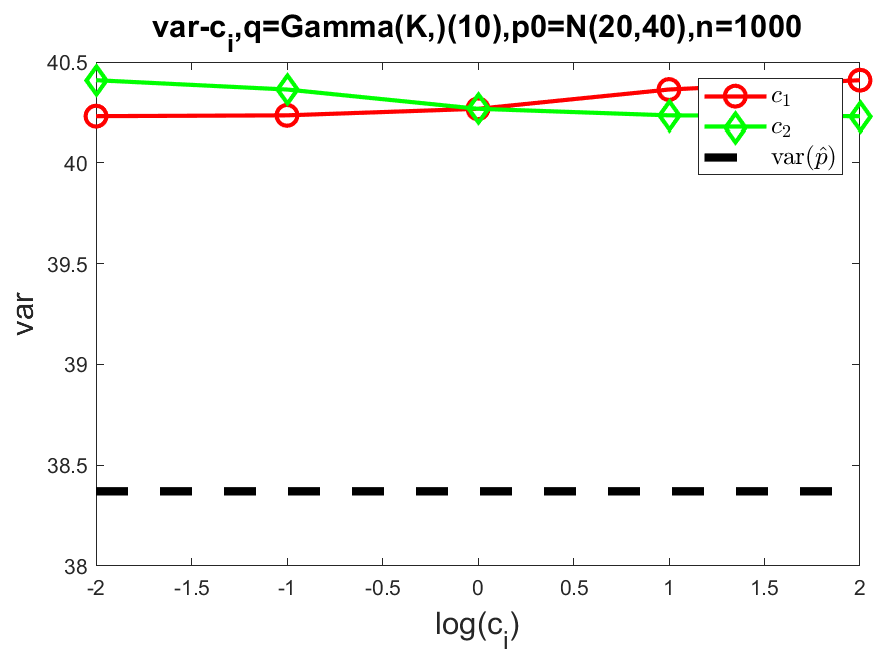}}
  
  \caption{ The $\Gamma_{c_i}$ curves for the cases of $\Gamma = var(\cdot)$ and $\bm q_{\theta} = Gamma(K,\theta)$ and $\bm p_0 = N(K \theta_0, K \theta_0^2)$ with different $\theta_0$. }\label{fig-property-c-curves-var-Gamma}
\end{figure}

\begin{figure}[!htbp]
  \centering
  % Requires \usepackage{graphicx}
  \subfigure[$\theta_0 = 0.1$]{\includegraphics[width=0.3\textwidth]{var_c_1D,q=Binomial10,p0=N1,0.9,n=1000_random3_new2.png}}
  \subfigure[$\theta_0 = 0.5$]{\includegraphics[width=0.3\textwidth]{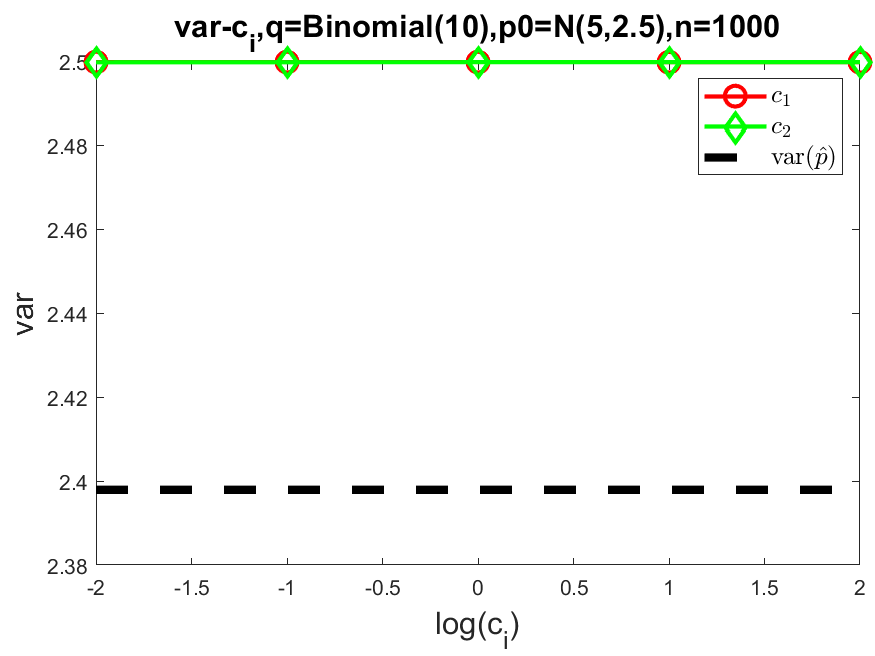}}
  \subfigure[$\theta_0 = 0.9$]{\includegraphics[width=0.3\textwidth]{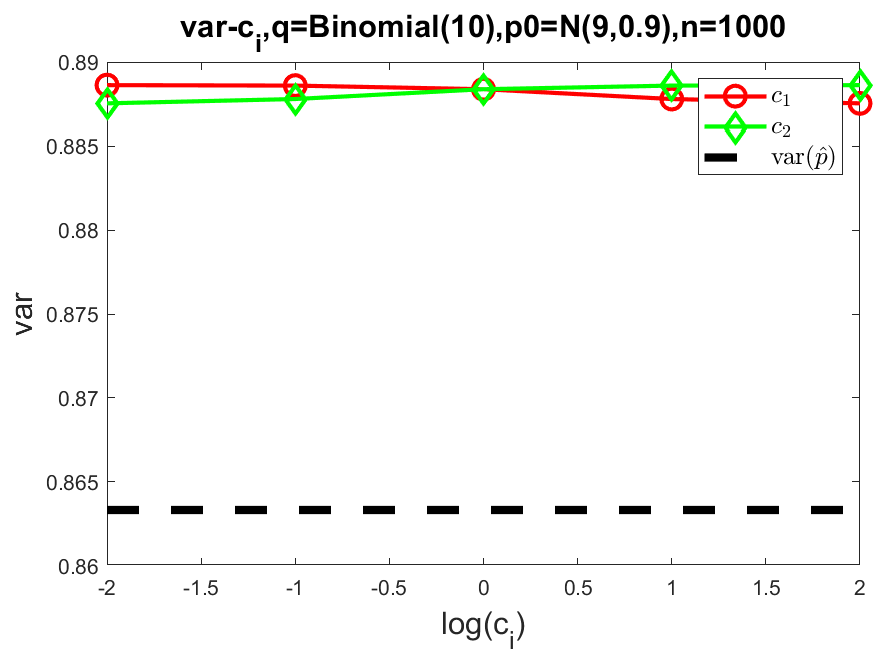}}
  
  \caption{ The $\Gamma_{c_i}$ curves for the cases of $\Gamma = var(\cdot)$ and $\bm q_{\theta} = B(10, \theta)$ and $\bm p_0 = N(10 \theta_0, 10 \theta_0 (1-\theta_0) )$ with different $\theta_0$. Notice that the monotonicity directions of $\Gamma_{c_i}$ for the same $i$ are opposite for (a) and (c).} 
  \label{fig-property-c-curves-var-Binomial}
\end{figure}

%\newpage

%%%%%%%%%%%%%%%%
%%%%%%%%%%%%%%%%
%%%%%%%%%%%%%%%%
\section{Proofs for 2D cases} \label{appendix-proofs-2D-general}

Before proving Theorem \ref{theorem-2D-general}, we need to prove several lemmas. 

\begin{lemma} \label{lemma-contour-monotonicity}
    Given a continuous bivariate function $t(r_1, r_2)$, the following two statements are equivalent to each other: 
    \begin{itemize}
        \item[(1)] Each non-empty contour curve is monotone, i.e. it can be represented as a monotone function $T(r_1; t_0)$ with $t_0 \in \mathbb{R}$.
        \item[(2)] The function $t(r_1, r_2)$ is monotone along $r_1$-axis and $r_2$-axis respectively. (Or, the signs of $\frac{\partial t}{\partial r_1}$ and $\frac{\partial t}{\partial r_2}$, are unchanged ($0$ included) over the whole space when partial derivatives exist.)
    \end{itemize}
\end{lemma}

\begin{proof}

"(1) $\Longrightarrow (2)$":

We can first assume $t(r_1, r_2)$ is not always monotone along $r_2$ axis. For example, $t$ is not monotone along some vertical line $r_1 = r_1^0$. Then there must be a local minimum or maximum in some interval. According to the continuity, there must exist two points $(r_1^0, r_2^a)$ and $(r_1^0, r_2^b)$ around the minimizer or maximizer such that $t(r_1^0, r_2^a) = t(r_1^0, r_2^b)$. It means that $(r_1^0, r_2^a)$ and $(r_1^0, r_2^b)$ are on the same contour. This contradicts the condition that any contour is a function w.r.t. $r_1$ variable. The argument for horizontal lines is symmetric with no violation of the monotonicity of contours.
%Next, we assume that $t$ is not monotone along some horizontal line $r_2 = r_2^0$. Samely, there must exist two points $(r_1^a, r_2^0)$ and $(r_1^b, r_2^0)$ around the local minimizer or maximizer such that $t(r_1^a, r_2^0) = t(r_1^b, r_2^0)$. It means that $(r_1^a, r_2^0)$ and $(r_1^b, r_2^0)$ are on the same contour, say, at the level $t_0$. This violates the monotonicity of $T(r_1; t_0)$.

"$ (1) \Longleftarrow (2)$": 

Without loss of generality, assume that the function $t(r_1, r_2)$ is increasing along $r_1$-axis and $r_2$-axis respectively. 

%Imagine that we are standing at any point $(r_1^a, r_2^a)$ with $t_0 \triangleq t(r_1^a, r_2^a)$. Now let us go right along the $r_1$-axis to a new point $(r_1^b, r_2^a)$, i.e. $r_1^a < r_1^b$, and we will find that $t_0 < t(r_1^b, r_2^a)$. To reach another point at the level $t_0$, we need to go down along the $r_2$-axis. Say, we reach a point $(r_1^b, r_2^b)$, and we will find that $r_2^b < r_2^a$.

Consider a point $(r_1^a, r_2^a)$ with $t_0 \triangleq t(r_1^a, r_2^a)$. Along the $r_1$-axis, we have $t_0 = t(r_1^a, r_2^a) \leq t(r_1^b, r_2^a)$ for any $r_1^b$ with $r_1^a < r_1^b$. Along the $r_2$-axis, if there exist $r_2^b$ such that $t(r_1^b, r_2^b) = t_0 (\leq t(r_1^b, r_2^a))$, then we have $r_2^b \leq r_2^a$. Then, $t_0 = t(r_1^a, r_2^a) = t(r_1^b, r_2^b)$, $r_1^a < r_1^b$, $r_2^a  \geq r_2^b $, all hold true. It means that $T(r_1; t_0)$ is decreasing along $r_1$-axis. 

\end{proof}

\begin{lemma}\label{lemma-targetProperty-monotonicity-along-modelCurve} 
    There are two real-valued functions $R(r_1)$ and $t(r_1, r_2)$. 
    Suppose that $R(r_1)$ is differentiable, and $t(r_1, r_2)$ is continuous and partially differentiable with the sign of $\frac{\partial t}{\partial r_2}$ unchanged ($0$ included) over the whole space. Denote the non-empty contour of $t(r_1, r_2)$ at any level $t_0$ by $T(r_1; t_0)$. Then the following two statements are equivalent to each other.

    \begin{itemize}
        \item[(1)] The function $t(r_1, r_2)$ is monotone along the curve $R(r_1)$, that is, $T_R  (r_1) = t(r_1, R(r_1))$ is monotone.
        \item[(2)] $R'(r_1)- T'(r_1; t_0)$ keeps the sign unchanged ($0$ included) at all points where $R(r_1) = T(r_1; t_0)$.
    \end{itemize}
\end{lemma}

\begin{proof}
    
\[
T_R' (r_1) = \frac{ \text{d} }{  \text{d} r_1} t(r_1, R(r_1)) = 
\left. 
\left( 
\frac{\partial t}{\partial r_1} + \frac{\partial t}{\partial r_2} R'(r_1)
\right) 
\right| _{r_2 = R(r_1)}.
\]

The definition of a contour $T(r_1; t_0)$ is that $t(r_1, T(r_1; t_0)) = t_0$. Taking the derivative w.r.t. $r_1$ on both sides, we have
\[
\left. 
\left( 
\frac{\partial t}{\partial r_1} + \frac{\partial t}{\partial r_2} T'(r_1; t_0) 
\right) 
\right| _{r_2 = T(r_1;t_0)}
= 0.
\]
Thus, we have
\begin{equation} \label{Eq-contour-derivative}
    T'(r_1; t_0) = - 
    \left. 
    \frac{\frac{\partial t}{\partial r_1}}{\frac{\partial t}{\partial r_2} }
    \right| _{r_2 = T(r_1;t_0)}.
\end{equation}

Next, 
\begin{eqnarray}
    & & R'(r_1)-T'(r_1; t_0) \notag \\
    &=& R'(r_1)+ \left. \frac{\frac{\partial t}{\partial r_1}}{\frac{\partial t}{\partial r_2}} \right| _{r_2 = T(r_1; t_0)}  \notag \\
    &=& \left. \frac{ R'(r_1)\frac{\partial t}{\partial r_2} + \frac{\partial t}{\partial r_1} }{\frac{\partial t}{\partial r_2}}  \right| _{r_2 = T(r_1; t_0)} \notag \\
    & & (\text{if } r_2=T(r_1; t_0) = R(r_1), \text{ then}) \notag \\
    &=& \left. \frac{T_R' (r_1)}{\frac{\partial t}{\partial r_2} }\right| _{r_2 = R(r_1)=T(r_1; t_0)} \label{Eq-derivative-difference}
\end{eqnarray}

Because $\frac{\partial t}{\partial r_2}$ keep the sign unchanged ($0$ included) over the whole space, $T_R'(r_1)$ keeps the sign unchanged if and only if $R'(r_1)-T'(r_1; t_0)$ keeps the sign unchanged ($0$ included) wherever $T(r_1; t_0) = R(r_1)$.

\end{proof}

Now we prove Theorem \ref{theorem-conditionForTargetProperty}.

\begin{proof}
    Because $t(\bm r)$ is continuous and partially differentiable and contours $T(r_1; t_0)$ for all $t_0$ are monotone, the sign of $\frac{\partial t}{\partial r_2}$ keeps unchanged ($0$ included) over the whole space according to Lemma \ref{lemma-contour-monotonicity}. 

    Since $R'(r_1)- T'(r_1; t_0)$ keeps the sign unchanged, according to Lemma \ref{lemma-targetProperty-monotonicity-along-modelCurve}, we always have that the target property value $t(r_1, r_2)$ is monotone along the curve $R(r_1)$. Thus, the condition (B) is satisfied.
\end{proof}

We also need a straightforward corollary about the best setting of weights given the monotonicity of $\gamma$ in $c_i$. The corollary is true for any $M\geq 2$.

\begin{corollary}\label{corollary-optimalC}
    For Problem \ref{main_problem}, given $\bm p$, suppose that $ \gamma(\bm \theta^*_{\bm c}(\bm p)) = t (\bm r (\bm \theta^*_{\bm c}(\bm p)) ) $ is monotone in $c_i$ with fixed $c_{-i}$ for some $i$. Denote the value of $ t (\bm r (\bm \theta^*_{\bm c}(\bm p)) )$ at $c_i = 0$ by $t_{c_i=0}$ and that at $c_i = +\infty$ by $t_{c_i = +\infty}$. 
    \begin{itemize}
        \item[(1)] If $t_{c_i=0} < t_{c_i = +\infty} < t(\hat{\bm r})$ or the order is reversed, then $c_i^* = +\infty$.
        \item[(2)] If $t(\hat{\bm r}) < t_{c_i=0} < t_{c_i = +\infty}$ or the order is reversed, then $c_i^* = 0$.
        \item[(3)] If $t_{c_i=0} < t(\hat{\bm r}) < t_{c_i = +\infty}$ or the order is reversed, then $c_i^* \in (0,+\infty)$.
    \end{itemize}
\end{corollary}

Next, we will prove Theorem \ref{theorem-2D-general}.

%\proof 
\begin{proof}

First, because all sub-losses are accuracy-rewarding and the model curve $r_2 = R(r_1)$ is strictly monotone, the condition (A) is satisfied for all $\bm p$ according to Theorem \ref{theorem-conditionForSubproperties-2D}. Second, obviously, each of the condition of (a) (b) (c) can guarantee that $R'(r_1)- T'(r_1; t_0)$ keeps the sign unchanged. Recalling that all contours are differentiable and monotone, according to Theorem \ref{theorem-conditionForTargetProperty}, the condition (B) is satisfied, i.e., $T_R  (r_1) = t(r_1, R(r_1))$ is monotone. Thus, both the condition (A) and (B) are satisfied. Following from Theorem \ref{theorem-general}, we see that $ t (\bm r (\bm \theta^*_{\bm c}(\bm p)) ) $ is monotone in $c_1$ with fixed $c_{-1}$ i.e. $c_2$, for all $\bm p$.

$\bm r(\theta_{\bm c}^*)$ will always move from the point $\bm r^A \triangleq (R^{-1}(\hat{r}_2), \hat{r}_2)$ to the point $\bm r^B \triangleq (\hat{r}_1, R(\hat{r}_1) )$ when increasing $c_1$. Without loss of generality, assume that $R^{-1}(\hat{r}_2) < \hat{r}_1$, and $\frac{\partial t}{\partial r_2} > 0$.

(a) 

Without loss of generality, assume that $0 < R'(r_1)  < T'(r_1; t_0)$, for instance, as Figure \ref{fig_dR_dT_consistent} (a) shows. According to Eq(\ref{Eq-derivative-difference}), we know that 
\begin{equation*}
    T_R'(r_1) < 0.
\end{equation*}
It means that $T_R(r_1)$ is decreasing. Thus,  
\begin{equation} \label{Eq-contourLevels-compare-at-corners}
    t (\bm r^A) \triangleq t_A > t( \bm r^B)  \triangleq t_B  .
\end{equation}

Next, we need to compare the value between $t_B$ and $t(\hat{\bm r})$. 

Because $R^{-1}(\hat{r}_2) < \hat{r}_1$ and $R(r_1)$ is increasing, we have 
\begin{equation} \label{Eq-corners-r2-compare}
    \hat{r}_2 < R(\hat{r}_1).
\end{equation}
For the contour curve at the level $t_B$, its intersection with the segment between $\bm r^A$ and $\hat{\bm r}$ is $(T^{-1}(\hat{r}_2; t_B), \hat{r}_2)$. Because $T(r_1; t_B)$ is increasing and Eq(\ref{Eq-corners-r2-compare}), we have 
\begin{equation}
T^{-1}(\hat{r}_2; t_B) < T^{-1}(R(\hat{r}_1); t_B) = \hat{r}_1. 
\end{equation}

Because Eq(\ref{Eq-contour-derivative}), and $0 < T'(r_1; t_0)$, and $\frac{\partial t}{\partial r_2} > 0$, we have $\frac{\partial t}{\partial r_2} < 0$, which means that $t(r_1, r_2)$ is decreasing along $r_1$-axis. Thus, we have 
\begin{equation} \label{Eq-contourLevels-compare-to-rHat}
    t_B = t(T^{-1}(\hat{r}_2; t_B), \hat{r}_2) > t(\hat{r}_1, \hat{r}_2) = t(\hat{\bm r})
\end{equation}

According to Eq(\ref{Eq-contourLevels-compare-at-corners}) and Eq(\ref{Eq-contourLevels-compare-to-rHat}), we have 
\begin{equation}
    t_A = t (\bm r^A) > t_B = t(\bm r^B) > t(\hat{\bm r}).
\end{equation}

So, when increasing $c_1$, $\bm r(\theta_{\bm c}^*)=(r_1(\theta_{\bm c}^*), r_2(\theta_{\bm c}^*))$ will move from $\bm r^A$ to $\bm r^B$, coupled with the value $t(\bm r(\theta_{\bm c}^*))$ decreasing to $t(\hat{\bm r})$ (not reaching the value).

(b) 

Without loss of generality, assume that $R'(r_1)  > T'(r_1; t_0) >0 $, for instance, as Figure \ref{fig_dR_dT_consistent} (b) shows. According to Eq(\ref{Eq-derivative-difference}), we know that 
\begin{equation*}
    T_R'(r_1) > 0.
\end{equation*}
It means that $T_R(r_1)$ is increasing. Thus, 
\begin{equation*} 
    t_A \triangleq t (\bm r^A) < t( \bm r^B )  \triangleq t_B  .
\end{equation*}

Next, we need to compare the value between $t_A$ and $t(\hat{\bm r})$. 

For the contour curve at the level $t_A$, its intersection with the segment between $\hat{\bm r}$ and $\bm r^B$ is $(\hat{r}_1, T(\hat{r}_1; t_A))$. Because $T(r_1; t_A)$ is increasing, we have 
\begin{equation*}
\hat{r}_2 = T(R^{-1}(\hat{r}_2); t_A) < T(\hat{r}_1; t_A). 
\end{equation*}

Because $\frac{\partial t}{\partial r_2} > 0$, we have 
\begin{equation*} 
    t(\hat{\bm r}) = t(\hat{r}_1, \hat{r}_2) < t(\hat{r}_1, T(\hat{r}_1; t_A) ) = t_A.
\end{equation*}

Finally, 
\begin{equation*} 
    t(\hat{\bm r}) < t_A = t (\bm r^A) < t( \bm r^B) = t_B  .
\end{equation*}

So, when increasing $c_1$, $\bm r(\theta_{\bm c}^*)=(r_1(\theta_{\bm c}^*), r_2(\theta_{\bm c}^*))$ will move from $(R^{-1}(\hat{r}_2), \hat{r}_2)$ to $(\hat{r}_1, R(\hat{r}_1) )$, coupled with the value $t(\bm r(\theta_{\bm c}^*))$ increasing from $t(\hat{\bm r})$ (not reaching the value).

(c)

Without loss of generality, assume that $R'(r_1) <0 < T'(r_1; t_0) $, for instance, as Figure \ref{fig_dR_dT_consistent} (c) shows. According to Eq(\ref{Eq-derivative-difference}), we know that 
\begin{equation*}
    T_R'(r_1) < 0.
\end{equation*}
Thus we have
\begin{equation*} 
    t_A \triangleq t (\bm r^A) > t( \bm r^B )  \triangleq t_B  .
\end{equation*}

Next, we need to prove that $R(r_1)$ and $T(r_1; t(\hat{\bm r}))$ has a unique intersection in the open interval $(R^{-1}(\hat{r}_2), \hat{r}_1)$.

\[
D(r_1) \triangleq R(r_1) - T(r_1; t(\hat{\bm r})).
\]
Then
\[
D'(r_1) = R'(r_1) - T'(r_1; t(\hat{\bm r})) < 0 .
\]

Because $T(r_1; t(\hat{\bm r}))$ is increasing, we have $T(R^{-1}(\hat{r}_2); t(\hat{\bm r})) < T(\hat{r}_1; t(\hat{\bm r})) = \hat{r}_2$. Thus, 
\[
D(R^{-1}(\hat{r}_2)) = R(R^{-1}(\hat{r}_2)) - T(R^{-1}(\hat{r}_2); t(\hat{\bm r})) > 0. 
\]

Because $R(r_1)$ is decreasing, we have $R(\hat{r}_1) < R(R^{-1}(\hat{r}_2)) = \hat{r}_2$. Thus
\[
D((\hat{r}_1) = R(\hat{r}_1) - T((\hat{r}_1; t(\hat{\bm r})) < 0.
\]

So, there must be a unique $r_1^0 \in (R^{-1}(\hat{r}_2), \hat{r}_1)$ such that $D(r_1^0)=0$, i.e. 
$R(r_1^0) = T(r_1^0; t(\hat{\bm r}))$. So we have 
\[
t(r_1^0, R(r_1^0)) = t(r_1^0, T(r_1^0; t(\hat{\bm r}))) = t(\hat{\bm r}).
\]
Then we have
\[
t_A > t(\hat{\bm r}) > t_B
\]

So, when increasing $c_1$, $\bm r(\theta_{\bm c}^*)=(r_1(\theta_{\bm c}^*), r_2(\theta_{\bm c}^*))$ will move from $\bm r^A$ to $\bm r^B$, coupled with the value $t(\bm r(\theta_{\bm c}^*))$ decreasing through $t(\hat{\bm r})$.
\end{proof}

\section{Discussion on the general sub-property space} 
\label{appendix-generalSubproperty-2D}

\begin{figure}[!htbp]
  \centering
  % Requires \usepackage{graphicx}
  \includegraphics[width=0.45\textwidth]{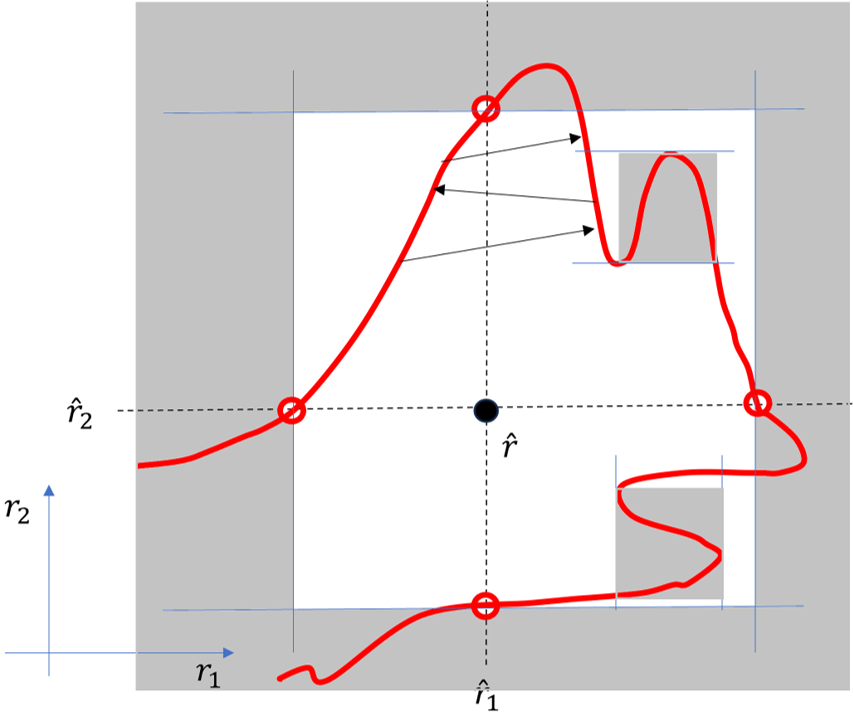} 

  \caption{ The impossible area of $\bm r(\theta^*_{\bm c})$ for all $\bm c$ with a general  $\mathcal{R}_{\Theta}$ in a 2-D space, marked with gray shading. } %\label{fig_2D_model_illustration}
  \label{fig_2D_model_general_illustration} 

\end{figure}

In Section \ref{section-2D-conditionForSubproperties}, we have proved in Theorem \ref{theorem-conditionForSubproperties-2D} that a strictly monotone $\bm r (\theta)$ in 2-D cases makes the possible area of $\bm r(\theta^*_{\bm c})$ for all $\bm c$ intrinsically restricted to one quadrant centered at the true value of sub-properties $\hat{\bm r}$. Now we discuss the general structure of $\bm r (\theta)$ such as the example shown in Figure \ref{fig_2D_model_general_illustration}. We have the following theorem. 

\begin{theorem}\label{theorem-SubpropertiesPossibleArea-2D}
For Problem \ref{main_problem} with $M=2$, suppose that $\theta^*_{\bm c}(\bm p) $ exists for all $\bm p$ and $\bm c$ and all sub-losses are accuracy-rewarding. For any given $\bm p$, the possible area for candidates of $\bm r(\theta^*_{\bm c})$ for all $\bm c$ would be composed of strictly monotone pieces within each quadrant centered at $\hat{\bm r} (\bm p)$, and those pieces are not dominated by the intersections between the model curve $\mathcal{R}_{\Theta}$ and the coordinate axes centered at $\hat{\bm r} (\bm p)$. 
\end{theorem}

\begin{proof}
    The proof is straightforward and can be illustrated by a general example of $\mathcal{R}_{\Theta}$ as shown in Figure \ref{fig_2D_model_general_illustration}. First, within each quadrant, the non-monotone portion of $\bm r(\theta)$ can be trimmed to be monotone. If $\bm r(\theta)$ and $\bm r(\tilde{\theta})$ have the same value on one of the coordinates, then the point with the other coordinate farther away from $\hat{r}_i$ incurs a higher loss and thus cannot be a candidate of $\bm r(\theta^*_{\bm c})$ for each $\bm c$. Second, with the similar proof used in Theorem \ref{theorem-conditionForSubproperties-2D}, we see that the outer surrounding shaded area can also be ruled out for candidates of $\bm r(\theta^*_{\bm c})$ for all $\bm c$. 
\end{proof}

Theorem \ref{theorem-conditionForSubproperties-2D} says that $\bm r(\theta)$ corresponding to a strictly monotone function can guarantee the condition (A) no matter what the accuracy-rewarding sub-losses are, and in the meantime with Theorem \ref{theorem-SubpropertiesPossibleArea-2D} we can show how the strict monotonicity assumption about $\mathcal{R}_{\Theta}$ can be used to handle general situations. For instance as shown in Figure \ref{fig_2D_model_general_illustration}, if the remaining possible area passes through more than one quadrant, then $\bm r(\theta^*_{\bm c})$ might jump among different quadrants when changing the value of $\bm c$, as illustrated by the arrows in the top middle of Figure \ref{fig_2D_model_general_illustration}. But, in order to reach a final conclusion about what $c_i^*$ is for all $i$, it suffices to only consider the remaining strictly monotone pieces of $\bm r(\theta)$ within one quadrant at a time and then consider the results in all quadrants together. This lets us decompose a complex situation into simpler, strictly monotone ones. Combining all the above insights, we can see that the study of strictly monotone $\bm r (\theta)$ is elementary.

%%%%%%%%%%%%%%%%
%%%%%%%%%%%%%%%%
%%%%%%%%%%%%%%%%

\section{Proof for M-D cases} \label{appendix-proofs-MD}

\subsection{Proof for the condition (A)} \label{appendix-proofs-MD-subproperties}

Here, we will prove Theorem \ref{theorem-conditionForSubproperties-MD}. We mentioned before that we can apply the induction method. The base case is for $M=2$ and we have proved it in Theorem \ref{theorem-conditionForSubproperties-2D}. In Section \ref{section-3D}, we have also shown how the conclusion holds for $M=3$. Now we use the similar trick to prove that the conclusion will hold for $M= m $ if it holds for $M = m-1 \geq 2$ for any $m \geq 3$.

\begin{proof}
(To obtain an easier understanding of the proof, please first refer to the proof for the case of $M=3$ shown in Section \ref{section-3D}).

{\bf Base case $M=2$.} It has been proved in Theorem \ref{theorem-conditionForSubproperties-2D}.

{\bf Induction step.} Suppose that the conclusion holds true for $M=m-1$ with any $m\geq 3$, then we need to show that the conclusion hold for $M= m$. 

The axis-$r_i$ centered at $\hat{\bm r} $ corresponds to the precise value of $\hat{r}_{-i}$ for each $i$. Denote the intersection between $\mathcal{R}_{\Theta}$ and axis-$r_i$ centered at $\hat{\bm r} (\bm p)$ by $A_i$, the intercept of $A_i$ on axis-$r_i$ by $\varepsilon_i$, and then $A_i = (\hat{r}_i + \varepsilon_i, \hat{r}_{-i} ) $. Without loss of generality, we assume that $\varepsilon_i > 0$ for all $i$.

Now we slice $\mathcal{R}_{\Theta}$ orthogonal to axis-$r_j$ at $r'_j$ for each $j$ and $r'_j$. Denote the slicing hyperplane at any $r'_j$ by $H_{r'_j} = \{ \bm r | r_j = r'_j \}$ and the sliced $\mathcal{R}_{\Theta}$ by $S_{r'_j} = \{ \bm r (\bm \theta) | r_j(\bm \theta) = r'_j \}$. 

Use $\varepsilon'_k (r'_j)$ for $k \neq j$ to denote the intercept of the slice $S_{r'_j}$ on each axis of the hyperplane $H_{r'_j}$ centered at $(\hat{r}_{-j}, r'_j)$. Denote $\Delta r_k \triangleq r_k - \hat{r}_k$ for all $k \neq j$.
    
%Similar to the proof in Lemma \ref{theorem-conditionForSubproperties-3D}, 
We have the following two claims about $S_{r'_j}$. 
\begin{itemize}
    \item[(1)] When $r'_j \geq \hat{r}_j + \varepsilon_j$, the slice $S_{r'_j}$ is above the intersection $A_j$, and thus is dominated by $A_j$ and can be ruled out for the optimal estimation for all $\bm c$. 
    \item[(2)] When $r'_j < \hat{r}_j + \varepsilon_j$, the slice $S_{r'_j}$ is strictly below the intersection $A_j$. We can verify that the intercept of the slice $S_{r'_j}$ on each axis of the hyperplane $H_{r'_j}$, that is, $\varepsilon'_k (r'_j)$ keeps the same sign of $\varepsilon_k$ for all $k \neq j$ due to the strict monotonicity assumption specified in the theorem. 

    Since $M= m$, the slice $S_{r'_j}$ on the hyperplane $H_{r'_j}$ would have one-lower dimension than $\mathcal{R}_{\Theta}$. Also, with being restricted to $S_{r'_j}$, the sub-loss $L_j(r_j (\bm \theta), \bm p)$ is a constant and thus can be ignored in the total loss. So looking for candidates of $\bm r (\bm \theta_{\bm c}^*)$ within $S_{r'_j}$ is equivalent to a $(m-1)$ dimensional case. Based on our assumption, the possible area of $\bm r (\bm \theta_{\bm c}^*)$ for all $\bm c$ within each slice $S_{r'_j}$ with $r'_j < \hat{r}_j + \varepsilon_j$ would be in the portion $\{ \bm r | r_j = r'_j, \Delta r_k >0, k \neq j \}$. 
\end{itemize}

Combining (1) and (2) above, we see that the possible area of $\bm r (\bm \theta_{\bm c}^*)$ for all $\bm c$ would be in the portion $\{ \bm r (\bm \theta) | r_k(\bm \theta) \geq \hat{r}_k, k \neq j \}$. Because of this, we see that in total the possible area of $\bm r (\bm \theta_{\bm c}^*)$ for all $\bm c$ would be in the portion $\{ \bm r (\bm \theta) | r_k(\bm \theta) \geq \hat{r}_k, \forall k \}$
    
Combing the base case and the induction step, we see that the conclusion hold for all $m \geq 2$.

\end{proof}

\subsection{Proof for the linear special case} \label{appendix-proofs-MD-linearCases}

This section provides the proof for Theorem \ref{theorem-linearCases}.

\begin{lemma}\label{lemma-linearCases}
Suppose that $\Gamma = t(\bm r)$ with $t: \mathbb{R}^M \rightarrow \mathbb{R}$ ($M \geq 2$) is a linear function, and $\mathcal{T}$ is a line. Then $\Gamma = t(\bm r)$ is constant or strictly monotone along the line $\mathcal{T}$.
\end{lemma}

\begin{proof}
%The line $\mathcal{T}$ can be represented by $\bm r (u) = \bm r_A + u(\bm r_B-\bm r_A)$ with $u \in \mathbb{R}$.
If there are any two different points on the line $\mathcal{T}$ leading to the same value of $\Gamma = t(\bm r)$, then they are on an isofurface of $\Gamma = t(\bm r)$. Any two points are on a line and thus on one hyperplane. Since $\Gamma = t(\bm r)$ is linear, its isosurfaces are parallell hyperplanes on $\mathbb{R}^M$ with the same normal vector. Consequently, the line spanned by the two points i.e. $\mathcal{T}$ is in the linear isosurface. 

If no two points on the line $\mathcal{T}$ share the same value of $\Gamma = t(\bm r)$, then it means $\Gamma = t(\bm r)$ is strictly monotone along the line $\mathcal{T}$.
\end{proof}

Now we will prove Theorem \ref{theorem-linearCases}  .

\begin{proof}
    First, if $\bm r(\bm \theta)$ is linear, then obviously it satisfies the assumption in Theorem \ref{theorem-conditionForSubproperties-MD}. So, the optimal estimation for sub-properties $\bm r (\bm \theta^*_{\bm c} (\bm p))$ would be in the same orthant centered at $\hat{\bm r} (\bm p)$ for any given $\bm p$ and all $\bm c$, that is, the condition (A) is satisfied in Theorem \ref{theorem-general}
    
    Then, all trajectories $\mathcal{T}_{c_i} (\bm p))$ are linear, so they are lines. Following from Lemma \ref{lemma-linearCases}, we see that the condition (B) is also satisfied. 

    Because the condition (A) and (B) are both satisfied, the conclusion in Theorem \ref{theorem-general} holds.
\end{proof}

%%%%%%%%%%%%%%%%
%%%%%%%%%%%%%%%%
%%%%%%%%%%%%%%%%

\section{Experiments for the skewness property} \label{appendix-section-experimentsForSkewness}

%\subsection{Experiment design and major results}

\begin{table}[!htbp] 
    \caption{ The experiment design for setting base weights $1./\bm k$ with $\Gamma = skew(\cdot)$. (---- means that $\bm p_0$ is in the same family as $\bm q_{\bm \theta}$.) }
    %\vspace{20pt}
    \vspace{0.01pt}
    \centering
    \begin{tabular}{c | c | c | c | c}%{p{2cm}p{3cm}p{2.5cm}p{2.5cm}p{2.5cm}p{2.5cm}}
%        \hline
%         &\thead[l] {target distribution\\ $\bm p$} & \thead[l] {iteration initial\\ $\bm q_{0}$} & \thead[l]{optimal estimation \\ $\bm q^{*}$}  \\
        \hline
        $\Gamma$ & $\hat{\bm r}$ & $t(\bm r)$ & $\bm q_{\bm \theta}$ & $\bm p_{0}$ \\
        \hline
        \multirow{9}*{$skew(\cdot)$} & \multirow{9}*{$(E[X], E[X^2], E[X^3])$} & \multirow{9}*{$\frac{r_3 - 3r_1 (r_2 -r_1^2) - r_1^3}{(r_2 -r_1^2)^{3/2}}$} &\multirow{3}*{ $LogN(u,v^2)$} & $LogN(u_0,v_0^2)$ \\
        \cline{5-5}
        & & & & $|N(\mu_0,\sigma_0^2)|$ \\
        \cline{5-5}
        & & & & $\sum_i LogN(u_i,v_i^2)$ \\
        \cline{4-5}
        & & & \thead[c] {$LogLogistic(a, b)$\\ $a>0$, $b>3$} & ----\\
        \cline{4-5}
        & & & \thead[c] {$Gamma(a, b)$\\ $a>0$, $b>0$} & ---- \\
        \cline{4-5}
        & & & \thead[c] {$Beta(a, b)$\\ $a>0$, $b>0$} & ---- \\
        \hline
        
%        \multirow{3}*{3D}  & $(1,2,3)/\sum$ 
%        & $\bm p$  & $(0,0,1)$  \\
%        \cline{2-3}  
%        & $(1,10,15)/\sum=$ & $\bm p$ & $(0,0,1)$  \\
%        \cline{2-3}
%        & $(1,10,100)/\sum=$ & $\bm p$ & $(0,0,1)$  \\ 
%        \hline
                
    \end{tabular}
	%\label{table-experiment-design-for-base-weights}
        \label{table-experiment-design-for-skewness}
\end{table}

Table \ref{table-experiment-design-for-skewness} shows the specific experiment setting for $\Gamma = skew(\cdot)$. Here, we try four well-known probability distribution models. For log-normal model, we try three different classes of distributions for $\bm p_0$ to see how the choice of $\bm p_0$ affects results.

$X\sim |N(\mu_0,\sigma_0^2)|$ means that $X = |Y|$ and $Y\sim N(\mu_0,\sigma_0^2)$. We use $|N(\mu_0,\sigma_0^2)|$ instead of $N(\mu_0,\sigma_0^2)$, because $\bm q_{\theta} = LogN(u,v^2)$ assumes that the random variable only takes positive values while about a half of samples from $N(\mu_0,\sigma_0^2)$ would be negative. It is too improper to fit the distribution $N(\mu_0,\sigma_0^2)$ into the model $\bm q_{\theta} = LogN(u,v^2)$, causing weird results. We will also show some evidence for that later. $X\sim \sum_i LogN(u_i,v_i^2)$ means that $X = \sum_i Y_i$ and $Y_i \sim LogN(u_i,v_i^2)$ for each $i$. 

%\footnote{$X\sim |N(\mu_0,\sigma_0^2)|$ means that $X = |Y|$ and $Y\sim N(\mu_0,\sigma_0^2)$, and $X\sim \sum_i LogN(u_i,v_i^2)$ means that $X = \sum_i Y_i$ and $Y_i \sim LogN(u_i,v_i^2)$ for each $i$.}

As mentioned in Section \ref{section-3D-empiricalAnalysis}, to relieve the issue of optimization difficulties, we need to remove the dominance of poorly-behaved sub-losses with renormalizing the weights in the way $\tilde{\bm c} = \bm c ./ \bm k$, where $./$ refers to element-wise division and $1/\bm k$ with $\bm k \in \mathbb{R}_{++}^M$ characterizes the base weights to balance sub-losses. Originally, we assume all elements of $\bm k $ to be $1$. But later, we try setting $k_i = \hat{r}^2(\hat{\bm p})$ for all $i$ for the experiments on skewness property. More details about the finding of weight renormalization can be seen in Appendix \ref{appendix-LogN-optimizationDifficulties} and \ref{appendix-weightRenormalization}.

Figure \ref{fig-property-c-curves-skew-qLogN} $\sim$ \ref{fig_Beta_Gamma} show the major results with different settings of $\bm q_{\bm \theta}$ and $\bm p_0$. Again, $\gamma(\bm \theta_{\bm c}^*)$ is monotone in most tested cases.

\begin{figure*}[t]%[!htbp]
  \centering
  % Requires \usepackage{graphicx}
  %\subfigure[$\bm p_0 = LogN(0,1^2)$]{\includegraphics[width=0.3\textwidth]{LogN_skewness_c_1D,p0=LogN(0,1),n=1000_random3_new2.png}}
  %\subfigure[$\bm p_0 = LogN(0,2^2)$]{\includegraphics[width=0.3\textwidth]{LogN_skewness_c_1D,p0=LogN(0,2),n=1000_random3_new2.png}}
  \subfigure[$\bm p_0 = LogN(0,3^2)$]{\includegraphics[width=0.3\textwidth]{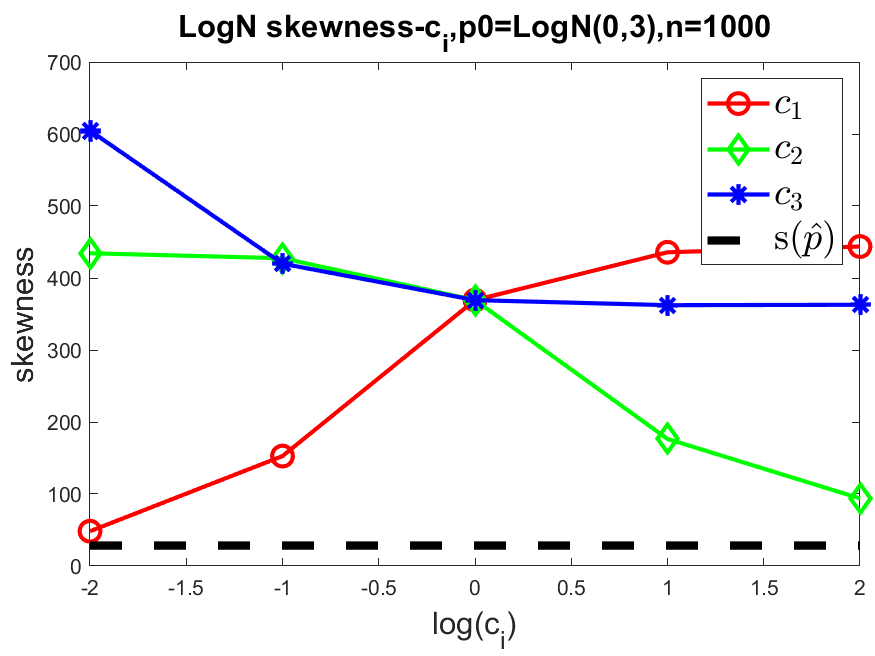}}
  \subfigure[$\bm p_0 = |N(0,1^2)|$]{\includegraphics[width=0.3\textwidth]{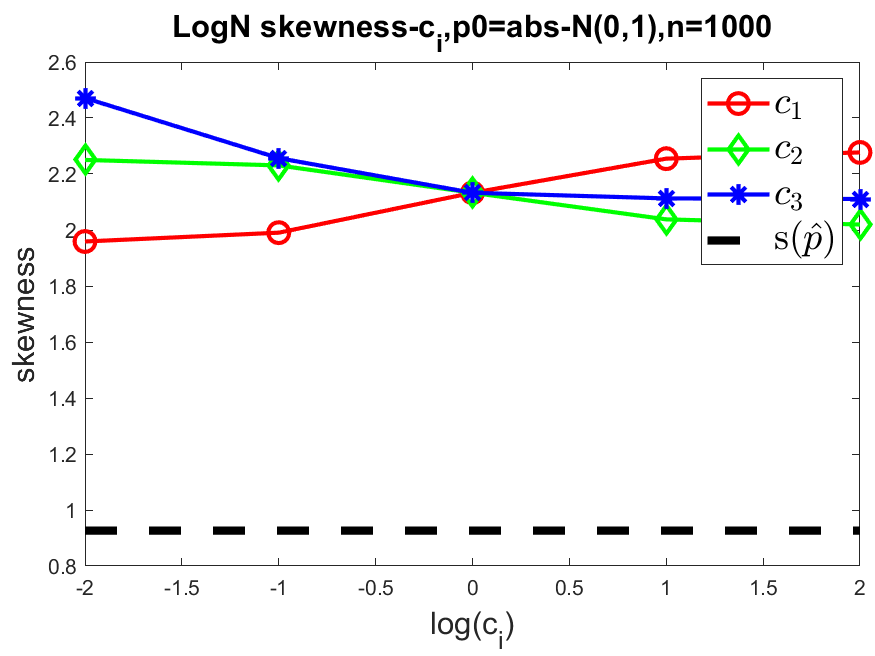}}
  \subfigure[$\bm p_0 = \sum LogN(u_i,v_i^2)$]{\includegraphics[width=0.3\textwidth]{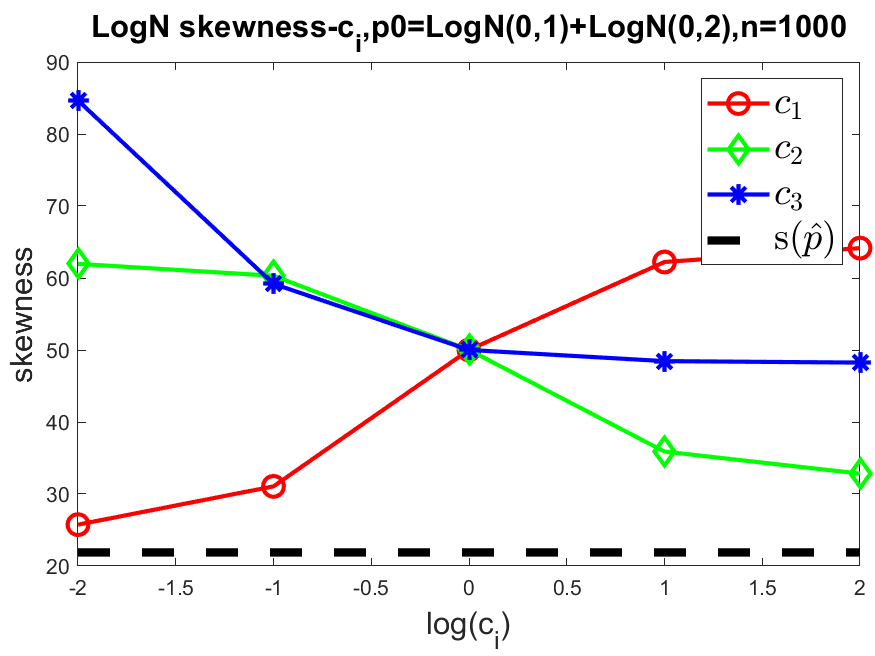}}
  
  \caption{ The $\Gamma_{c_i}$ curves for the cases of $\Gamma = skew(\cdot)$ and $\bm q_{\bm \theta} =  LogN(u,v^2)$ with different $\bm p_0$. } 
  \label{fig-property-c-curves-skew-qLogN}
\end{figure*}

\begin{figure*}[t]%[!htbp]
  \centering
  % Requires \usepackage{graphicx}  
  \subfigure[$\bm q_{\bm \theta} = LogLogistic$]{\includegraphics[width=0.3\textwidth]{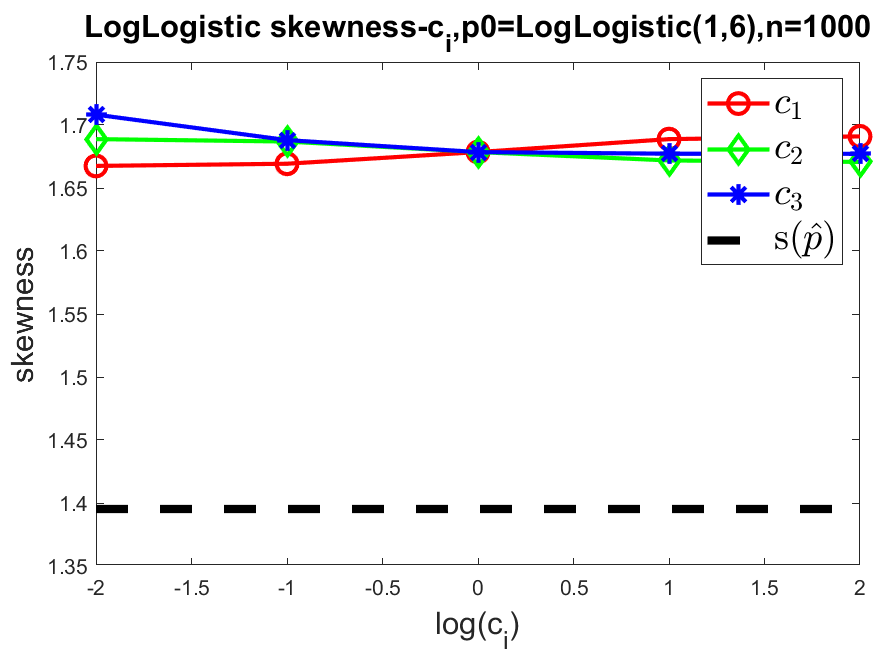}}
  \subfigure[$\bm q_{\bm \theta} = Gamma$]{\includegraphics[width=0.3\textwidth]{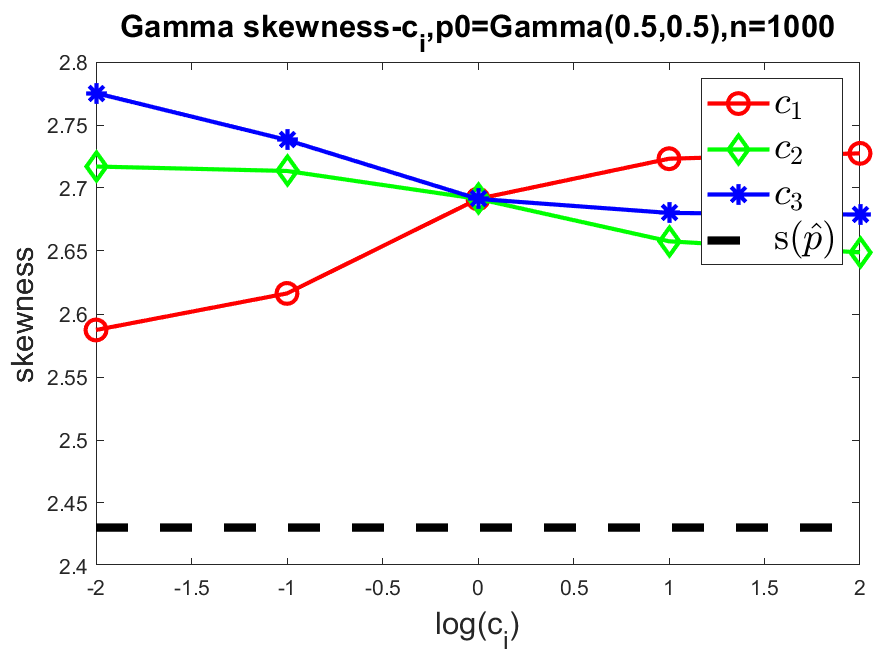}}
  \subfigure[$\bm q_{\bm \theta} = Beta$]{\includegraphics[width=0.3\textwidth]{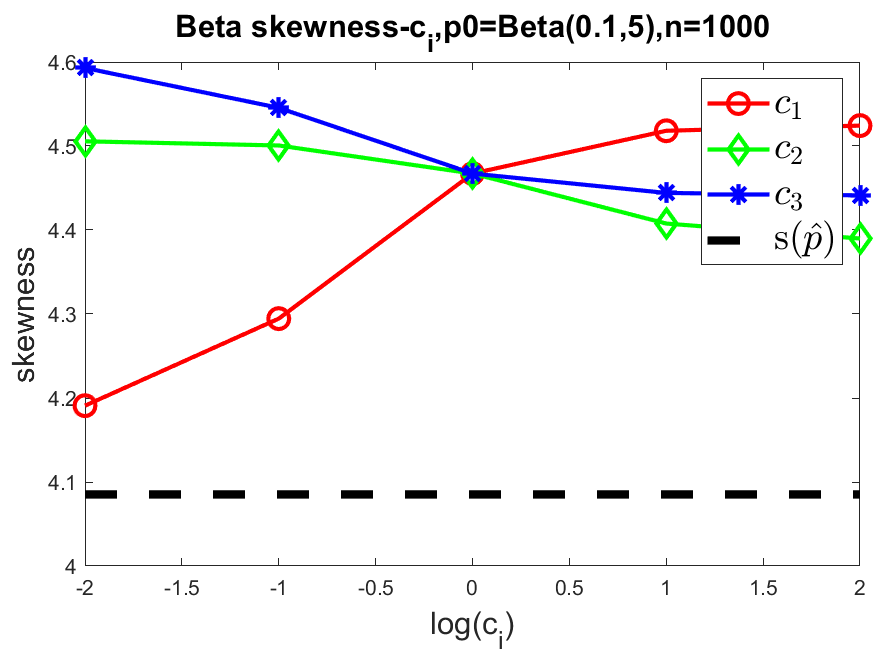}}
  
  \caption{ The $\Gamma_{c_i}$ curves for the cases of $\Gamma = skew(\cdot)$ with different $\bm q_{\bm \theta}$. } 
  \label{fig-property-c-curves-skew}
\end{figure*}

\begin{figure}[!htbp]
  \centering
  % Requires \usepackage{graphicx}
  \subfigure[$b_0 = 4$]{\includegraphics[width=0.3\textwidth]{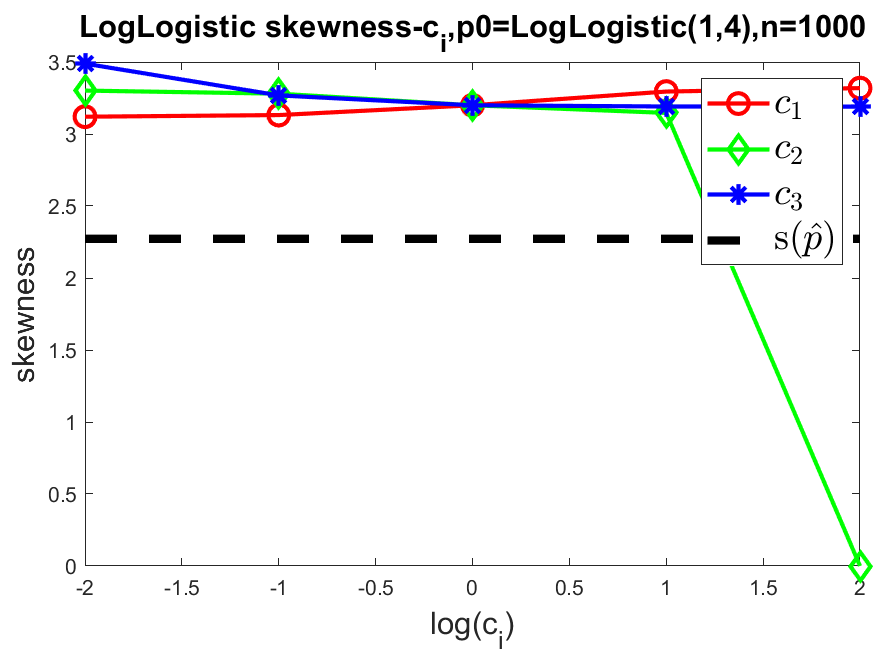}}
  \subfigure[$b_0 = 6$]{\includegraphics[width=0.3\textwidth]{LogLogistic_skewness_c_1D,p0=LogLogistic1,6,n=1000_random3_new2.png}}
  \subfigure[$b_0 = 8$]{\includegraphics[width=0.3\textwidth]{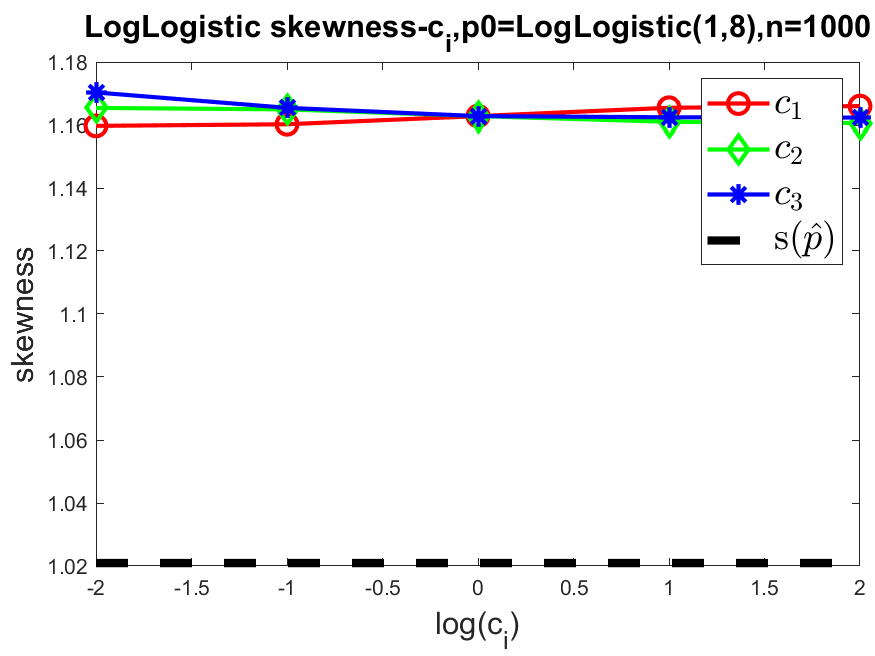}}
  
  \caption{ The $\Gamma_{c_i}$ curves for the cases of $\Gamma = skew(\cdot)$ and $\bm q_{\theta} = LogLogistic(a, b)$ with different $\bm p_0 = LogLogistic(1, b_0)$. }\label{fig_LogLogistic}
\end{figure}

\begin{figure}[!htbp]
  \centering
  % Requires \usepackage{graphicx}  
  \includegraphics[width=0.95\textwidth]{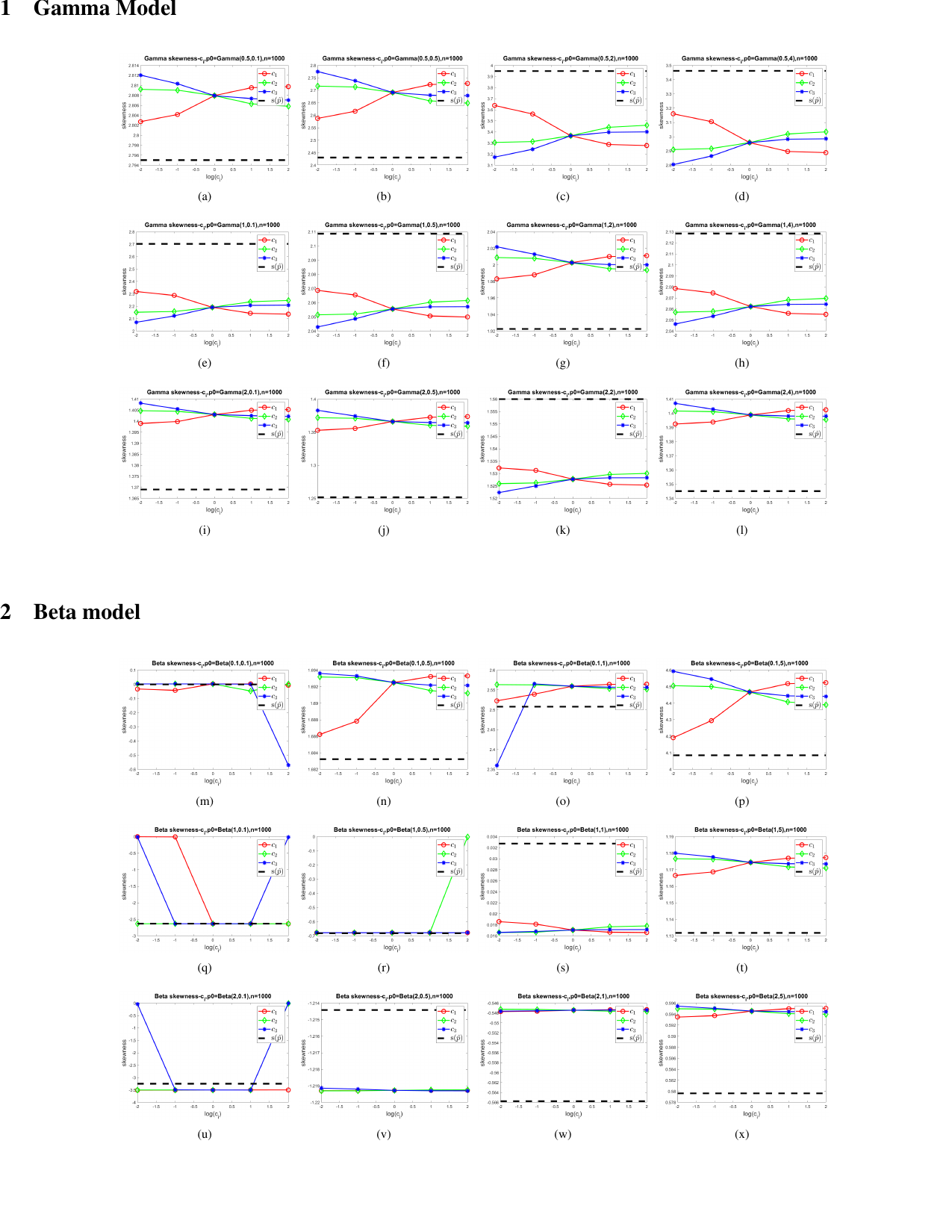}
  \caption{ The $\Gamma_{c_i}$ curves for the cases of $\Gamma = skew(\cdot)$, $\bm q_{\theta} = Gamma(a, b)$ or $\bm q_{\theta} =Beta(a, b)$ with different $\bm p_0 $. Each sub-figure corresponds to a specific setting for the parameter values $(a_0, b_0)$ of $\bm p_0$. }\label{fig_Beta_Gamma}
\end{figure}

%%%%%%%%%%
%%%%%%%%%%
%\newpage

\section{Details about the optimization difficulties for the log-normal model} \label{appendix-LogN-optimizationDifficulties}

For the log-normal model of distributions, we find that the numerical optimal solutions $\bm \theta_{\bm c}^*$ given by the regular gradient descent algorithm are sensitive to the choice of the initial iteration point $\bm \theta^{(0)}$. Next, we will show how the issue is found and what the root reason is. 

This issue is initially noticed from the $\Gamma_{c_i}$ curves for $\Gamma = skew$, $\bm q_{\bm \theta}=LogN(u,v^2)$, $\bm p_0=LogN(0,1^2)+LogN(0,2^2)$, as shown in Figure \ref{fig-initialPoints-debugging-skew-qLogN-p0sumLogN}. It is obvious that different choices of $\bm \theta^{(0)}$ lead to very different optimization results. We tried smaller error tolerances, larger maximum iteration steps, multi-starts, and even different optimization algorithms such as interior point iteration algorithm, but none of them helps. Actually, the algorithm always stops much earlier before reaching the maximum iterations. 

At this point, one might guess that there are multiple local minima and the algorithm is blocked by local minima, but it is not precise here. We will discover the truth step by step. 

We intentionally choose one of the initial points to be the total loss minimizer on a divided meshgrid around the global minimizer, since such a choice should have been a good guess about the global minimizer and the optimization result should have been trusting. However, when we check the optimization detail further, we find that such a ``good'' guess is still not convincing. We change the scale of the meshgrid, that is, the width of each cube, and then we get very different minimizers $\bm \theta_s^*$ and $\bm \theta_d^*$ where the sub-index $s$ represents a sparser grid and ``d'' represents a denser grid, and the corresponding values of $\Gamma$ are also different. This is a very bizzare but interesting phenomenon. Table \ref{table-initialPoint-debugging-optimizationDetail-skew-qLogN} shows the numerical detail of two examples of $\bm p_0$. The sensitivity w.r.t. $\bm \theta^{(0)}$ is stronger in the second example $\bm p_0=LogN(0,3^2)$. Then we want to figure out what happens exactly here. %We can see that in both examples, the magnitudes of the third moment $\hat{r}_3$ and the total loss minimum $\mathcal{L}_{\tilde{\bm c}}^*$ are very large. 

\begin{figure}[!htbp]
  \centering
  % Requires \usepackage{graphicx}
  \subfigure[$\bm \theta^{(0)} = (0,0)$]{\includegraphics[width=0.3\textwidth]{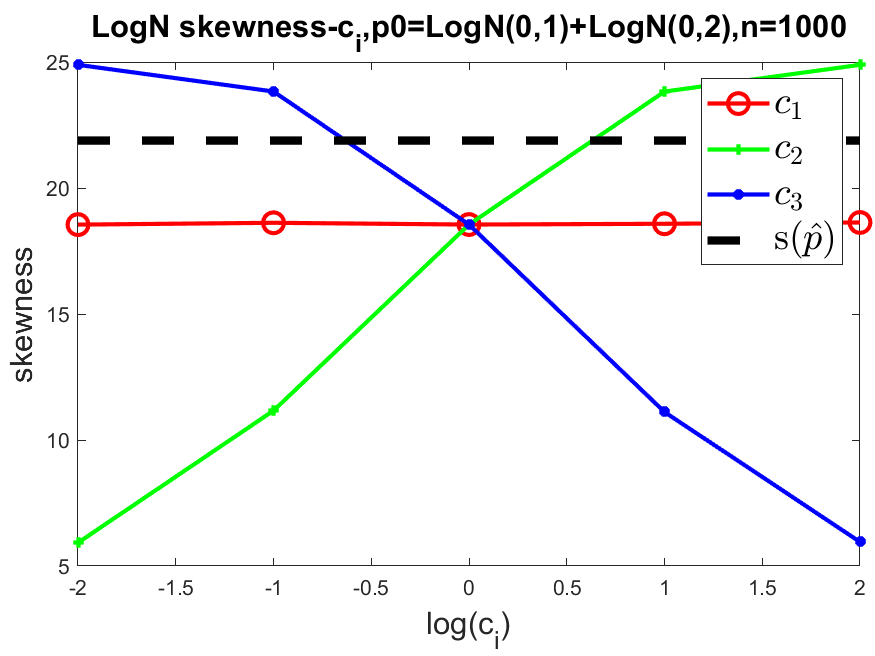}}
  \subfigure[$\bm \theta^{(0)} = (0,1.5)$]{\includegraphics[width=0.3\textwidth]{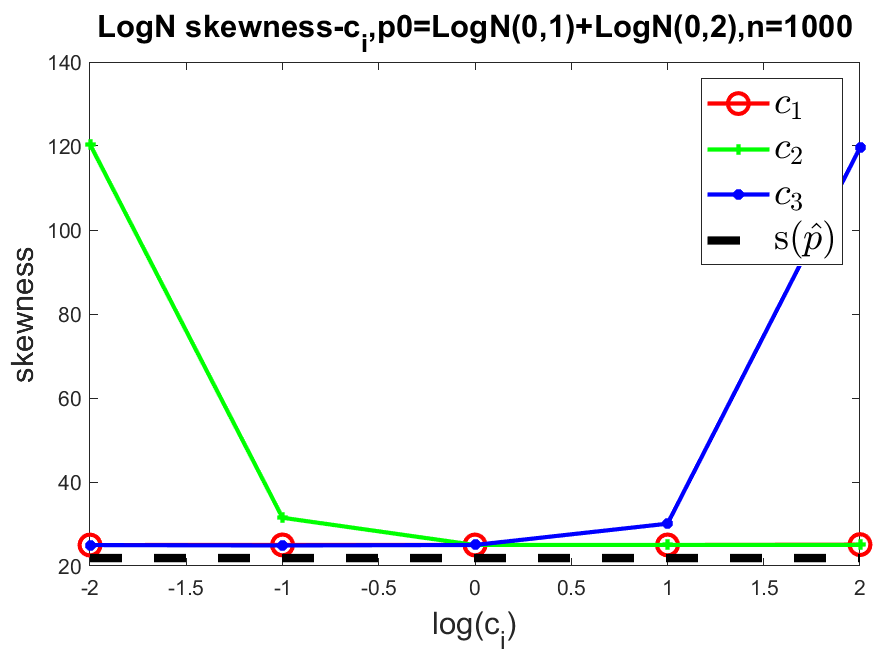}}
  \subfigure[$\bm \theta^{(0)} = \bm \theta_0^*$]{\includegraphics[width=0.3\textwidth]{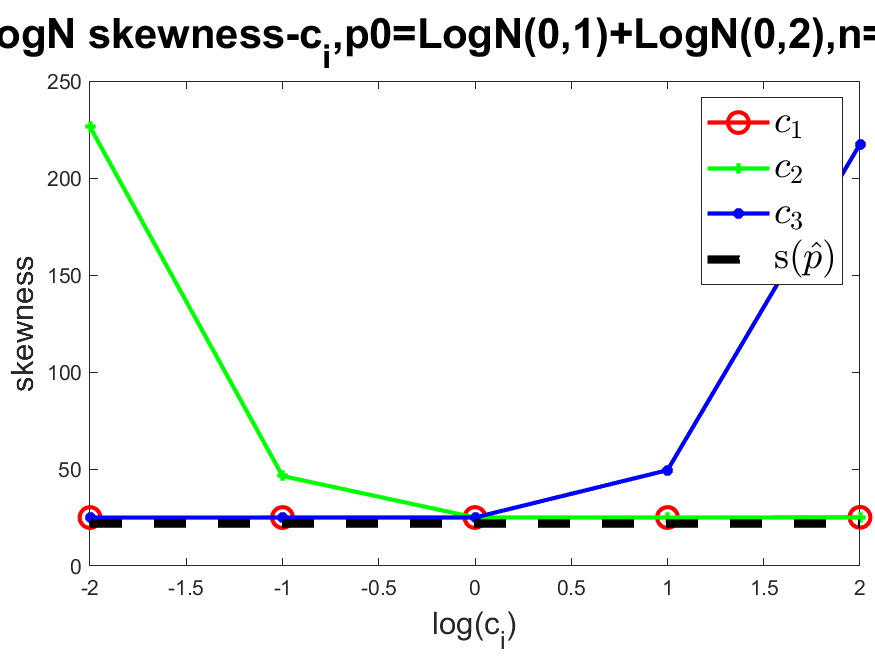}}
  \caption{ The $\Gamma_{c_i}$ curves with different initial points for the gradient descent algorithm. It shows that the numerical optimization algorithm is sensitive to the choice of the initial point for iterations. Here, $\Gamma = skew(\cdot)$, $\bm q_{\bm \theta} = LogN(u, v^2)$ with $\bm \theta =(u, v)$, $\bm p_0 = LogN(0,1^2)+LogN(0,2^2)$. The initial point $ \bm \theta_0^*$ for Figure (c) is the minimizer of the total loss on a coarse meshgrid division around the global minimizer with cubes of width 0.1, which is regarded as a good guess about the global minimizer. }\label{fig-initialPoints-debugging-skew-qLogN-p0sumLogN}
\end{figure}

\begin{table}[!htbp] 
    \caption{ Original experimental results with different initial iteration points. Here, $\Gamma = skew(\cdot)$, $\bm q_{\theta} = LogN(u, v^2)$ with $\bm \theta =(u, v)$. $\tilde{\bm c} = (1,1,1)$. $\bm \theta_s^*$ and $\bm \theta_d^*$ are coarse minimizers of the total loss on a meshgrid division around the global minimizer with cubes of different widths respectively. $\bm \theta_s^*$ is from the sparser meshgrid with width-0.1 cubes, and $\bm \theta_d^*$ is from the denser meshgrid with width-0.01 cubes. } %$ (u_s^*,v_s^*)$ and $ (u_d^*,v_d^*)$ have the same definition as Table \ref{table-initialPoint-debugging-optimizationDetail-pSumLogN}.
    %\vspace{20pt}
    \vspace{0.01pt}
    \centering
    \begin{tabular}{c | c | c | c}%{p{2cm}p{3cm}p{2.5cm}p{2.5cm}p{2.5cm}p{2.5cm}}
%        \hline
%         &\thead[l] {target distribution\\ $\bm p$} & \thead[l] {iteration initial\\ $\bm q_{0}$} & \thead[l]{optimal estimation \\ $\bm q^{*}$}  \\
		\hline
		\multicolumn{4}{c}{$\bm p_0 = LogN(0,1^2)+LogN(0,2^2)$} \\
		\multicolumn{4}{c}{$\hat{\bm r} = (8.38,991.87,1.89\times 10^6)
$} \\
        \hline
        $\bm \theta^{(0)})$ & $\bm \theta^*$ & $\Gamma=skew$ & $\mathcal{L}_{\tilde{\bm c}}^*$ \\
		% initial	u*	v*	skewness	loss
		\hline
		$(0, 0)$ & $(2.09,-1.35)$ & $18.55$ & $-3.58 \times 10^{12}$ \\
		$(0, 1.5)$ & $(1.76,1.43)$ & $25.01$ & $-3.58\times 10^{12}$ \\
		$\bm \theta_s^*$ & $(2.09,-1.35)$ & $18.57$ &  $-3.58\times 10^{12}$ \\
		$\bm \theta_d^*$ & $(1.76,-1.43)$ & $25.01$ &  $-3.58\times 10^{12}$ \\

        \hline                
    \end{tabular} %\label{table-initialPoint-debugging-optimizationDetail-pSumLogN}
   
%    \begin{tabular}{c | c | c | c}%{p{2cm}p{3cm}p{2.5cm}p{2.5cm}p{2.5cm}p{2.5cm}}
%%        \hline
%%         &\thead[l] {target distribution\\ $\bm p$} & \thead[l] {iteration initial\\ $\bm q_{0}$} & \thead[l]{optimal estimation \\ $\bm q^{*}$}  \\
%		\hline
%		\multicolumn{4}{c}{$\bm p_0 = LogN(0,3^2)$} \\
%		\multicolumn{4}{c}{$\hat{\bm r} = (121.15,5.02\times 10^6, 3.20\times 10^{11})$} \\
%        \hline
%        $\bm \theta^{(0)}$ & $\bm \theta^*$ & $\Gamma=skew$ & $\mathcal{L}_{\tilde{\bm c}}^*$ \\
%		% initial	u*	v*	skewness	loss loss1	loss2	loss3
%		\hline
%		$(0, 0)$ & $(6.36,0.00)$ & $0.00$ & $-1.25\times 10^{20}$ \\
%		$(0, 3)$ & $(6.52,0.85)$ & $4.21$ & $-5.20\times 10^{21}$ \\
%		$\bm \theta_s^*=(4.5, 1.7)$ & $(4.5,1.7)$ & $82.42$ & $-1.02\times 10^{23}$ \\
%		$\bm \theta_d^*=(2.89, 1.99)$ & $(2.89,1.99)$ & $390.70$ & $-1.02\times 10^{23}$ \\
%        \hline             
%    \end{tabular} 

    \begin{tabular}{c | c | c | c | c | c | c}%{p{2cm}p{3cm}p{2.5cm}p{2.5cm}p{2.5cm}p{2.5cm}}
%        \hline
%         &\thead[l] {target distribution\\ $\bm p$} & \thead[l] {iteration initial\\ $\bm q_{0}$} & \thead[l]{optimal estimation \\ $\bm q^{*}$}  \\
		\hline
		\multicolumn{7}{c}{$\bm p_0 = LogN(0,3^2)$} \\
		\multicolumn{7}{c}{$\hat{\bm r} = (121.15,5.02\times 10^6, 3.20\times 10^{11})$} \\
        \hline
        $\bm \theta^{(0)}$ & $\bm \theta^*$ & $\Gamma=skew$ & $\mathcal{L}_{\tilde{\bm c}}^*$ & $L_1 (\bm r(\bm \theta^*))$ & $L_2 (\bm r(\bm \theta^*))$ & $L_3 (\bm r(\bm \theta^*))$ \\
		% initial	u*	v*	skewness	loss loss1	loss2	loss3
		\hline
		$(0, 0)$ & $(6.36,0.00)$ & $0.00$ & 
\textcolor{red}{$-1.25\times 10^{20}$} & $1.96\times 10^{5}$ & $-3.27\times 10^{12}$ & 
\textcolor{red}{$-1.25\times 10^{20}$} \\
		$(0, 3)$ & $(6.52,0.85)$ & $4.21$ & 
\textcolor{red}{$-5.20\times 10^{21}$} & $7.17\times 10^{5}$ & $-1.59\times 10^{13}$ & 
\textcolor{red}{$-5.20\times 10^{21}$} \\
		$\bm \theta_s^*=(4.5, 1.7)$ & $(4.5,1.7)$ & $82.42$ & 
\textcolor{red}{$-1.02\times 10^{23}$} & $5.33\times 10^{4}$ & $-1.94\times 10^{13}$ & 
\textcolor{red}{$-1.02\times 10^{23}$} \\
		$\bm \theta_d^*=(2.89, 1.99)$ & $(2.89,1.99)$ & $390.70$ & \textcolor{red}{$-1.02\times 10^{23}$} & $-1.46\times 10^{4}$ & $-8.14\times 10^{12}$ & 
\textcolor{red}{$-1.02\times 10^{23}$} \\
        \hline
        
%        \multirow{3}*{3D}  & $(1,2,3)/\sum$ 
%        & $\bm p$  & $(0,0,1)$  \\
%        \cline{2-3}  
%        & $(1,10,15)/\sum=$ & $\bm p$ & $(0,0,1)$  \\
%        \cline{2-3}
%        & $(1,10,100)/\sum=$ & $\bm p$ & $(0,0,1)$  \\ 
%        \hline
                
    \end{tabular}
    \label{table-initialPoint-debugging-optimizationDetail-skew-qLogN}
\end{table}

\begin{figure}[!htbp]
  \centering
  % Requires \usepackage{graphicx}
  \subfigure[zooming-out view]{\includegraphics[width=0.3\textwidth]{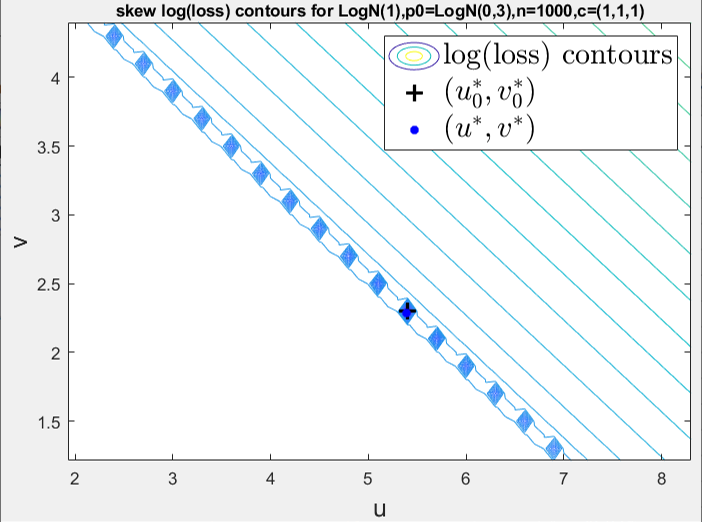}}
  \subfigure[zooming-in view]{\includegraphics[width=0.3\textwidth]{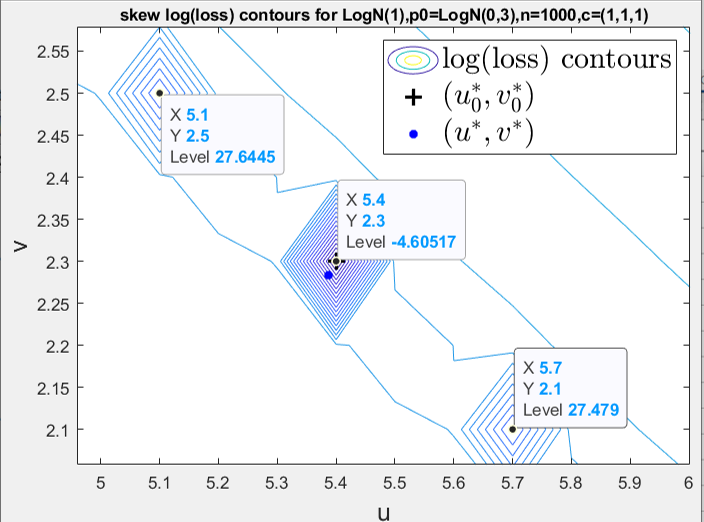}}
  \subfigure[The partial list of sorted loss values on a meshgrid]{\includegraphics[width=0.25\textwidth]{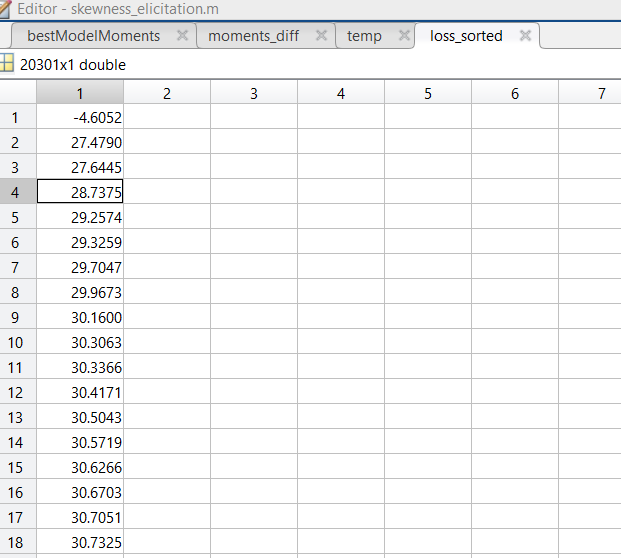}}
  
  \caption{ The contours of $\log(\mathcal{L}_{\tilde{\bm c}})$ with $\bm c = (1,1,1)$. Here, $\Gamma = skew(\cdot)$, $\bm q_{\theta} = LogN(u, v^2)$ with $\bm \theta =(u, v)$, $\bm p_0 = LogN(0,3^2)$. (Notice that the vertical axis refers to $v^2$ instead of $v$ itself.) The loss contours are plot based on the total loss values on the meshgrid of width 0.1 generated on the plane of $(u,v^2)$. The initial point $ \bm \theta^{(0)} = (u_0^*, v_0^*)$ is the loss minimizer on the meshgrid. The sub-figure (a) shows that the contours are nearly linear. The sub-figure (b) shows the contour detail around the global minimizer. The sub-figure (c) shows the sorted loss values on the meshgrid. Be careful that in (b) the concentric diamond loops are not the real contours but an illusion due to the visualization difficulty for the contour at the level near the global minimum. Actually no meshgrid points within the loops. With (b) and (c), we can see that the loss roughly decreases along the contour area at the level near the global minimum from both sides to the center. }
  \label{fig-initialPoints-debugging-lossContoursDetail-skew-qLogN-p0LogN}
\end{figure}

\begin{figure}[!htbp]
  \centering
  % Requires \usepackage{graphicx}
  \subfigure[$\mathcal{L}_{\tilde{\bm c}}$]{\includegraphics[width=0.3\textwidth]{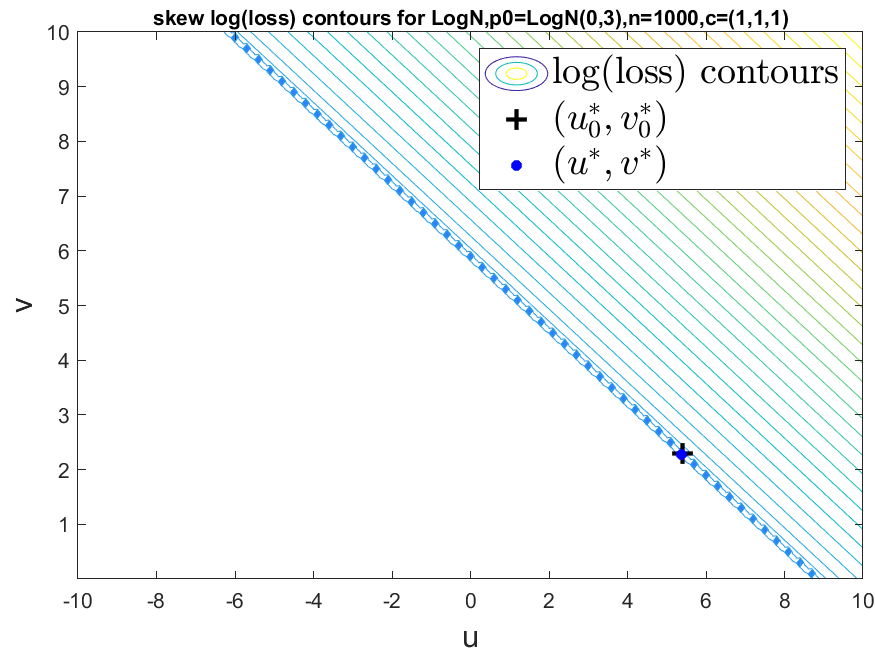}}\\
  \subfigure[$L_1$]{\includegraphics[width=0.3\textwidth]{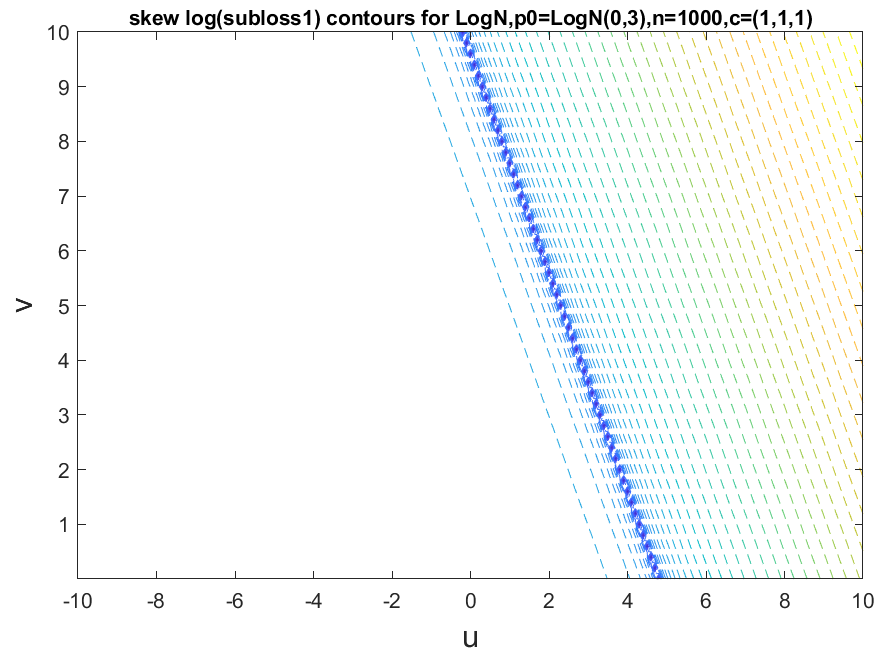}}
  \subfigure[$L_2$]{\includegraphics[width=0.3\textwidth]{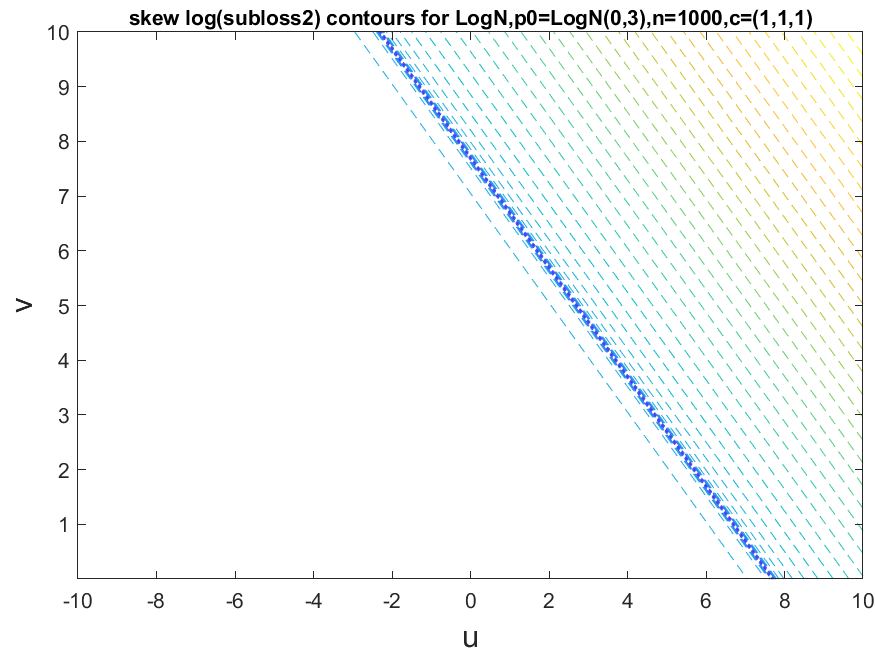}}
  \subfigure[$L_3$]{\includegraphics[width=0.3\textwidth]{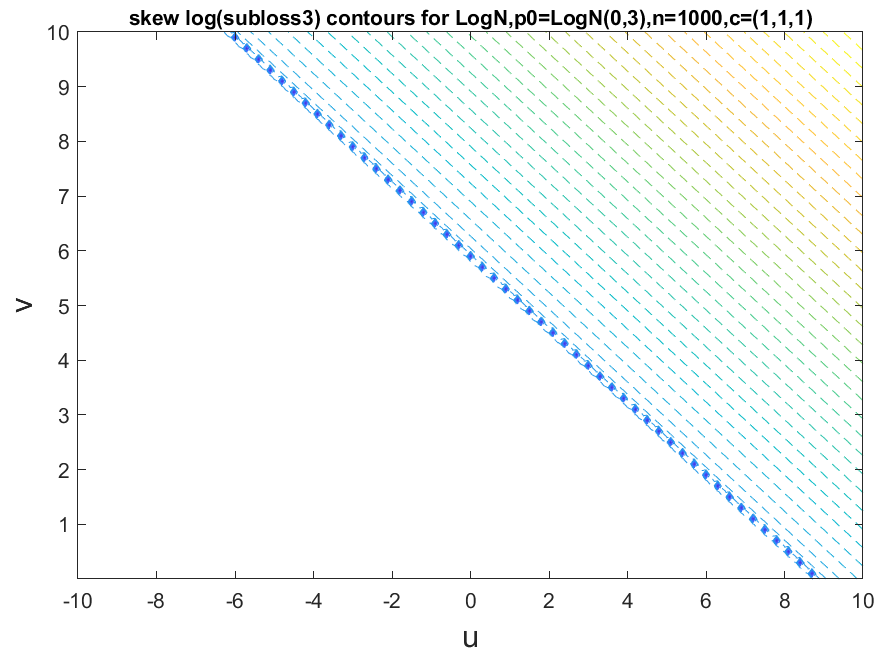}}
  
  \caption{ The contours of the total loss $\mathcal{L}_{\tilde{\bm c}}$ with $\bm c = (1,1,1)$, and also other sub-losses. Here, $\Gamma = skew(\cdot)$, $\bm q_{\theta} = LogN(u, v^2)$ with $\bm \theta =(u, v)$, $\bm p_0 = LogN(0,3^2)$. (Notice that the vertical axis refers to $v^2$ instead of $v$ itself.) The loss contours are plot based on the total loss values on the meshgrid of width 0.1 generated on the plane of $(u,v^2)$. It is obvious that $\mathcal{L}_{\tilde{\bm c}}$ in (a) is almost same to the sub-loss $L_3$ in (d). } \label{fig-lossContours-skew-qLogN-p0LogN-totalANDsub}
\end{figure}

We observe the loss contours of the previous example with $\bm p_0 = LogN(0,3^2)$, see Figure \ref{fig-initialPoints-debugging-lossContoursDetail-skew-qLogN-p0LogN}. Those sub-figures show that the loss contours are almost linear except that at the level near the global minimum. It looks like that the total loss has an infinite number of local minima, but we mention in the caption of the figure that possibly it is a visualization mistake for the weird contours. The explanation for the shape of the contours is as follows. For the log-normal distribution model $\bm q_{\theta} = LogN(u, v^2)$, the $i$-order moment formulation is $E[X^i] = e^{iu+\frac{1}{2} i^2 v^2}$, and thus for each sub-loss $L_i(e^{iu+\frac{1}{2} i^2 v^2}; \hat{\bm p})$ the contour at any level would be a straight line or a pair of parallell lines. According to Table \ref{table-initialPoint-debugging-optimizationDetail-skew-qLogN}, the sub-loss $L_3$ is much larger than other sub-losses and thus lead the total loss $\mathcal{L}_{\tilde{\bm c}}$  It means that $\mathcal{L}_{\tilde{\bm c}}$ is almost same to $L_3$, and its contours are similar to the linear contours of $L_3$, demonstrated in Figure \ref{fig-lossContours-skew-qLogN-p0LogN-totalANDsub}. But, the dominance of $L_3$ is weakened when approaching to the global minimum of $\mathcal{L}_{\tilde{\bm c}}$, and then the contour of $\mathcal{L}_{\tilde{\bm c}}$ at the level near the global minimum is not nearly-linear any more. That being said, the contour area of $\mathcal{L}_{\tilde{\bm c}}$ around the level of the global minimum is very slim and long, and the optimization algorithm would stop soon once the iteration points enter the slim area. This is why there are an infinite number of traps in the visualized contour image of the total loss $\mathcal{L}_{\tilde{\bm c}}$ and the optimal solutions given by different optimization algorithms are always sensitive to the choice of the initial iteration point.

\begin{figure}[!htbp]
  \centering
  % Requires \usepackage{graphicx}
  \subfigure[Original images]{\includegraphics[width=0.45\textwidth]{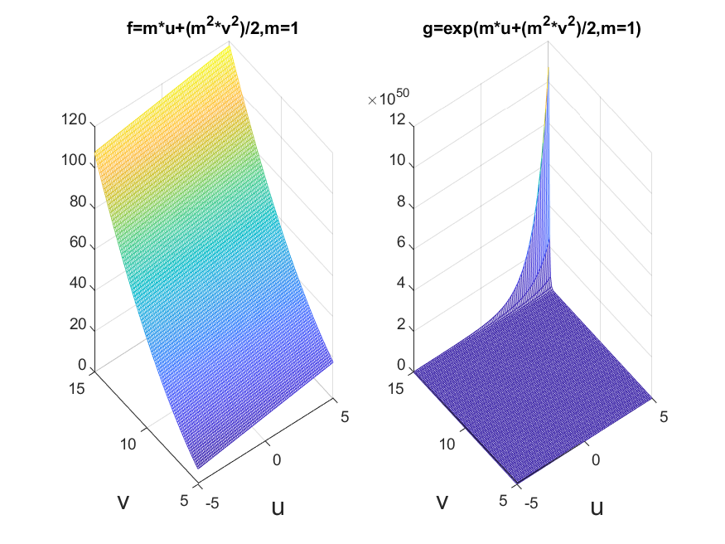}}
  \subfigure[Contours]{\includegraphics[width=0.45\textwidth]{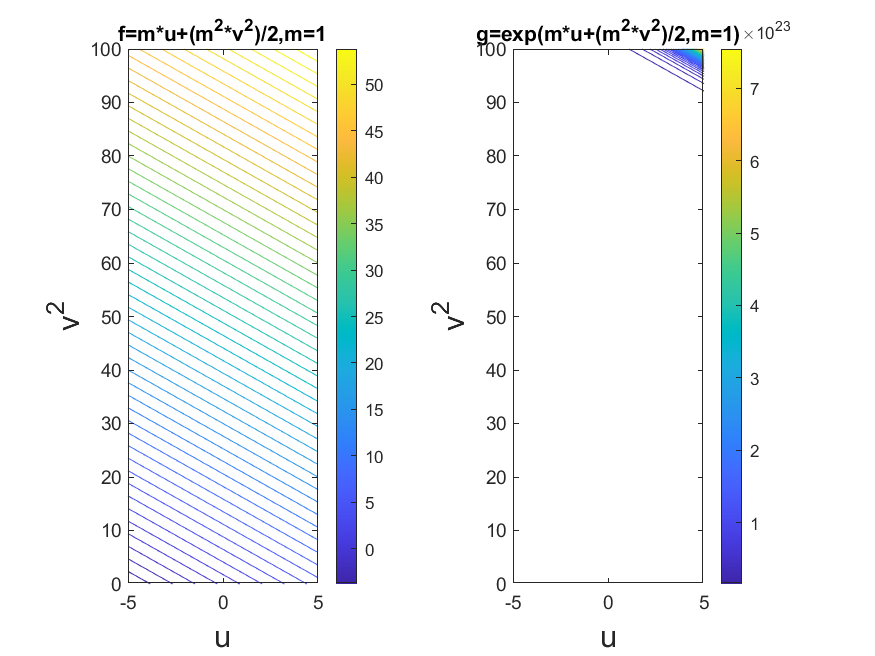}}\\
  
  \caption{ The original images and also contours of functions $f(u,v)=mu+\frac{1}{2}m^2 v^2$ and $g(u,v) = e^{f(u,v)}$. }\label{fig-exp-illustration}
\end{figure}

\begin{figure}[!htbp]
  \centering
  % Requires \usepackage{graphicx}
  \subfigure[$v_0 = 0.1$]{\includegraphics[width=0.3\textwidth]{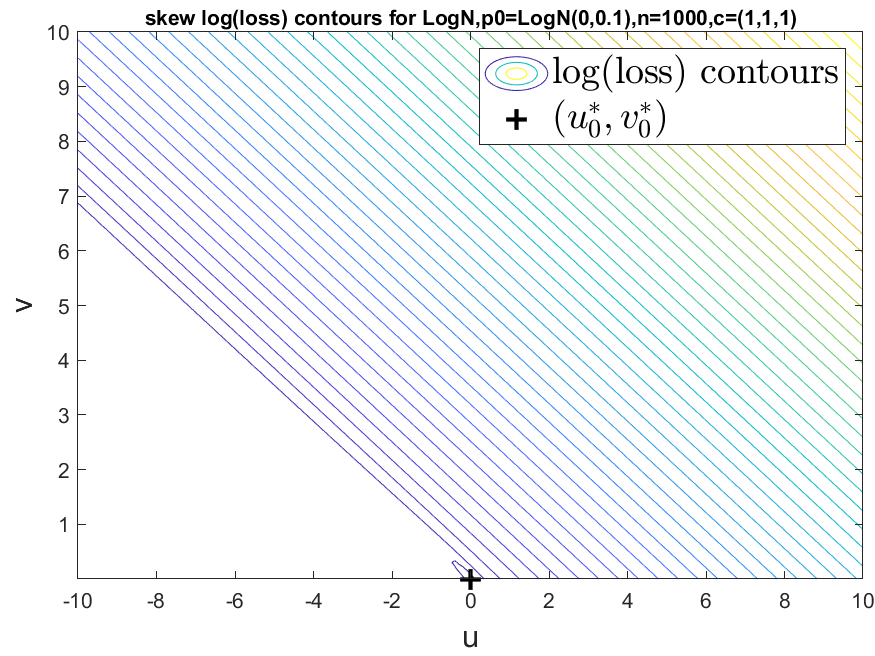}}
  \subfigure[$v_0 = 0.5$]{\includegraphics[width=0.3\textwidth]{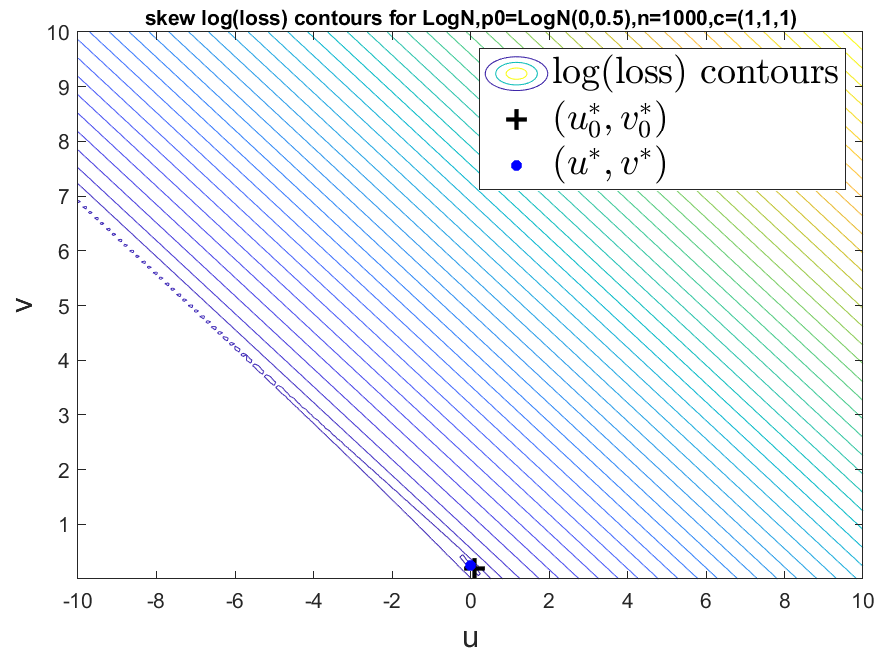}}  
  \subfigure[$v_0 = 3$]{\includegraphics[width=0.3\textwidth]{skew_logloss_contours_for_LogN,p0=LogN0,3,n=1000,c=1,1,1_random3.png}}
  
  \caption{ The contours of the total loss $\mathcal{L}_{\tilde{\bm c}}$ for different $\bm p_0$ with $\bm c = (1,1,1)$. Here, $\Gamma = skew(\cdot)$, $\bm q_{\theta} = LogN(u, v^2)$ with $\bm \theta =(u, v)$, $\bm p_0 = LogN(0,v_0^2)$.  (Notice that the vertical axis refers to $v^2$ instead of $v$ itself.) } \label{fig-lossContours-skew-qLogN-p0}
\end{figure}

In the above example, the dominance of $L_3$ is caused by the exponential operator $exp(\cdot)$. The moments of $\bm q_{\theta} = LogN(u, v^2)$ are exponential with  $E[X^i] = e^{iu+\frac{1}{2} i^2 v^2}$ for all $i$, see Figure \ref{fig-exp-illustration}, and the samples are generated from $\bm p_0 = LogN(u_0, v_0^2)$ that is the exponential of a normal distribution. All values are exponentially exaggerated, and different sub-losses are in different magnitudes, especially if $i$ and $v_0$ are large. Figure \ref{fig-lossContours-skew-qLogN-p0} shows that when $v_0$ is getting larger, the contour area around the level of the global minimum is getting slimmer and longer. The optimization difficulty gets bigger as $v_0$ increases. 

So far, we have figured out how the optimization difficulty, i.e., the sensitivity w.r.t. the choice of the initial iteration points for optimization algorithms, happens to the the example with $\bm q_{\bm \theta} = LogN(u,v^2)$. The reasons are from two aspects. One is that the sub-loss $L_3$ have linear contours and a whole straight line of minimizers. The other is that the sub-loss $L_3$ takes too large values and thus dominates the total loss $\mathcal{L}_{\tilde{\bm c}}$. The two aspects make the total loss $\mathcal{L}_{\tilde{\bm c}}$ have a very slim and long contour area near the global minimum and cause an optimization difficulty. To relieve the issue, basically we need to remove the dominance of poorly-behaved sub-losses.

\section{Renormalization of weights} \label{appendix-weightRenormalization}

We try the skewness property for $\Gamma$ whose sub-properties are the first three moments $\hat{r}_i = E_{X \sim \bm p} [X^i]$, with different settings of the distribution model and the true distribution. The experiment design detail and major results can be seen in Appendix \ref{appendix-section-experimentsForSkewness}. At the beginning, we set $ c_{-i} = 1 $ by default, but later we find that it is not enough. Appendix \ref{appendix-LogN-optimizationDifficulties} shows the detail about the optimization difficulties for the skewness property estimation with log-normal distribution model assumption. As an abstract of the discussed challenge, when the following two issues occur, 
\begin{itemize}
    \item[(1)] some sub-loss is hard to optimize individually;
    \item[(2)] the sub-loss takes too large values, naturally dominating other sub-losses;
\end{itemize}
then the total loss will also have similar optimization difficulties. Thus, the weight renormalization is needed before running $\Gamma_{c_i}$ curves in order to relieve the intrinsic dominance of some poorly-behaved sub-losses and also to guide us to a more effective range of weights. In this section we will show the performance of weight renormalization.

\subsection{Major results with weight renormalization} \label{appendix-weightRenormalization-majorResults}

\begin{figure*}[t]%[!htbp]
  \centering
  % Requires \usepackage{graphicx} 
  \subfigure[Before renormalizing weights]{\includegraphics[width=0.4\textwidth]{skew_logloss_contours_for_LogN,p0=LogN0,3,n=1000,c=1,1,1_random3.png}}
  %\subfigure[new1: $\bm k = \bm k^{(1)}$]{\includegraphics[width=0.3\textwidth]{skew log(loss) contours for LogN,p0=LogN(0,3),n=1000,c=(1,1,1)_random3_new1.png}}
  \subfigure[After renormalizing weights]{\includegraphics[width=0.4\textwidth]{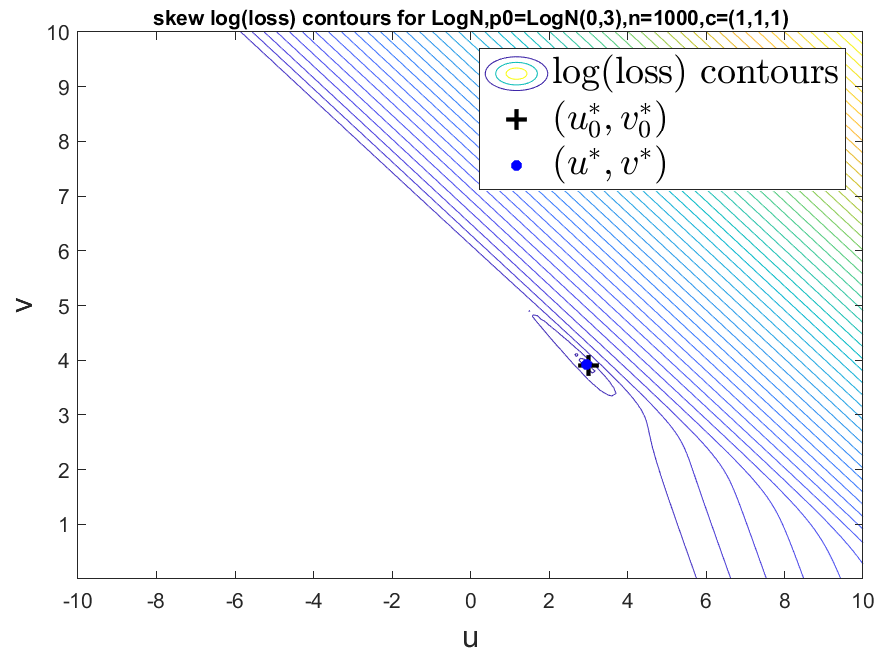}}
  
  \caption{ The contours of the total loss $\mathcal{L}_{\tilde{\bm c}}$ with $\bm c = (1,1,1)$ and different settings for $\bm k$. Here, $\Gamma = skew(\cdot)$, $\bm q_{\theta} = LogN(u, v^2)$ with $\bm \theta =(u, v^2)$, $\bm p_0 = LogN(0,3^2)$.  (Notice that the vertical axis refers to $v^2$ instead of $v$ itself.) } \label{fig-lossContours-skew-qLogN-p0LogN-differentK}
\end{figure*}

To relieve the issue of optimization difficulties, basically we need to remove the dominance of poorly-behaved sub-losses with renormalizing the weights in the way $\tilde{\bm c} = \bm c ./ \bm k$, where $./$ refers to element-wise division and $1/\bm k$ with $\bm k \in \mathbb{R}_{++}^M$ characterizes the base weights to balance sub-losses. Originally, we assume all elements of $\bm k $ to be $1$.

We try setting $k_i = \hat{r}^2(\hat{\bm p})$ for all $i$. Figure \ref{fig-lossContours-skew-qLogN-p0LogN-differentK}(b) shows the contours of the total loss with the new setting of $\bm k$, which obviously decreases the optimization difficulty. However, it only works near the global minimum, which is reasonable since the renormalization of $\tilde{\bm c}$ would matter less as the sub-loss values go larger. 

Furthermore, we also observe that off-setting the intrinsic dominance of sub-losses by setting base weights $1/\bm k$ properly guides us to find a more effective range of $\tilde{\bm c}$. Figure \ref{fig-property-c-curves-skew-qLogN-p0LogN-differentK} shows that for that example the new setting of $\bm k$ makes the $\Gamma_{c_i}$ curves much more sensitive to the values of $\bm c$ and even helps $\Gamma_{c_i}$ reach a value closer to the true value of the target property. We give more results about how the weight renormalization affects the $\Gamma_{c_i}$ curves in Appendix \ref{appendix-section-moreExperiments-k}. After resetting $\bm k$, we observe that among the simulated results for the skewness property, the monotonicity pattern of $\gamma(\bm \theta_{\bm c}^*)$ with increasing $c_i$ for all $i$ is still common for different settings of $\bm p_0$ and $\mathcal{D}_{\Theta}$.

\begin{figure*}[t]%[!htbp]
  \centering
  % Requires \usepackage{graphicx} 
  \subfigure[Before renormalizing weights]{\includegraphics[width=0.4\textwidth]{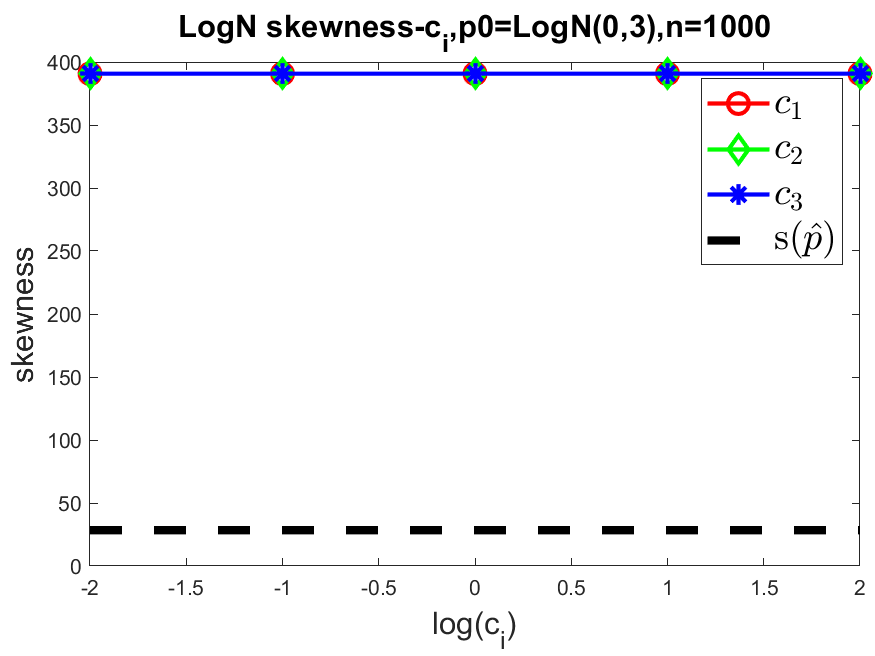}}
  %\subfigure[new1]{\includegraphics[width=0.3\textwidth]{LogN_skewness_c_1D,p0=LogN(0,3),n=1000_random3_new1.png}}
  \subfigure[After renormalizing weights]{\includegraphics[width=0.4\textwidth]{LogN_skewness_c_1D,p0=LogN0,3,n=1000_random3_new2.png}\label{fig-property-c-curves-skew-qLogN-p0LogN-new2}}
  
  \caption{ The $\Gamma_{c_i}$ curves with different different settings of $\bm k$. Here, $\Gamma = skew(\cdot)$, $\bm q_{\bm \theta} = LogN(u, v^2)$ with $\bm \theta =(u, v)$, $\bm p_0 = LogN(0,3^2)$. The initial point $ \bm \theta_0^*$ for the optimization algorithm is the minimizer of the total loss on a meshgrid generated around the global minimizer with cubes of width 0.1. }\label{fig-property-c-curves-skew-qLogN-p0LogN-differentK}
\end{figure*}

Besides, we also tried another different approach to reset the base weights $\bm k$, but have not noticed any obvious advantages through experimental results.

\subsection{More experimental results about base weights} \label{appendix-section-moreExperiments-k}

In this section we show more results about the comparison between without and with weight renormalization in Figure \ref{fig1} $\sim$ \ref{fig3}. "new2" refers to the way of weight renormalization used throughout this paper as described in \ref{appendix-weightRenormalization-majorResults}

\begin{figure}[!htbp]
  \centering
  % Requires \usepackage{graphicx}
  \subfigure[]{\includegraphics[width=0.3\textwidth]{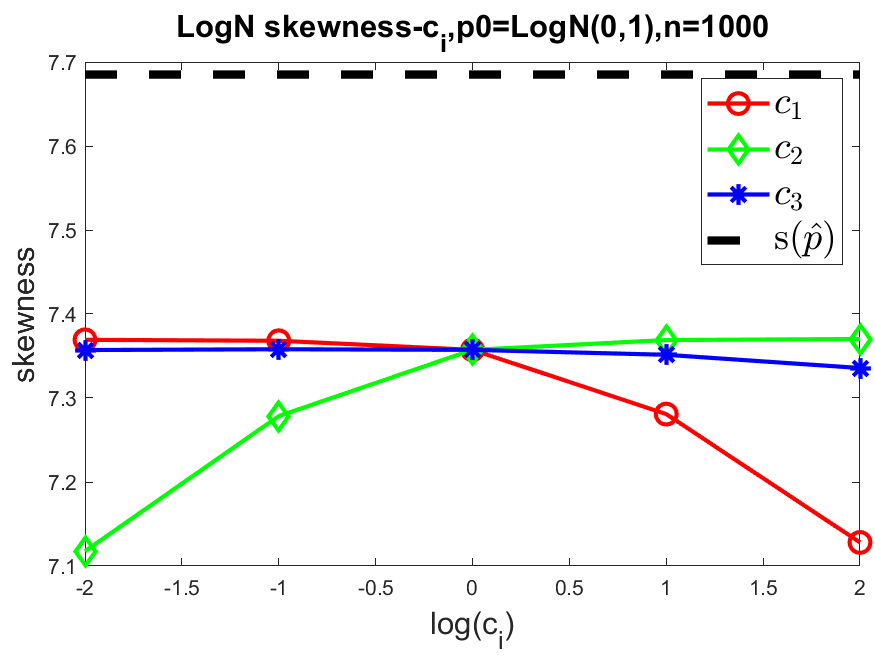}}
  \subfigure[new2]{\includegraphics[width=0.3\textwidth]{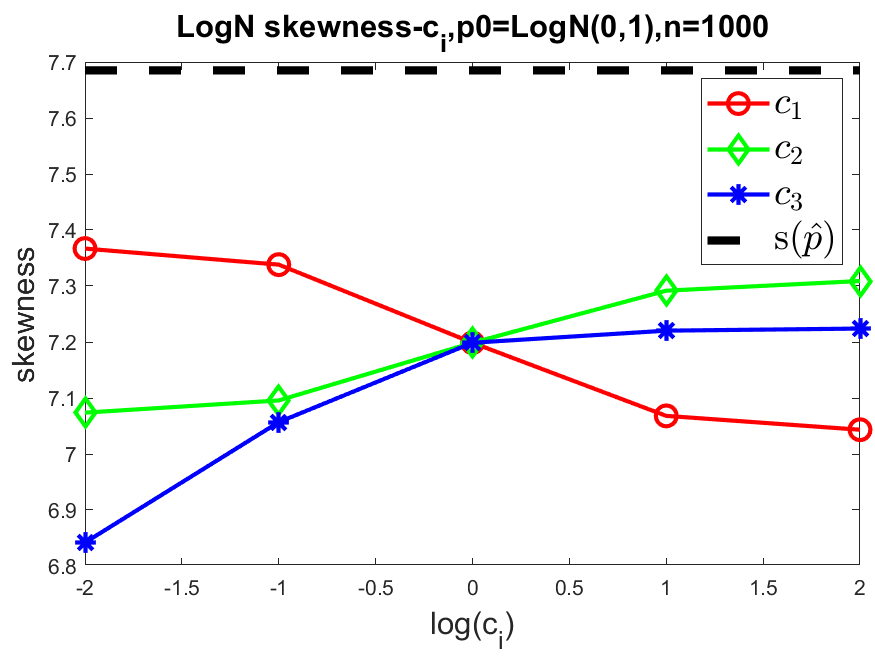}}
  
  \subfigure[]{\includegraphics[width=0.3\textwidth]{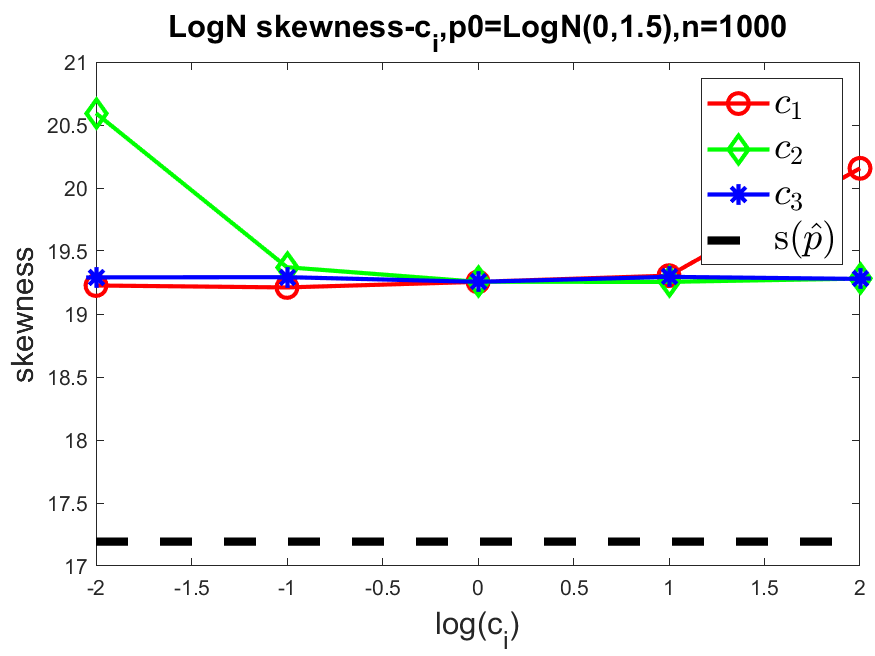}}
  \subfigure[new2]{\includegraphics[width=0.3\textwidth]{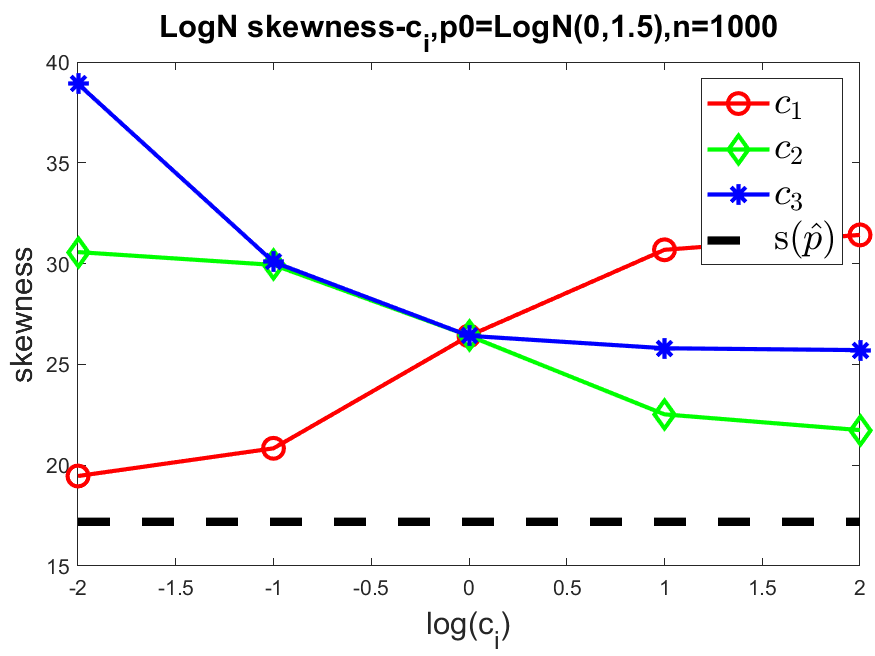}}
  
  \subfigure[]{\includegraphics[width=0.3\textwidth]{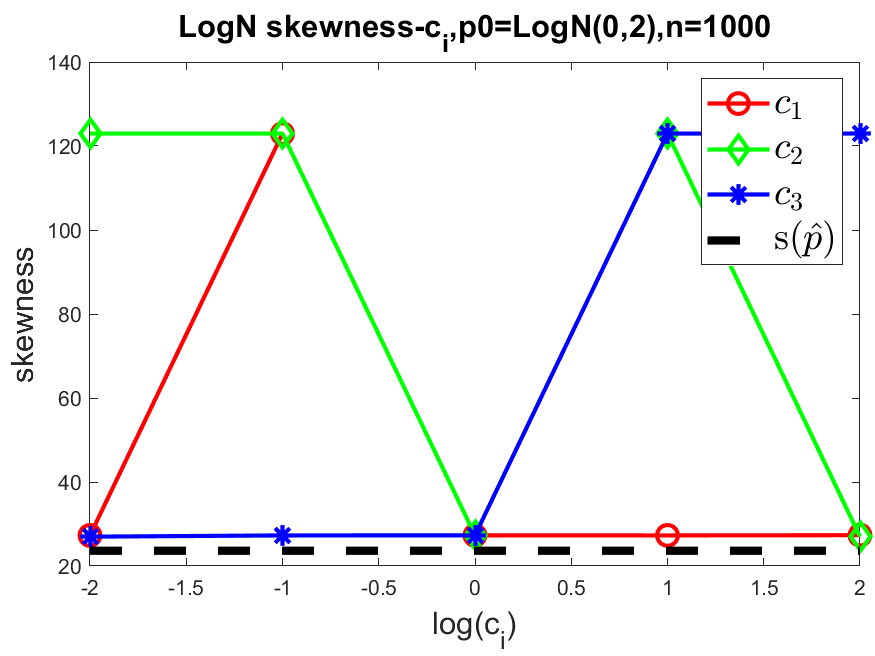}}
  \subfigure[new2]{\includegraphics[width=0.3\textwidth]{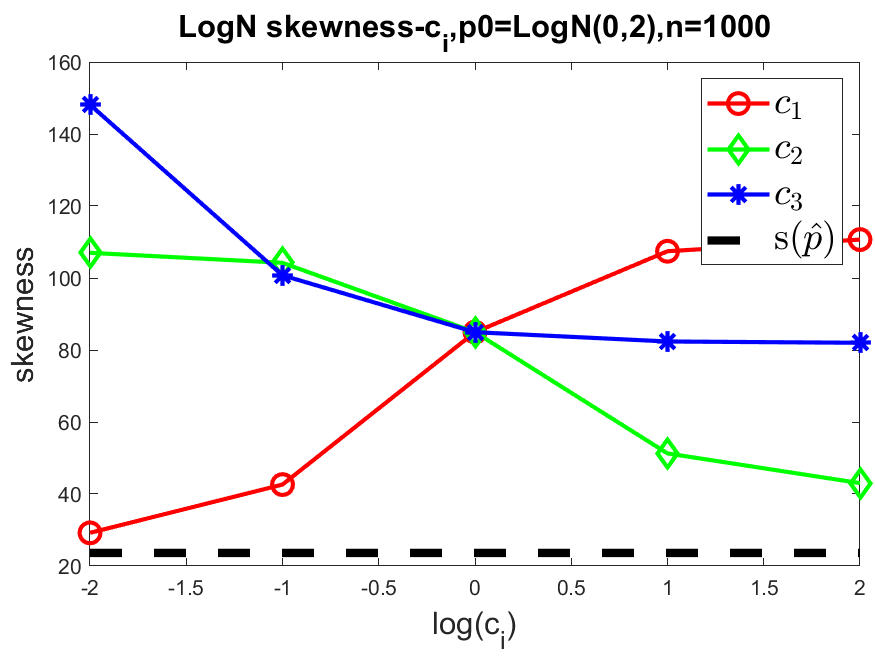}}
  
  \subfigure[]{\includegraphics[width=0.3\textwidth]{LogN_skewness_c_1D,p0=LogN0,3,n=1000_random3.png}}
  \subfigure[new2]{\includegraphics[width=0.3\textwidth]{LogN_skewness_c_1D,p0=LogN0,3,n=1000_random3_new2.png}}
  
  \caption{ The $\Gamma_{c_i}$ curves with different settings for $\bm k$ for the cases of $\bm q_{\theta} = LogN(u, v^2)$ and $\bm p_0 = LogN(0, v_0^2)$. Each row corresponds to a different value of $v_0$. Here, $\Gamma = skew(\cdot)$. }\label{fig1}
\end{figure}

\begin{figure}[!htbp]
  \centering
  % Requires \usepackage{graphicx}
  \subfigure[]{\includegraphics[width=0.3\textwidth]{LogN_skewness_c_1D,p0=LogN0,1,n=1000_random3.png}}
  \subfigure[new2]{\includegraphics[width=0.3\textwidth]{LogN_skewness_c_1D,p0=LogN0,1,n=1000_random3_new2.png}}
  
  \subfigure[]{\includegraphics[width=0.3\textwidth]{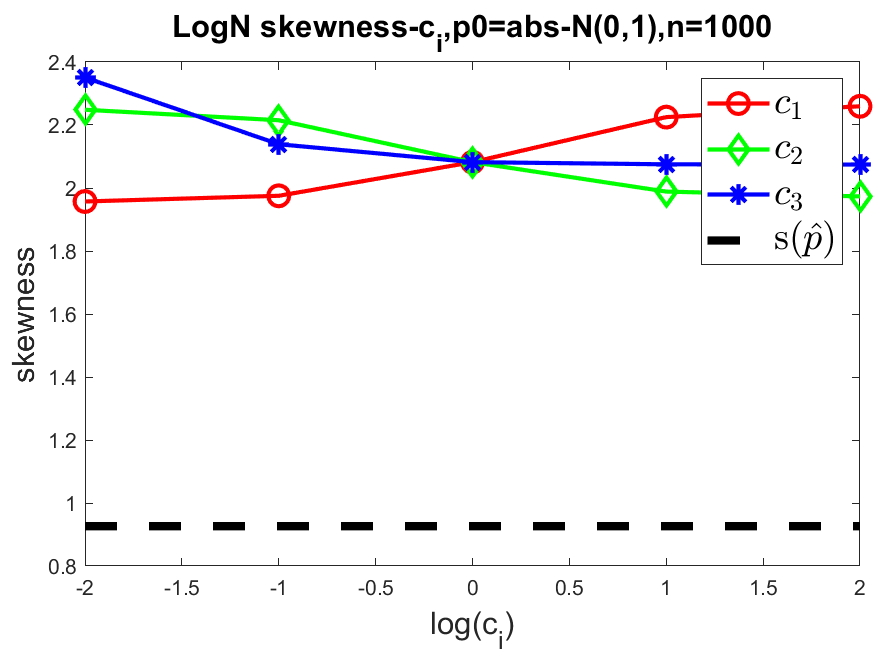}}
  \subfigure[new2]{\includegraphics[width=0.3\textwidth]{LogN_skewness_c_1D,p0=abs-N0,1,n=1000_random3_new2.png}}
  
  \subfigure[]{\includegraphics[width=0.3\textwidth]{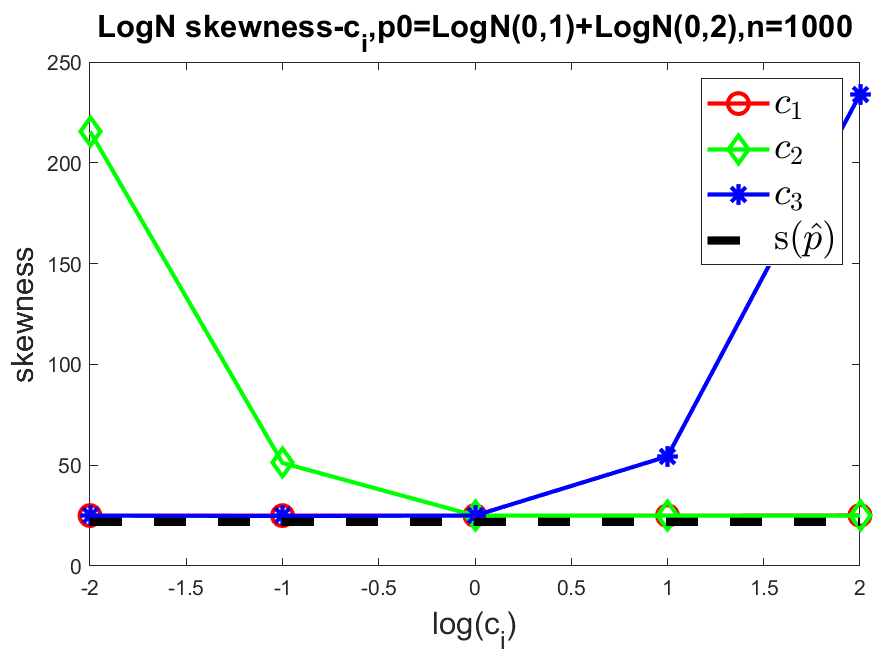}}
  \subfigure[new2]{\includegraphics[width=0.3\textwidth]{LogN_skewness_c_1D,p0=LogN0,1+LogN0,2,n=1000_random3_new2.png}}
  
  \caption{ The $\Gamma_{c_i}$ curves with different settings for $\bm k$ for the cases of $\bm q_{\theta} = LogN(u, v^2)$ and $\bm p_0$ from different families. Each row corresponds to a different family of $\bm p_0$. Here, $\Gamma = skew(\cdot)$. }\label{fig2}
\end{figure}

\begin{figure}[!htbp]
  \centering
  % Requires \usepackage{graphicx}
  \subfigure[]{\includegraphics[width=0.3\textwidth]{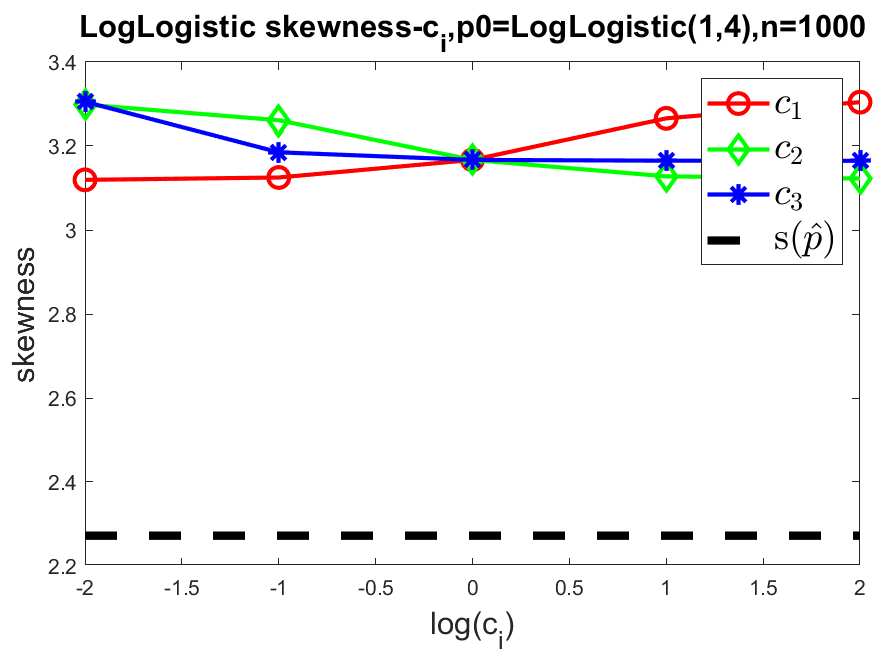}}
  \subfigure[new2]{\includegraphics[width=0.3\textwidth]{LogLogistic_skewness_c_1D,p0=LogLogistic1,4,n=1000_random3_new2.png}}
  
  \subfigure[]{\includegraphics[width=0.3\textwidth]{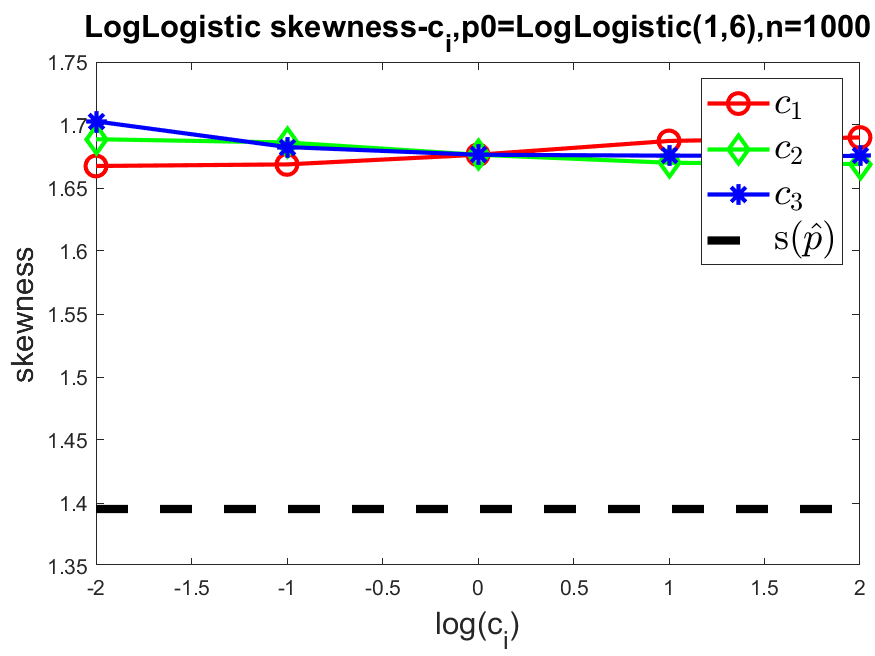}}
  \subfigure[new2]{\includegraphics[width=0.3\textwidth]{LogLogistic_skewness_c_1D,p0=LogLogistic1,6,n=1000_random3_new2.png}}
  
  \subfigure[]{\includegraphics[width=0.3\textwidth]{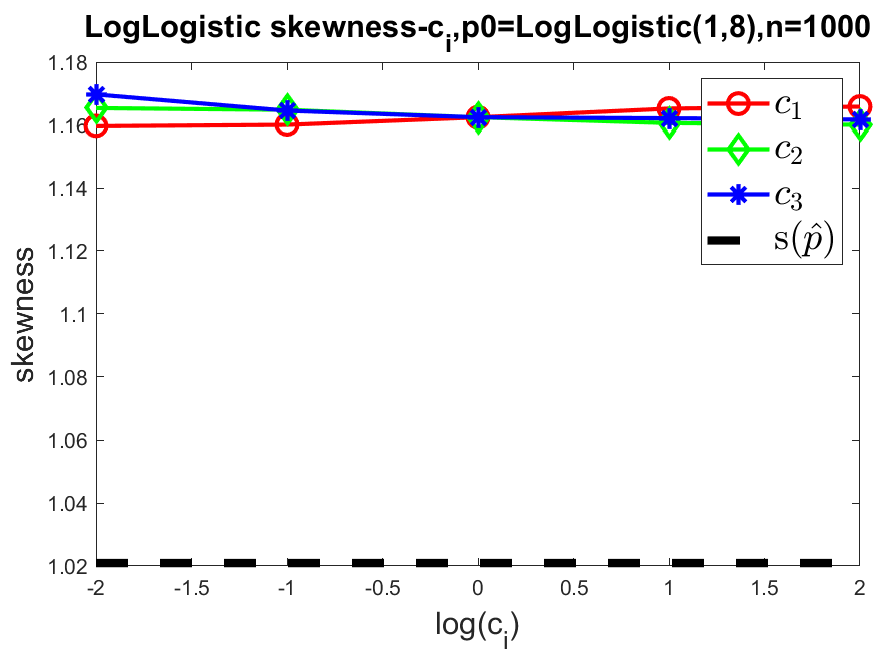}}
  \subfigure[new2]{\includegraphics[width=0.3\textwidth]{LogLogistic_skewness_c_1D,p0=LogLogistic1,8,n=1000_random3_new2.png}}
  
  \caption{ The $\Gamma_{c_i}$ curves with different settings for $\bm k$ for the cases of $\bm q_{\theta} = LogLogistic(a, b)$. Each row corresponds to a different parameter setting of $b_0$ for $\bm p_0 = LogLogistic(a_0, b_0)$. Here, $\Gamma = skew(\cdot)$. }\label{fig3}
\end{figure}

\end{document}